\theoremstyle{plain}
\newtheorem{theorem}{Theorem}
\newtheorem{proposition}[theorem]{Proposition}
\newtheorem{lemma}{Lemma}
\newtheorem{definition}{Definition}
\theoremstyle{remark}
\renewcommand\fbox{\fcolorbox{light-gray}{bg-gray}}
\newcommand{\promptblock}[1]{\vspace{1pt}\noindent\fbox{\parbox{0.98\linewidth}{{\textcolor{black}{{#1}}}}}\vspace{1pt}}
\newcommand{\blank}[1]{\textcolor{brown}{#1}}
\definecolor{dark-gray}{HTML}{A9A9A9} %
\definecolor{light-gray}{HTML}{b7b7b7} %
\definecolor{light-green}{HTML}{dcf8c6}
\definecolor{light-blue}{HTML}{a5cdfb}
\definecolor{bg-gray}{HTML}{F8F8F8} 
\definecolor{dark-green}{HTML}{6aa84f} 
\definecolor{highlight}{HTML}{cfe2f3} 
\definecolor{blue}{HTML}{2B60DE} 
\definecolor{brown}{HTML}{9F8C76}
\definecolor{pink}{HTML}{D88782}
\definecolor{dark-yellow}{HTML}{F6AE2D}
\definecolor{light-yellow}{HTML}{ebdc7d}
\title{Towards Efficient LLM Grounding for Embodied \\Multi-Agent Collaboration}
\author{
 \textbf{Yang Zhang\textsuperscript{1}\thanks{Equal Contributions.}\thanks{Work done during the internship at TeleAI.}},
 \textbf{Shixin Yang\textsuperscript{2}$^*$},
 \textbf{Chenjia Bai\textsuperscript{3,5}\thanks{Correspondence to: <baicj@chinatelecom.cn>.}},
\\
 \textbf{Fei Wu\textsuperscript{4}},
 \textbf{Xiu Li\textsuperscript{1}},
 \textbf{Zhen Wang\textsuperscript{2}},
 \textbf{Xuelong Li\textsuperscript{3}}
\\
\\
 \textsuperscript{1}Tsinghua University,
 \textsuperscript{2}Northwestern Polytechnical University,
\\
 \textsuperscript{3}Institute of Artificial Intelligence (TeleAI), China Telecom,
 \textsuperscript{4}Zhejiang University,\\
 \textsuperscript{5}Shenzhen Research Institute of Northwestern Polytechnical University
\\
}
\begin{document}
\maketitle
\begin{abstract}
Grounding the reasoning ability of large language models (LLMs) for embodied tasks is challenging due to the complexity of the physical world. Especially, LLM planning for multi-agent collaboration requires communication of agents or credit assignment as the feedback to re-adjust the proposed plans and achieve effective coordination. However, existing methods that overly rely on physical verification or self-reflection suffer from excessive and inefficient querying of LLMs. In this paper, we propose a novel framework for multi-agent collaboration that introduces Reinforced Advantage feedback (ReAd) for efficient self-refinement of plans. Specifically, we perform critic regression to learn a sequential advantage function from LLM-planned data, and then treat the LLM planner as an optimizer to generate actions that maximize the advantage function. It endows the LLM with the foresight to discern whether the action contributes to accomplishing the final task. We provide theoretical analysis by extending advantage-weighted regression in reinforcement learning to multi-agent systems. Experiments on Overcooked-AI and a difficult variant of RoCoBench show that ReAd surpasses baselines in success rate, and also significantly decreases the interaction steps of agents and query rounds of LLMs, demonstrating its high efficiency for grounding LLMs. More results are given at \url{https://embodied-read.github.io/}.

\end{abstract}

\section{Introduction}
Benefiting from large-scale text corpora from the web \citep{devlin2019bert, radford2019language, brown2020language, raffel2023exploring}, LLMs absorb vast world knowledge for decision-making.
Recent research \citep{Firoozi2023FoundationMI, yao2023react, song2023llmplanner} has shown that LLMs can perform planning and solve embodied tasks using zero-shot or few-shot example prompting, based on a closed-loop framework to refine LLM outputs through feedback.
Existing methods that design a closed-loop framework can be roughly divided into two lines.
Specifically, one line of research adopts \emph{self-reflection} by performing self-evaluation by LLMs to improve the plan generation of LLM planner \citep{shinn2023reflexion, yao2023react,hao2023reasoning,liu2023RAFA}; and the other works perform \emph{physical verification} by using feedback from the external environment to dynamically replan depending on unexpected feedback \citep{huang2022Monologue,song2023llmplanner}. Despite these efforts, the feedback is often sparse or heuristic, and a more principled feedback mechanism for LLM-based embodied task planning is still lacking.

\begin{figure*}[t!]
\begin{center}
\centerline{
\includegraphics[width=2.1\columnwidth]{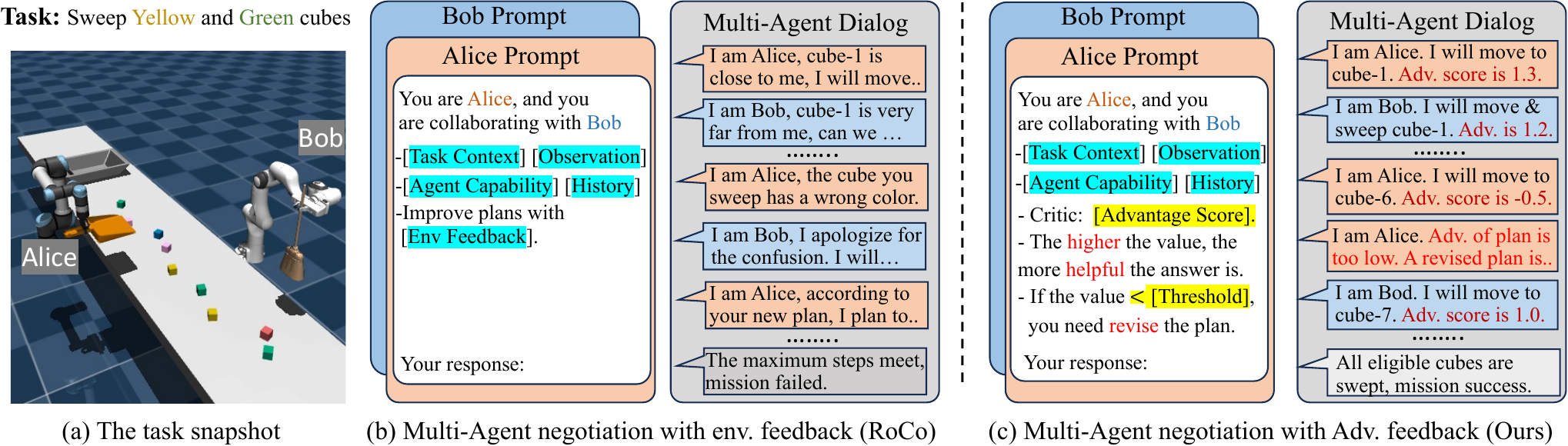}}
\vspace{-0.5em}
\caption{An illustration of the negotiation process of RoCo and our method. RoCo interacts with the environment for each plan and takes the environment's feedback as prompts. In contrast, our method takes the advantage function (Adv.) evaluated by a critic as feedback, and revises the plan if the advantage value is lower than the threshold, which significantly reduces the interaction rounds to the environment.
}
\label{fig:intro-fig}
\vspace{-2em}
\end{center}
\end{figure*}

The challenge becomes even more pronounced in embodied multi-agent settings, where LLM-based agents must communicate and negotiate to cooperate effectively.
Both self-reflection and physical verification struggle to evaluate the impact of individual actions on team outcomes, leading to inefficiencies such as excessive LLM queries or frequent environmental interactions.
For instance, RoCo \citep{mandi2023roco} integrates physical verification to refine LLM-generated actions in multi-agent scenarios but suffers from poor efficiency due to excessive interactions. As shown in Figure~\ref{fig:intro-fig}, RoCo requires excessive interaction to obtain physical feedback and queries to LLMs to get feasible joint-action plans, resulting in the failure.
In contrast, various methods in Multi-Agent Reinforcement Learning (MARL) \citep{MAsurvey} have developed value or advantage decomposition theories \citep{QMIX,kuba2022happo, Mirror}, which evaluate individual contributions on a team outcome and improve the coordination.
Inspired by these principles, we ask "\emph{How can we enhance the reasoning efficiency of LLMs for embodied multi-agent collaboration with theoretical supports of MARL?}".
Our objective is to develop an efficient feedback and refinement algorithm by utilizing multi-agent advantage functions, for multi-agent planning assisted by LLMs.

To this end, we propose {\bf Re}inforced {\bf Ad}vantage (\textbf{ReAd}), a closed-loop feedback mechanism for LLMs in embodied multi-agent collaboration. ReAd offers two plan refinement schemes: (\romannumeral1) Joint Plan Refinement (named \textbf{ReAd-J}) evaluates the advantage of joint actions, which requires LLMs to generate the joint plans of all agents at once; (\romannumeral2) Sequential Individual Plan Refinement (named \textbf{ReAd-S}) evaluates the local advantages of each agent’s action based on multi-agent advantage decomposition \citep{kuba2022happo}, which allows LLMs to generate actions for each agent sequentially.
Both advantages are estimated by a critic network that regresses LLM-planned data. Based on the critic, an LLM planner is used as an optimizer by prompting to generate actions that maximize the advantage value. Otherwise, the LLM planner is required to re-plan if the advantage value is small. We provide a theoretical motivation for such a process by extending advantage-weighted regression \citep{peng2019advantageweighted} to multi-agent settings.
In experiments, we extend RoCoBench \citep{mandi2023roco} to a difficult variant, which we term \emph{DV-RoCoBench}.
Results on \emph{DV-RoCoBench} and our adapted \emph{Overcooked-AI} show that ReAd significantly decreases the interaction and query rounds, and also surpasses baselines in success rate, highlighting its effectiveness for grounding LLMs in multi-agent collaboration tasks.

\section{Preliminaries}\label{sec:formulation}
We consider a Markov game, which is defined by a tuple $\langle \mathcal{N},\mathcal{S}, \boldsymbol{\mathcal{A}}, P, r, \gamma \rangle$, in which $\mathcal{N}$ denotes the set of agents, $\mathcal{S}$ denotes state space, $\boldsymbol{\mathcal{A}}=\prod_{i=1}^{n}\mathcal{A}^i$ denotes the product of finite action spaces of all agents (i.e., joint action space), $P: \mathcal{S}\times\boldsymbol{\mathcal{A}}\times\mathcal{S} \rightarrow [0,1]$ denotes the transition probability function, $r: \mathcal{S}\times\boldsymbol{\mathcal{A}}\rightarrow\mathbb{R}$ denotes the reward function, and $\gamma \in [0,1)$ denotes the discount factor. In the Markov game, every agent at time step $t \in \mathbb{N}$ observes the state of environment $s_t \in \mathcal{S}$ and takes an action $a_{t}^{i}\in \mathcal{A}^{i}$ from its corresponding policy $\pi^{i}(\cdot|s_t)$, which together with other agents' actions forms a joint action $\boldsymbol{a}_{t} = (a_{t}^{1}, a_{t}^{2}, ..., a_{t}^{n}) \in \boldsymbol{\mathcal{A}}$ drawn from the joint policy $\boldsymbol{\pi}(\cdot|s_t)=\prod_{i=1}^{n}\pi^{i}(\cdot|s_t)$. Then agents receive a shared reward $r_t=r(s_t,\boldsymbol{a}_t)$ and observe a new state $s_{t+1}$ with probability $P(s_{t+1}|s_t,\boldsymbol{a}_t)$. With the joint policy $\boldsymbol{\pi}$ and the transition probability function $P$, the state value function is defined as $V_{\boldsymbol{\pi}}(s) \triangleq \mathbb{E}_{s_{1:\infty}\sim P, \boldsymbol{a}_{0:\infty}\sim \boldsymbol{\pi}} [\sum_{i=0}^{\infty}\gamma^{i}r_i | s_0=s ]$. And the state-action value function is defined as $Q_{\boldsymbol{\pi}}(s, \boldsymbol{a})\triangleq\mathbb{E}_{s_{1:\infty}\sim P, \boldsymbol{a}_{1:\infty}\sim \boldsymbol{\pi}} [\sum_{i=0}^{\infty}\gamma^{i}r_i | s_0=s, \boldsymbol{a}_0=\boldsymbol{a}]$. We aim at finding a joint policy to maximize the expected return $J(\boldsymbol{\pi}) \triangleq \mathbb{E}_{s_{0:\infty} \sim P, \boldsymbol{a}_{0:\infty} \sim \boldsymbol{\pi}}\left[\sum_{t=0}^{\infty}\gamma^{t}r_t\right]$.
In the following, we consider the LLM planner as a special RL policy, which can be evaluated by a value function.

\vspace{-0.2em}
\section{Methodology}
\vspace{-0.2em}
We first give definitions and learning algorithms for two kinds of advantage functions in \S\ref{section:3-1}.
Then, we provide theoretical motivation for efficient LLM grounding by extending advantage-weighted regression to multi-agent settings in \S\ref{section:3-2}.
Finally, we describe how to derive Reinforced Advantage (\emph{ReAd}) feedback from the motivation and use an LLM planner as an optimizer and refine the plan in \S\ref{section:3-3}.

\vspace{-0.2em}
\subsection{Learning of Advantage Functions} \label{section:3-1}
We first introduce the estimation of \emph{joint} advantage function. Then the \emph{local} advantage is obtained via advantage decomposition by following theories from MARL.

\textbf{Joint Advantage Function.} Based on joint value functions $Q_{\boldsymbol{\pi}}(s, \boldsymbol{a})$ and $V_{\boldsymbol{\pi}}(s)$, we define the \emph{joint} advantage function as
\begin{equation*}
A_{\boldsymbol{\pi}}(s, \boldsymbol{a}) \triangleq Q_{\boldsymbol{\pi}}(s, \boldsymbol{a}) - V_{\boldsymbol{\pi}}(s),
\end{equation*}
which evaluates the advantage value of joint actions $\boldsymbol{a}_{t} = (a_{t}^{1}, a_{t}^{2}, ..., a_{t}^{n})$ from all agents. $A_{\boldsymbol{\pi}}(s, \boldsymbol{a})$ will be used for \emph{ReAd-J} to evaluate the joint planning of all agents as feedback.
Here, we assume the option of taking no actions is available to each agent, which is reasonable and common in embodied tasks.
With this special action that we term WAIT, we can estimate the joint advantage using only $Q_{\boldsymbol{\pi}}(s, \boldsymbol{a})$.

When taking WAIT action $a = w$, the agent will keep dormant at the current time step. The joint WAIT action is denoted as $\boldsymbol{w} = (w, w, ..., w) $. Choosing $\boldsymbol{w}$ at the current state $s$ signifies all agents take no actions, then the next state $s'=s$ and the agents receive shared reward $r(s, \boldsymbol{w})=0$ since $\boldsymbol{w}$ bring no changes to the environment. Then we can hold the following relationship. The detailed derivation is provided in \S\ref{app:wait_action_derivation}.
\begin{equation}
\label{equ:joint_adv}
A_{\boldsymbol{\pi}}(s, \boldsymbol{a})=Q_{\boldsymbol{\pi}}(s, \boldsymbol{a}) - \frac{1}{\gamma}Q_{\boldsymbol{\pi}}(s, \boldsymbol{w}).
\end{equation}

\textbf{Local Advantage Function.} In cooperative multi-agent settings, we can further consider the contribution to performance in different subsets of agents' views. We adopt the standard definition in MARL to measure the local advantages.

\begin{definition} {\rm \citep{kuba2022happo}}
\label{def:local_adv}
Let $i_{1:m}$ denote an ordered subset $\{i_1, ..., i_m\}$ of $\mathcal{N}$, and let $-i_{1:m}$ refer to its complement. We mark $i_k$ when we refer to the $k^{th}$ agent in the ordered subset. Correspondingly, the multi-agent local state-action value function is defined as
\begin{equation}
\label{equ:local_q}
Q_{\boldsymbol{\pi}}^{i_{1:m}}(s, \boldsymbol{a}^{i_{1:m}}) \triangleq \mathbb{E}_{a^{-i_{1:m}} \sim \boldsymbol{\pi}^{-i_{1:m}}}\left[Q_{\boldsymbol{\pi}}(s, \boldsymbol{a}^{i_{1:m}}, \boldsymbol{a}^{-i_{1:m}})\right]
\end{equation}
and for disjoint sets $j_{1:k}$ and $i_{1:m}$, the multi-agent local advantage function is
\begin{align}
\label{equ:local_adv}
A_{\boldsymbol{\pi}}^{i_{1:m}}(s, \boldsymbol{a}^{j_{1:k}}, \boldsymbol{a}^{i_{1:m}}) \triangleq \ & Q_{\boldsymbol{\pi}}^{j_{1:k},i_{1:m}}(s, \boldsymbol{a}^{j_{1:k}}, \boldsymbol{a}^{i_{1:m}}) \nonumber\\
&- Q_{\boldsymbol{\pi}}^{j_{1:k}}(s, \boldsymbol{a}^{j_{1:k}})
\end{align}
\end{definition}

\textbf{Monte Carlo Estimation.}
Both Eqs.~\eqref{equ:joint_adv} and \eqref{equ:local_adv} can be estimated via the local value function $Q_{\boldsymbol{\pi}}^{i_{1:u}}(s,\boldsymbol{a}^{i_{1:u}})$ with arbitrary action subset $\boldsymbol{a}^{i_{1:u}}$. More precisely, the local advantages can be estimated by changing $\boldsymbol{a}^{i_{1:u}}$ to disjoint action sets or subsets, and the joint advantages can be obtained by changing $\boldsymbol{a}^{i_{1:u}}$ to $\boldsymbol{a}^{1:n}$ that contains the joint actions or the joint WAIT action.
In the following, we denote the underlying policy of the LLM planner as $\boldsymbol{\mu}=\boldsymbol{\pi}_{\rm llm}(\boldsymbol{a}|s)$. To estimate $Q_{\boldsymbol{\mu}}^{i_{1:u}}$, we collect a dataset $\mathcal{D}$ by following the behavior policy $\boldsymbol{\mu}$, and further augment it with enhanced trajectories to overcome the out-of-distribution (OOD) problem of action estimation \citep{levine2020offline}.
Then we estimate $Q_{\boldsymbol{\mu}}^{i_{1:u}}(s,\boldsymbol{a}^{i_{1:u}})$ via Monte Carlo estimation by following $\mathcal{R}_{s, \boldsymbol{a}^{i_{1:u}}} = \sum_{\boldsymbol{a}^{-i_{1:u}} \in \mathcal{D}}\sum_{t=0}^{T}\gamma^t r_t$, where the complement sets is sampled from the dataset. Then the value function is learned by a regression loss as 
\begin{equation*}
\mathbb{E}_{s, \boldsymbol{a}^{i_{1:u}} \sim \mathcal{D}} \Big[\left\| \mathcal{R}_{s, \boldsymbol{a}^{i_{1:u}}} - Q_{\boldsymbol{\mu}}^{i_{1:u}} \right\|^{2} \Big].
\end{equation*}
We refer to Alg.~\ref{alg:criti_regression} in \S\ref{appendix:alg} for the details. The setting of reward $r_t$ depends on the specific task, e.g., for sweeping cubes in Figure~\ref{fig:intro-fig}, $r_t=1$ if a correct cube is swept and $r_t=0$ otherwise. The details of data collection are given in \S\ref{app:ablation}.

\textbf{Advantage Decomposition.} Based on Eq.~\eqref{equ:local_q}, we can express the state value function $V_{\boldsymbol{\pi}}(s)$ in a new form. Given the whole set of agents $\mathcal{N}=\{1, .., n\}$,
\begin{equation*}
V_{\boldsymbol{\pi}}(s) = \mathbb{E}_{a^{1:n} \sim \boldsymbol{\pi}^{1:n}}\left[Q_{\boldsymbol{\pi}}(s, \boldsymbol{a}^{1:n})\right].
\end{equation*}
Based on Definition \ref{def:local_adv}, we can introduce a pivotal lemma, which reveals that joint advantage function can be decomposed into the summation of local advantages of each agent. 
\begin{lemma}
\label{lemma:adv_decomposition}
(Multi-Agent Advantage Decomposition). In any cooperative Markov games, given a joint policy $\boldsymbol{\pi}$ and the whole set of agents $\mathcal{N}=\{1, .., n\}$, for any state $s$, and any ordered set $i_{1:n}$ of all agents, we have
\begin{equation}
A_{\boldsymbol{\pi}}(s, \boldsymbol{a}) = \sum_{k=1}^{n}A_{\boldsymbol{\pi}}^{i_{k}}(s, \boldsymbol{a}^{i_{1:k-1}}, a^{i_k}),
\end{equation}
where $\boldsymbol{a} = (a^1, a^2, ..., a^n)$.
\end{lemma}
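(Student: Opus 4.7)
The plan is to prove the decomposition by recognizing the right-hand side as a telescoping sum generated by the recursive structure of the local advantage in Definition~\ref{def:local_adv}. Concretely, I would first rewrite each summand using Eq.~\eqref{equ:local_adv} with the choice $j_{1:k-1}=i_{1:k-1}$ and $i_{1:m}=\{i_k\}$, giving
\begin{equation*}
A_{\boldsymbol{\pi}}^{i_k}(s,\boldsymbol{a}^{i_{1:k-1}},a^{i_k}) \;=\; Q_{\boldsymbol{\pi}}^{i_{1:k}}(s,\boldsymbol{a}^{i_{1:k}}) - Q_{\boldsymbol{\pi}}^{i_{1:k-1}}(s,\boldsymbol{a}^{i_{1:k-1}}).
\end{equation*}
Summing over $k=1,\dots,n$ then collapses to $Q_{\boldsymbol{\pi}}^{i_{1:n}}(s,\boldsymbol{a}^{i_{1:n}}) - Q_{\boldsymbol{\pi}}^{i_{1:0}}(s,\boldsymbol{a}^{i_{1:0}})$, where I interpret the $k=1$ term under the convention that $i_{1:0}$ denotes the empty ordered set.

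Next I would identify the two boundary terms. For the last term, since $i_{1:n}$ is an ordering of the full agent set $\mathcal{N}$, there are no remaining agents to marginalize, so Eq.~\eqref{equ:local_q} degenerates to $Q_{\boldsymbol{\pi}}^{i_{1:n}}(s,\boldsymbol{a}^{i_{1:n}}) = Q_{\boldsymbol{\pi}}(s,\boldsymbol{a})$ (after reindexing agents back to their natural order, which is legitimate since $Q_{\boldsymbol{\pi}}$ does not depend on the labeling). For the empty-set term, applying Eq.~\eqref{equ:local_q} with $i_{1:m}=\emptyset$ yields $Q_{\boldsymbol{\pi}}^{\emptyset}(s) = \mathbb{E}_{\boldsymbol{a}\sim\boldsymbol{\pi}}[Q_{\boldsymbol{\pi}}(s,\boldsymbol{a})] = V_{\boldsymbol{\pi}}(s)$, matching the state-value form stated just above the lemma. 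Substituting gives $Q_{\boldsymbol{\pi}}(s,\boldsymbol{a}) - V_{\boldsymbol{\pi}}(s) = A_{\boldsymbol{\pi}}(s,\boldsymbol{a})$, which is the desired identity.

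The only real obstacle is making the boundary conventions airtight, in particular justifying the empty-subset case $Q_{\boldsymbol{\pi}}^{\emptyset}(s)=V_{\boldsymbol{\pi}}(s)$ and ensuring that Eq.~\eqref{equ:local_adv} remains well-defined when $j_{1:k}$ is empty (so that the first summand reduces to $Q_{\boldsymbol{\pi}}^{i_1}(s,a^{i_1}) - V_{\boldsymbol{\pi}}(s)$). Once these conventions are stated explicitly, the rest is a one-line telescoping argument that does not depend on the particular ordering $i_{1:n}$ chosen, which incidentally also establishes the ordering-invariance of the decomposition.
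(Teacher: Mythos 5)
Your proposal is correct and follows essentially the same route as the paper's proof: rewrite each local advantage via Eq.~\eqref{equ:local_adv} as $Q_{\boldsymbol{\pi}}^{i_{1:k}}-Q_{\boldsymbol{\pi}}^{i_{1:k-1}}$, telescope, and identify the boundary terms $Q_{\boldsymbol{\pi}}^{i_{1:n}}=Q_{\boldsymbol{\pi}}$ and $Q_{\boldsymbol{\pi}}^{i_{1:0}}=V_{\boldsymbol{\pi}}$. Your explicit attention to the empty-subset convention is a point the paper leaves implicit, but it does not change the argument.
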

The proof follows \citet{kuba2022happo} and is given in \S\ref{appendix:a-1}. Lemma \ref{lemma:adv_decomposition} will be used for derivation in \S\ref{section:3-2}.

\subsection{Theoretical Motivation for Grounding LLM} \label{section:3-2}
In this section, we give a theoretical motivation that closely resembles advantage-weighted regression \citep{peng2019advantageweighted} in single-agent RL, while we extend it for multi-agents via advantage decomposition in Lemma \ref{lemma:adv_decomposition}.
To achieve efficient LLM grounding, i.e., to obtain a superior policy to the LLM planner, one option is adopting LLM as a basic policy and searching for a stronger policy than it. 
Therefore, we derive our objective as an approximate optimization of a constrained policy search problem. Specifically, we denote the policy of LLM planners as $\boldsymbol{\mu}=\boldsymbol{\pi}_{\rm llm}(\boldsymbol{a}|s)$, and our goal is to find a policy $\boldsymbol{\pi}$ that maximizes the expected improvement $\eta(\boldsymbol{\pi})=J(\boldsymbol{\pi}) - J(\boldsymbol{\mu})$ over the basic policy $\boldsymbol{\mu}$. 
Following the performance difference lemma~\citep{Langford02icml, pmlr-v37-schulman15ppo}, we show the expected improvement $\eta(\boldsymbol{\pi})$ can be expressed in terms of the advantage over $\boldsymbol{\mu}(\boldsymbol{a}|s)$, as
\begin{equation}
\label{equ:ex_improvement}
\begin{split}
\eta(\boldsymbol{\pi})=\mathbb{E}_{s \sim \rho_{\boldsymbol{\pi}}(s), \boldsymbol{a} \sim \boldsymbol{\pi}(\boldsymbol{a}|s)}\left[A_{\boldsymbol{\mu}}(s, \boldsymbol{a})\right],
\end{split}
\end{equation}
where $\rho_{\boldsymbol{\pi}}(s) = \sum_{i=0}^{\infty}\gamma^{i}P(s_i=s)$ is the (unnormalized) discounted visitation frequencies over policy $\boldsymbol{\pi}$. Since the objective in Eq.~\eqref{equ:ex_improvement} is difficult to optimize due to the dependency on $\rho_{\boldsymbol{\pi}}(s)$ and $\boldsymbol{\pi}$, we introduce a surrogate objective $\hat{\eta}(\boldsymbol{\pi})$ to approximate $\eta(\boldsymbol{\pi})$, instructed by \citet{pmlr-v37-schulman15ppo}, as 
\begin{equation}
\label{equ:approximate_improvement}
    \hat{\eta}(\boldsymbol{\pi}) = \mathbb{E}_{s \sim \rho_{\boldsymbol{\mu}}(s), \boldsymbol{a} \sim \boldsymbol{\pi}(\boldsymbol{a}|s)}\left[A_{\boldsymbol{\mu}}(s, \boldsymbol{a})\right],
\end{equation}
with the constraint that the new policy $\boldsymbol{\pi}$ is close enough to the basic policy $\boldsymbol{\mu}$\footnote{We refer to \citet{pmlr-v37-schulman15ppo} for details.}, i.e., the KL divergence between $\boldsymbol{\pi}$ and $\boldsymbol{\mu}$ ${\rm D}_{\rm KL}\left(\boldsymbol{\pi}(\cdot|s) \| \boldsymbol{\mu}(\cdot|s)\right) \leq \epsilon$.


\textbf{Optimal Joint Policy.} The optimal policy $\boldsymbol{\pi}^{*}$ for the above surrogate constrained optimization problem is expressed by,
\begin{align}
\label{equ:optimal_1}
\boldsymbol{\pi}^{*}(\boldsymbol{a}|s) = \frac{1}{Z(s)}\boldsymbol{\mu}(\boldsymbol{a}|s)\exp\left(\frac{1}{\beta}A_{\boldsymbol{\mu}}(s, \boldsymbol{a})\right),
\end{align}
where $Z(s)$ is the partition function.

\textbf{Optimal Individual Policy.} Following advantage decomposition in Lemma \ref{lemma:adv_decomposition}, we can decompose the optimal joint policy $\boldsymbol{\pi}^{*}(\boldsymbol{a}|s)$ to optimal individual policies by assuming the agents choose actions sequentially in the order of $1, 2, ..., n$, as
\begin{align}
\label{equ:individual_optimal}
    &\pi^{*}(a^i|s, \boldsymbol{a}^{1:i-1}) \nonumber\\
    &\, = \frac{\mu^i(a^i|s,\boldsymbol{a}^{1:i-1})}{Z^{i}(s)}\exp\left(\frac{1}{\beta}A_{\boldsymbol{\mu}}^{i}(s, \boldsymbol{a}^{1:i-1}, a^i)\right)
\end{align}
where $Z^i(s)$ is the partition function. We refer to \S\ref{appendix:a-2} for a detailed derivation of Eqs.~\eqref{equ:optimal_1} and \eqref{equ:individual_optimal}.

By maximizing the expected policy improvement $\eta(\boldsymbol{\pi})=J(\boldsymbol{\pi})-J(\boldsymbol{\mu})$, we obtain stronger joint and individual policies (i.e., $\boldsymbol{\pi}^{*}(\boldsymbol{a}|s)$ and $\pi^{*}(a^i|s, \boldsymbol{a}^{1:i-1})$) over the basic policy $\boldsymbol{\mu}=\boldsymbol{\pi}_{\rm llm}$.
The key insight behind the policy improvement is to re-weight the LLM policy with exponential weights defined in terms of advantages.
The advantage function is estimated by local value function $Q_{\boldsymbol{\mu}}^{i_{1:u}}(s, \boldsymbol{a}^{i_{1:u}})$, where we calculate it via Monte-Carlo estimation from a collected dataset $\mathcal{D}$, as we discussed in \S\ref{section:3-1}. 

\subsection{Prompting by Reinforced Advantage Feedback} \label{section:3-3}
Upon the basic policy $\boldsymbol{\mu} = \boldsymbol{\pi}_{\rm llm}$, the advantage-weighted solution in Eq.~\eqref{equ:individual_optimal} offers a crucial intuition that
(\romannumeral1) by increasing the probability of $\mu^{i}(a^{i}_{\rm pos}|s,\boldsymbol{a}^{1:i-1})$ for those actions $a^{i}_{\rm pos}$ with positive advantages, i.e., $A_{\boldsymbol{\mu}}^{i}(s, \boldsymbol{a}^{1:i-1}, a^{i}_{\rm pos}) > 0$, and (\romannumeral2) decreasing the probability of $\mu^{i}(a^{i}_{\rm neg}|s,\boldsymbol{a}^{1:i-1})$ for those actions $a^{i}_{\rm neg}$ with negative advantages, i.e., $A_{\boldsymbol{\mu}}^{i}(s, \boldsymbol{a}^{1:i-1}, a^{i}_{\rm neg}) < 0$, we can ensure an expected performance improvement over $J(\boldsymbol{\mu})$.
Therefore, Eq.~\eqref{equ:individual_optimal} can be equivalently viewed as behavior cloning (BC) on the \emph{exponential weighting} dataset $\bar{\mathcal{D}}$ where the better actions are given by higher weights $e^ {A_{\boldsymbol{\mu}}^{i}(s, \boldsymbol{a}^{1:i-1}, a^{i}) / \beta}$.
When $\beta$ is sufficiently small, it becomes BC on a dataset processed by \emph{binary filtering} $\mathds{1}[A_{\boldsymbol{\mu}}^{i}(s, \boldsymbol{a}^{1:i-1}, a^{i}) > 0]$ where $\mathds{1}$ is the indicator function.
This provides an ideal alternative for improving $\boldsymbol{\mu}$ without access to the exact probability of the sampled action $a^i\sim \mu^i(\cdot|s,\boldsymbol{a}^{1:i-1})$, there being convenient for grounding close-source LLMs.
We provide theoretical proof for the monotonic improvement with the \emph{binary filtering} in \S\ref{appendix:binary_filtering}.

Inspired by the \emph{binary filtering}, we develop a novel feedback mechanism, wherein the main idea is to convert the filter $\mathds{1}[A_{\boldsymbol{\mu}}^{i}(s, \boldsymbol{a}^{1:i-1}, a^{i}) > \epsilon \geq 0]$ into the feedback of LLM-proposed plans with their corresponding scores $A_{\boldsymbol{\mu}}^{i}(s, \boldsymbol{a}^{1:i-1}, a^{i})$ for refining the plans. Based on different types of advantages, we design two algorithms for plan refinement: \emph{ReAd-S} and \emph{ReAd-J}.
The process of prompting and refinement is depicted in Figure \ref{fig:refine}. Algorithmic details of \emph{ReAd-S} and \emph{ReAd-J} are given in \S\ref{appendix:alg}.

\begin{figure*}[t]
\vskip -0.05in
\begin{center}
\centerline{
\includegraphics[width=2.1\columnwidth]{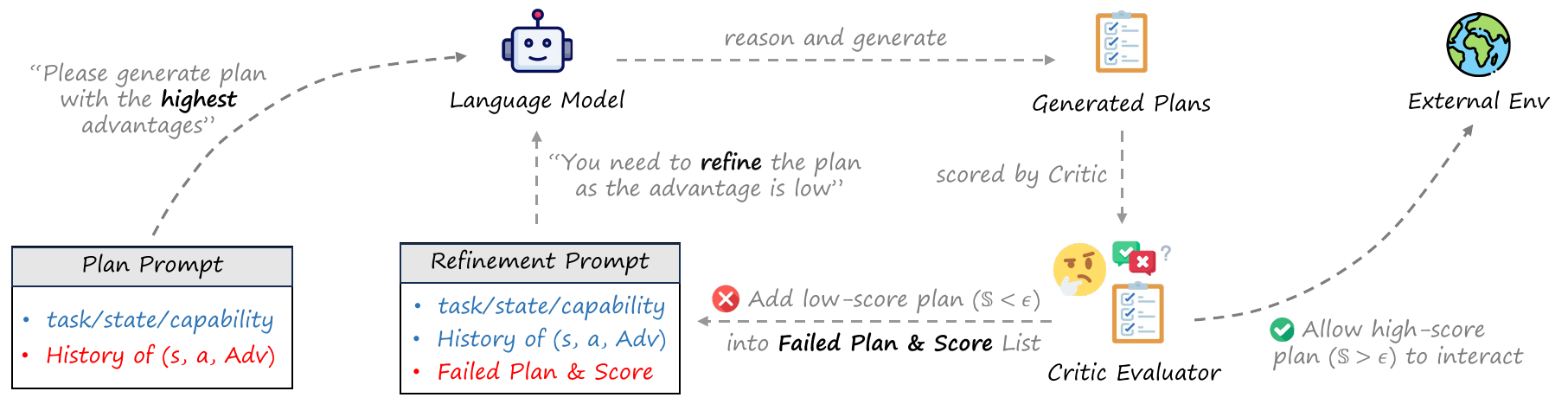}}
\caption{An overview of prompting and refinement. For each timestep $t$, the LLM planner is given the history, which contains states, actions, and advantages, and is prompted to generate a plan with the highest advantage. The pre-trained critic is used to evaluate the score of the generated action $\mathbb{S}_{\rm ReAd}(a_t^i)$. If $\mathbb{S}_{\rm ReAd}(a_t^i)<\epsilon$, the failed plan is used as a prompt, and the LLM planer is asked to refine the policy until the $\mathbb{S}_{\rm ReAd}(a_t^i) > \epsilon$. The (refined) action is used to interact with the environment, and the LLM planner is processed in the next step.}
\label{fig:refine}
\end{center}
\vspace{-2em}
\end{figure*}

\textbf{Prompting and Refinement for \emph{ReAd-S}.}
For each time step, we initialize an empty action-set $\boldsymbol{a}_t = \{\}$ and follow the order of $[1,\ldots,n]$ for agents in planning.
For planning action $a^i_t$ of agent $i$ at state $s_t$, 
the process of \emph{ReAd-S} contains two parts. (\romannumeral1) \textbf{Prompting as Optimizing.} An LLM planner is given the history of advantages of previous state-action pairs, i.e., $\mathcal{H}=\{(s,(\boldsymbol{a}^{1:i-1},a^i),A_{\boldsymbol{\mu}}^{i}(s,\boldsymbol{a}^{1:i-1}, a^i))\}$, and is prompted to \emph{choose an action with the highest advantage} for agent $i$, which recovers the principle of advantage-weighted regression. Leveraging the in-context learning ability, we hope the LLM planner can induce the advantage values of available actions 
implicitly and choose the action $a^i_t$ with the highest advantage. This process is inspired by recent work for LLM as optimizer \citep{yang2023OPRO}, where the agent is prompted to give a plan that optimizes a score function. 
(\romannumeral2) \textbf{Feedback for Refinement.} Nevertheless, the implicit advantage maximizing can be hard since the number of available actions can be large. Thus, we introduce a refinement process to allow the LLM to refine the policy if an unsatisfactory action 
is generated. We use the pre-trained critic network $Q_{\theta}^{i_{1:u}}(s, \boldsymbol{a}^{i_{1:u}})$ with parameter $\theta$ to estimate the advantage score of a generated action, as
\begin{align*}
&\mathbb{S}_{\rm ReAd-S}(a^i_t) = A_{\theta}^{i}(s_t, \boldsymbol{a}_t^{1:i-1}, a_{t}^{i}) \\
&\, = Q_{\theta}^{1:i}(s_t, \boldsymbol{a}^{1:i-1}_t, a_{t}^{i}) - Q_{\theta}^{1:i-1}(s_t, \boldsymbol{a}^{1:i-1}_t).
\end{align*}
Given a threshold $\epsilon \geq 0$, if the score function is less than the threshold (i.e., $\mathbb{S}_{\rm ReAd-S}(a^i_t)<\epsilon$), we add this failed action to the history $\mathcal{H}$ and prompt the agent to re-plan.
Such a refinement guarantees embodied agents always take the actions with $A_{\theta}^{i}(s_t, \boldsymbol{a}_t^{1:i-1}, a_{t}^{i}) > \epsilon$, further ensuring monotonic improvements over $\boldsymbol{\pi}_{\rm llm}$.
It significantly decreases the interaction rounds of agents since the action $a^i_t$ has been evaluated and refined via advantage feedback before execution. In contrast, previous methods like RoCo need to interact with the environment to get physical feedback regardless of the quality of the generated actions. The refined action is added into the action-set $\boldsymbol{a}_t \leftarrow \boldsymbol{a}_t \cup \{a_t^{i}\}$ and we then perform sequential decision for agent $i+1$.

\textbf{Prompting and Refinement for \emph{ReAd-J}.} The planning process of the LLM planner for \emph{ReAd-J} is similar to that of \emph{ReAd-S}. The main difference is the LLM planner for \emph{ReAd-J} is required to give a joint action $\boldsymbol{a}_t$ for all agents at once. Meanwhile, we use the joint advantage function for history prompting with $\mathcal{H}=\{(s,\boldsymbol{a}_t,A_{\boldsymbol{\mu}}(s_t,\boldsymbol{a}_t))\}$ rather than considering the local advantages. The score function is 
\begin{align*}
\mathbb{S}_{\rm ReAd-J}(\boldsymbol{a}_t) &= A_\theta(s_t, \boldsymbol{a}_t)\\
&= Q_\theta(s_t, \boldsymbol{a}_t) - \nicefrac{1}{\gamma} \:\:Q_\theta(s_t, \boldsymbol{w})
\end{align*}
based on Eq.~\eqref{equ:optimal_1}. The joint plan $\boldsymbol{a}_t$ is refined if it is less than a threshold (i.e., $\mathbb{S}_{\rm ReAd-J}(\boldsymbol{a}_t)<\epsilon$).


\section{Related Works}
We discuss with the most relevant literature here and provide more in-depth discussions in \S\ref{appendix:additional_related_work}.

\textbf{Grounding LLM with RL.} 
RL with Human Feedback (RLHF) has been used to align LLM with human preference through parameter tuning \citep{dai2023safe,fernandes2023bridging,song2023preference}. In contrast, our work focuses on grounding closed-source LLM with RL via few-shot prompting and closed-loop feedback \citep{zeng2023socratic,wu2023embodied,huang2022zero,lin2023grounded}. Previous works tried to integrate RL into LLM planning under the framework tree search \citep{browne2012survey}. For example, FAFA \citep{liu2023RAFA} and TS-LLM \citep{feng2023alphazero} learn an environment model and value function to plan the subroutine in MCTS. REX \citep{murthy2023rex} proposes to balance exploration and exploitation in LLM-based MCTS. Other works like SayCan \citep{ahn2022i} and Text2Motion \citep{Kevin2023Motion} adopt a model-free manner by learning value functions to connect LLM knowledge to physical environments. SwiftSage \citep{lin2023swiftsage} performs imitation learning for rapid thinking and LLM for methodical training. Remember~\citep{zhang2023large} learns value functions for LLM to predict $Q$-value via exemplars in prompts and select actions based on $Q$-values. Unlike the Remember framework, which retrieves similar states from a buffer, we evaluate the advantage function of planned actions via a neural network and follow advantage-weighted regression in prompting. We employ the advantage function in a multi-agent setting, while previous methods focus on single-agent planning. Previous LLM-based multi-agent works mostly manually designed communication, reflection, and reasoning modules \citep{zhang2023proagent,zhang2023building,Shyam2023MA,chen2023MA}. CAMEL \citep{li2023camel} facilitated cooperation among communicative agents through role-playing and inception prompting, which also includes a critic with different purposes and does not have theoretical guarantees. MetaGPT \citep{hong2023metagpt} similarly incorporated Standardized Operating Procedures (SOPs) into LLM-based multi-agent collaborations where the roles of each agent was predefined by humans. Compared to previous LLM-based multi-agent works, we propose a more principled way by using the sequential advantage function from multi-agent RL for cooperation. 

\vspace{-0.3em}
\section{Experiments}
\vspace{-0.3em}
We provide the main results and corresponding analysis on the benchmark \emph{DV-RoCoBench} adapted from RoCoBench \citep{mandi2023roco} in this section. The additional results on the adapted \emph{Overcooked-AI} \citep{micah2019overcooked_ai} with a detailed setup description in \S\ref{overcooked-detail} are provided in \S\ref{appendix:overcooked_results}.

\subsection{Experimental Setup}\label{section:5-1}
\textbf{DV-RoCoBench.} 
We present \emph{Difficult Variants of RoCoBench (DV-RoCoBench)} for embodied multi-robot collaboration, which is derived from RoCoBench \citep{mandi2023roco}.
RoCoBench consists of 6 multi-robot collaboration tasks in a tabletop manipulation environment, typically involving interactive objects that are semantically straightforward to comprehend and reason about for LLMs.
In contrast to RoCoBench, which focuses primarily on tasks with a fixed difficulty level, we select three tasks to enrich the complexity of the benchmark and create the new \emph{DV-RoCoBench}, where each task is tailored to have 4-5 difficulty levels for experiments. Due to technically unresolved issues in the original RoCoBench, we have already selected all executable tasks to form our newly developed \emph{DV-RoCoBench}.

In the following, we give a brief description of tasks and settings. See \S\ref{env-detail} for details.
\begin{itemize}[nosep,leftmargin=1em,labelwidth=*,align=left]
    \item[-] \textbf{Sweep Floor.} Two robot arms need to work together to sweep all the cubes of given colors into the bin. We establish 5 difficulty levels based on the number of overall cubes and the target cubes.
    \item[-] \textbf{Make Sandwich.} Two robot arms need to stack the ingredients to make a sandwich according to the recipe. Each arm is limited in operating range and cooperation between agents is required. We establish 4 difficulty levels depending on the length of the recipe.
    \item[-] \textbf{Sort Cubes.} Three robot arms within their operating ranges are required to coordinate and place cubes to their target positions. We establish 5 different difficulty levels based on the distance between the cubes and their target locations.
\end{itemize}

For quantitative comparisons, we impose the maximum number of environment steps per episode to 15 in \emph{DV-RoCoBench}.
And the maximum rounds of re-planning per step is set to 15 for all tasks except for Sort Cubes where it is set to 10.

\textbf{Baseline Methods.} We use GPT-4-Turbo \citep{openAI2023GPT} as the basic LLM policy for all experiments. Since our \emph{ReAd} lies in the setting of LLM grounding on embodied tasks, we mainly choose LLM-based methods as baselines.
On both benchmarks, we compare \emph{ReAd-J} with three strong close-loop baselines -- ReAct~\citep{yao2023react}, Reflexion~\citep{shinn2023reflexion} and MindAgent~\citep{gong2023mindagent}, and a planner named Central Plan which instructs the LLM to generate actions for all robots based on the history of all agents. These five methods output agents' plans in a parallel manner.
In \emph{DV-RoCoBench}, we particularly add one more baseline RoCo \citep{mandi2023roco} which achieves the state-of-the-art performance in RoCoBench \citep{mandi2023roco}, for comparisons with \emph{ReAd-S}. Both of them generate joint plans in a sequential manner.
A detailed comparison among all chosen algorithms is provided in Table~\ref{prompt-compose} of \S\ref{app:baseline}.


\textbf{Evaluation Metrics.} We evaluate the performance of algorithms on three metrics that closely resemble that in RoCoBench: (\romannumeral1) \textbf{SR}: the success rate of completing tasks within the limited interaction rounds; (\romannumeral2) \textbf{ES}: the number of interaction steps to the environment taken by the robots to complete the task; (\romannumeral3) \textbf{NQ}: the number of queries to LLMs in completing the task, which measures the efficiency in enquiring LLMs to obtain a feasible plan. An algorithm is better if it has \emph{higher SR, fewer ES, and fewer NQ}. Among these metrics, SR and ES directly reflect the effectiveness of a planner in completing tasks, while NQ can be somewhat trivial since a planner can have much fewer queries to LLM but has a low SR. In contrast, methods that require policy refinement often require more queries to lead to a high SR.

\begin{figure*}[t]
\vskip -0.2in
\begin{center}
\centerline{\includegraphics[width=2.0\columnwidth]{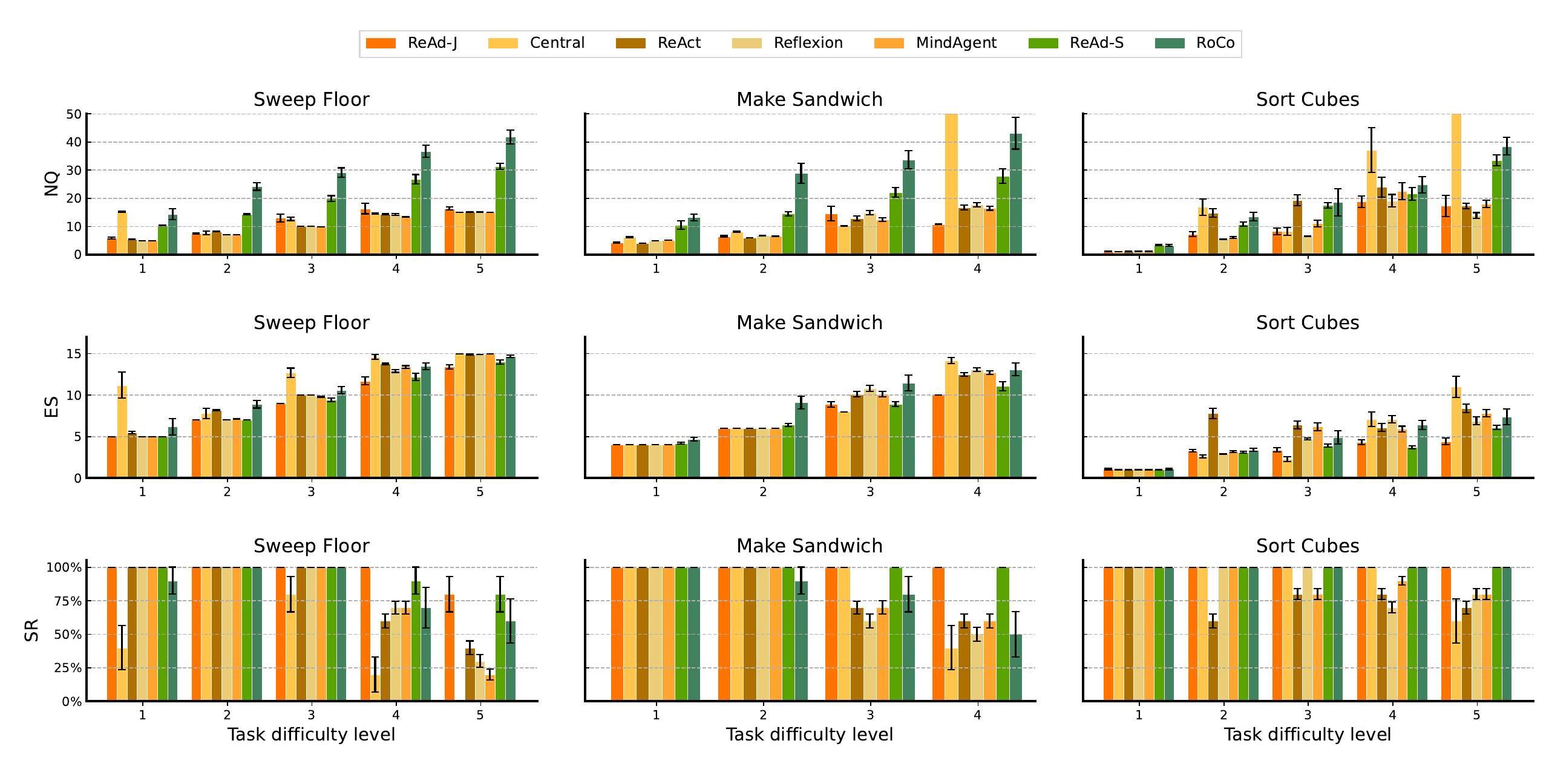}}
\vspace{-1.3em}
\caption{
We report mean SR ($\boldsymbol{\uparrow}$), ES ($\boldsymbol{\downarrow}$), and NQ ($\boldsymbol{\downarrow}$) in 3 tasks with various difficulty levels averaged over 10 random seeds. The detailed score is given in Table \ref{main-result-table-1} of \S\ref{app:main-result}. 
}
\label{main-result-figure-1}
\end{center}
\vspace{-2.5em}
\end{figure*}

\subsection{Results} \label{section:5-2}
\textbf{\emph{ReAd-S} and \emph{ReAd-J} outperform their corresponding strong baselines on all metrics and achieve more efficient LLM grounding.}
As shown in Figure~\ref{main-result-figure-1}, the performance gap in SR becomes pronounced gradually with increasing difficulty level in \emph{DV-RoCoBench}.
In more challenging scenarios (e.g., level 4 or 5 in tasks), our approach obtains higher success rates while baselines show limited progress. Meanwhile, \emph{ReAd-S} and \emph{ReAd-J} exhibit lower ES and comparable or even lower NQ across most tasks in \emph{DV-RoCoBench} when compared to their baselines.
A lower ES indicates that prompting LLMs to generate actions that maximize advantages can enhance the optimality of the proposed plans, as higher advantages suggest that the generated actions contribute more effectively to task completion.
A similar trend is also observed in the result on the adapted \emph{Overcooked-AI} shown in Figure~\ref{main-result-figure-2}.
By replacing the \emph{physical verification} feedback with \emph{advantage function}, we implicitly perform one-step monotonic policy improvement over the base LLM in a tuning-free manner, resulting in a stronger LLM planner.
On the other hand, as the scenario becomes more challenging for multi-agent collaboration, it is inevitable to involve more redundant information and disturbing components in the environment, which poses a challenge for the LLM planner to capture and reason about the essential part inside the state and physical feedback. In contrast, benefiting from \emph{ReAd} feedback, the LLM planner only needs to concentrate on how to maximize the advantage score no matter how challenging the scenario is. Hence, our approach exhibits superior planning capabilities and better LLM grounding results for embodied tasks.
Additionally, we evaluate the performance of the open-source model Llama-3.1-70B-Instruct \citep{dubey2024llama3} equipped with our algorithm on the \emph{Y2\_G3} task. The result is provided in \S\ref{app:open_source_llm}.

\begin{table}[t]
\centering
\caption{Evaluation results over 10 runs of \emph{ReAd-S} and RoCo and its modified versions on disturbances at timestep $n$. We present the disturbance as resetting the environment. $n = 0$: no resetting.}
\vspace{-0.8em}
\begin{adjustbox}{width=1\columnwidth}
\begin{tabular}{ccccc}
\toprule
\multicolumn{1}{l}{}       & Method       & NQ        &  ES      & SR   \\ \midrule
                           & ReAd-S       & 22.1±1.65    & 8.9±0.28  & 1.0±0.00 
\\ 
recipe3                    & RoCo-L       & 44.7±4.90    & 12.0±0.54 & 0.9±0.10 \\
 $(n = 0)$                 & RoCo-P       & 33.7±3.16    & 11.5±0.95 & 0.8±0.13 \\
                           & RoCo         & 33.7±3.16    & 11.5±0.95 & 0.8±0.13 
\\ \midrule
                           & ReAd-S       & 39.7±5.30   & 10.4±0.34  & 1.0±0.00 
\\ 
recipe3                    & RoCo-L       & 55.3±2.63   & 14.1±0.28  & 0.8±0.13 \\
$(n = 1)$                  & RoCo-P       & 33.6±2.03   & 12.5±0.73  & 0.9±0.10
\\
                           & RoCo         & 46.3±3.60   & 13.9±0.43 & 0.7±0.15 
\\ \midrule
                           & ReAd-S       & 44.9±4.34   & 12.5±0.34 & 1.0±0.00 
\\ 
recipe3                    & RoCo-L       & 53.4±2.28   & 14.8±0.20 & 0.3±0.15  
\\
$(n = 2)$                  & RoCo-P       & 35.2±0.98   & 14.3±0.26 & 0.8±0.13
\\
                           & RoCo         & 61.2±11.95 & 14.2±0.44 & 0.5±0.16   
\\ \midrule
                           & ReAd-S       & 49.1±4.53  & 13.4±0.54  & 1.0±0.0 
\\ 
recipe3                    & RoCo-L       & 75.9±6.91  & 15.0±0.00  & 0.0±0.00  \\
$(n = 3)$                  & RoCo-P       & 40.0±2.94    & 14.3±0.26  & 0.5±0.17
\\
                           & RoCo         & 74.8±10.79  & 15.0±0.00 & 0.0±0.00 \\ \bottomrule
\end{tabular}
\end{adjustbox}
\label{disruption_results}
\vspace{-1.5em}
\end{table}

\textbf{With sudden disturbances towards the environments, the LLM-planner can re-adjust plans rapidly to accomplish the task via \emph{ReAd} feedback.} Since the critic takes both the current state and the proposed actions as input, it endows the LLM planner with not only the foresight to discern whether the action contributes to realizing the goal but also the ability to reschedule the planning quickly when encountering sudden disturbances to the advancement of the task. To evaluate the robustness of the LLM planner, we compare \emph{ReAd-S} and RoCo in extra extended scenarios with unexpected disruptions.
We select \emph{recipe3} (3rd difficulty level in Make Sandwich) that takes a minimum environment step of 8 to accomplish the task. When a disruption occurs at timestep $n \ (0 \leq n < 8, n \in \mathbb{N})$, we reset the task and reinitialize the state without giving any hints about this resetting in the prompt and clearing previous history information contained in the prompt. Specifically, the ``adversarial'' case affects the LLM-based agent from two aspects: (\romannumeral1) the description of current state $s_{\text{reset}}$ which is given to the LLM planner before planning; (\romannumeral2) the unexpected transition of environment after executing an 
action.
It raises an intractable challenge as the remaining historical information becomes misaligned with the actual situation. The lack of a complete description of the sudden disruption significantly increases the likelihood of the LLM planner proposing erroneous actions.
To eliminate the influence induced by the different history information utilized between \emph{ReAd-S} and RoCo, we provide two more variants of RoCo as baselines. One uses only the history of the previous round, which we name RoCo-L, while the other is informed with descriptions of the sudden disturbance, which we name RoCo-P.
The evaluation results are shown in Table \ref{disruption_results}. A larger step $n$ signifies a more severe influence of disturbance. As $n$ increases from 0 to 3, \emph{ReAd-S} consistently outperforms RoCo and its variants on SR and ES. Although RoCo retains a high SR under $n = 1, 2$, it fails to recalibrate the misalignment between the remaining history information and the actual status of the environment, leading to a significant drop in SR when $n = 3$.
Regardless of what kind of history information RoCo relies on, consistent superior performance demonstrates that \emph{ReAd} feedback alleviates the potentially severe hallucination issue and brings reliable robustness.

\vspace{-0.5em}
\subsection{Ablation Studies}\label{section:5-3}
\textbf{Plan refinement has a remarkable impact on grounding LLM.}
The advantage score plays two roles in ReAd: (\romannumeral1) \emph{prompting as optimizing} for generating actions with the highest score, and (\romannumeral2) \emph{feedback as refinement} for re-plan if the score is less than a threshold. The policy refinement makes our method a \emph{multi-step} process since the action can be refined for multi-rounds. To investigate the role of plan refinement, we adopt a \emph{single-step} version by removing the second role, which forms an open-loop plan generation without refinement. In Table \ref{open_loop_results}, we denote the original version as \emph{Multi-Step} and the open-loop version as \emph{Single-Step}. We pick the most difficult variant \emph{Y3\_G3} in Sweep Floor and observe a marginal decline in both efficiency and success rates in \emph{Single-Step}.
It suggests that plan refinement that ensures monotonic policy improvement is crucial for performance. 
Interestingly, \emph{ReAd-J(Single-Step)} can also achieve a considerable success rate of 60\%, which is dramatically comparable or superior to the baselines with \emph{physical verification} as feedback.

\begin{table}[t]
\centering
\captionof{table}{The performance of the multi-step and single-step version of \emph{ReAd-S} and \emph{ReAd-J} on the \emph{Y3\_G3} task. 
}
\vspace{-0.8em}
\begin{adjustbox}{width=1\columnwidth}
\begin{tabular}{@{}cccc@{}}
\toprule
                      & NQ             & ES             & SR          \\ \midrule
ReAd-J(Multi-Step)    & 16.4±0.54         & 13.4±0.27           & 0.8±0.13       \\
ReAd-J(Single-Step)   & 19.1±1.25        & 14.1±0.28         & 0.6±0.16        \\ \midrule
ReAd-S(Multi-Step)    & 31.4±1.11       & 14.0±0.26       & 0.8±0.13     \\
ReAd-S(Single-Step)   & 35.1±1.16       & 14.5±0.17       & 0.6±0.16      \\ \bottomrule
\end{tabular}
\end{adjustbox}
\label{open_loop_results}
\vspace{-1.5em}
\end{table}

\section{Conclusion}
We have presented \emph{ReAd} as a novel LLM feedback for closed-loop planning in multi-agent collaboration. We provide theoretical motivation based on multi-agent advantage-weighted regression. The LLM is prompted to generate plans with high advantages and perform policy refinement. The experiments on \emph{DV-RoCoBench} and \emph{Overcooked-AI} show that our method outperforms physical feedback with improved efficiency. Moreover, the advantage feedback can handle sudden disturbances and is crucial for refinement.

\section*{Limitations}
Due to the limitation of currently available benchmark for embodied multi-agent collaboration evaluation, most of our experiments are conducted in 2 or 3-agent scenarios.
In a case with an increasing number of agents, theoretically speaking, \emph{ReAd-J} would be hindered by the exponential growth of the joint state-action space while \emph{ReAd-S} could maintain consistent performance by scoring in the individual state-action space, enjoying the benefit of sequential decision-making manner.
However, it also necessitates more computational costs and time for dataset collection in such a scenario. Thus, how our proposed \emph{ReAd} feedback mechanism practically scales under scenarios with many agents remains fascinating. To this end, building a well-established embodied many-agent collaboration benchmark is essential, which provides an opportunity to push our algorithm to the limit. We consider investigating the \emph{ReAd} feedback mechanism in the many-agent scenario and tackling the potential limitation in future works. Future works also include extending the advantage feedback to multi-objective and safe planning scenarios. Last but not least, we provide extended discussion on Symbol Grounding Problem \citep{harnad1990symbol} in \S\ref{appendix:symbol_grounding}.

\section*{Acknowledgements}
We would like to thank Prof. Zhuoran Yang for his insightful discussions and comments. This work is supported by the National Key Research and Development Program of China (Grant No.2024YFE0210900), the National Natural Science Foundation of China (Grant No.62306242), the Young Elite Scientists Sponsorship Program by CAST (Grant No.2024QNRC001), and the Yangfan Project of the Shanghai (Grant No.23YF11462200).



\bibliography{main}

\clearpage
\appendix
\onecolumn
\section{Theoretical Proof}
\label{appendix:a}
\subsection{Derivation of Joint Advantage Function with Joint Action WAIT}\label{app:wait_action_derivation}
The state-action value function with respect to the joint action WAIT $\boldsymbol{w}$ can be expressed by, 
\begin{equation}\nonumber
\begin{split}
Q_{\boldsymbol{\pi}}(s, \boldsymbol{w}) &=\mathbb{E}_{s_{1:\infty} \sim P, \boldsymbol{a}_{1:\infty}\sim \boldsymbol{\pi}}\left[\sum\nolimits_{i=0}^{\infty}\gamma^{i}r_i \big| s_0=s, \boldsymbol{a}_0=\boldsymbol{w}\right]\\
    &=\gamma\mathbb{E}_{s_{2:\infty} \sim P, \boldsymbol{a}_{1:\infty}\sim \boldsymbol{\pi}}\left[\sum\nolimits_{i=0}^{\infty}\gamma^{i}r_{i+1} \big| s_1=s\right]\\
    &=\gamma V_{\boldsymbol{\pi}}(s).
\end{split}
\end{equation}
Therefore, the \emph{joint} advantage function can be derived by using only the $Q_{\boldsymbol{\pi}}$ function, as
\begin{align*}
A_{\boldsymbol{\pi}}(s, \boldsymbol{a})&=Q_{\boldsymbol{\pi}}(s, \boldsymbol{a}) - V_{\boldsymbol{\pi}}(s)\\
&=Q_{\boldsymbol{\pi}}(s, \boldsymbol{a}) - \frac{1}{\gamma}Q_{\boldsymbol{\pi}}(s, \boldsymbol{w}).
\end{align*}

\subsection{Proof of Multi-Agent Advantage Decomposition}
\label{appendix:a-1}
\begin{proof}
With the definition of the multi-agent local advantage function in Eq.~\eqref{equ:local_adv}, we can have 
\begin{align*}
    \sum_{k=1}^{n}A_{\boldsymbol{\pi}}^{i_{k}} (s, \boldsymbol{a}^{i_{1:k-1}}, a^{i_k}) &= \sum_{k=1}^{n}Q_{\boldsymbol{\pi}}^{i_{1:k}}(s, \boldsymbol{a}^{i_{1:k}}) - Q_{\boldsymbol{\pi}}^{i_{1:k-1}}(s, \boldsymbol{a}^{i_{1:k-1}}) \\
    &= Q_{\boldsymbol{\pi}}^{i_{1:n}}(s, \boldsymbol{a}^{i_{1:n}}) - Q_{\boldsymbol{\pi}}^{i_{1:n-1}}(s, \boldsymbol{a}^{i_{1:n-1}}) + Q_{\boldsymbol{\pi}}^{i_{1:n-1}}(s, \boldsymbol{a}^{i_{1:n-1}}) - Q_{\boldsymbol{\pi}}^{i_{1:n-2}}(s, \boldsymbol{a}^{i_{1:n-2}}) \\
    & \quad + ... + Q_{\boldsymbol{\pi}}^{i_{1:1}}(s, \boldsymbol{a}^{i_{1:1}}) - Q_{\boldsymbol{\pi}}^{i_{1:0}}(s, \boldsymbol{a}^{i_{1:0}}) \\
    &= Q_{\boldsymbol{\pi}}^{i_{1:n}}(s, \boldsymbol{a}^{i_{1:n}}) - Q_{\boldsymbol{\pi}}^{i_{1:0}}(s, \boldsymbol{a}^{i_{1:0}})\\
    &= Q_{\boldsymbol{\pi}}(s, \boldsymbol{a}) - V_{\boldsymbol{\pi}}(s) \\
    &= A_{\boldsymbol{\pi}}(s, \boldsymbol{a}).
\end{align*}
\end{proof}
\subsection{Derivation of Optimal Joint Policy and Optimal Individual Policy} \label{appendix:a-2}
In this section, we begin with the constrained policy search problem. Following the performance difference lemma \citep{Langford02icml}, the expected improvement $\eta(\boldsymbol{\pi})=J(\boldsymbol{\pi}) - J(\boldsymbol{\mu})$ can be expressed by
\begin{align}
    \mathbb{E}_{s_0, \boldsymbol{a}_0, ... \sim \boldsymbol{\pi}}\left[\sum_{t=0}^{\infty}\gamma^{t}A_{\boldsymbol{\mu}}(s_t, \boldsymbol{a}_t)\right] &= \mathbb{E}_{s_0, \boldsymbol{a}_0, ... \sim \boldsymbol{\pi}} \left[ \sum_{t=0}^{\infty}\gamma^{t}\left( r(s_t, \boldsymbol{a}_t) + \gamma V_{\boldsymbol{\mu}}(s_{t+1}) - V_{\boldsymbol{\mu}}(s_{t}) \right) \right] \nonumber \\
    &= \mathbb{E}_{s_0, \boldsymbol{a}_0, ... \sim \boldsymbol{\pi}} \left[ -V_{\boldsymbol{\mu}}(s_0) + \sum_{t=0}^{\infty}\gamma^t r(s_t, \boldsymbol{a}_t) \right] \nonumber \\
    &= -\mathbb{E}_{s_0 \sim p(s_0)}\left[ V_{\boldsymbol{\mu}}(s_0) \right] + \mathbb{E}_{s_0, \boldsymbol{a}_0, ... \sim \boldsymbol{\pi}} \left[ \sum_{t=0}^{\infty}\gamma^t r(s_t, \boldsymbol{a}_t) \right] \nonumber \\
    &= -J(\boldsymbol{\mu}) + J(\boldsymbol{\pi}). \label{equ:temp_ex_improvement}
\end{align}
We can rewrite Eq.~\eqref{equ:temp_ex_improvement} with an expectation over states using discounted visitation frequencies $\rho_{\boldsymbol{\pi}}(s)$,
\begin{align}
    \eta(\boldsymbol{\pi}) = \mathbb{E}_{s_0, \boldsymbol{a}_0, ... \sim \boldsymbol{\pi}}\left[\sum_{t=0}^{\infty}\gamma^{t}A_{\boldsymbol{\mu}}(s_t, \boldsymbol{a}_t)\right] &= \sum_{t=0}^{\infty}\int_{s}p(s_t=s|\boldsymbol{\pi})\int_{\boldsymbol{a}}\boldsymbol{\pi}(\boldsymbol{a}|s)\gamma^t A_{\boldsymbol{\mu}}(s, \boldsymbol{a})\,d\boldsymbol{a}\,ds \nonumber \\
    &= \int_{s}\sum_{t=0}^{\infty}\gamma^tp(s_t=s|\boldsymbol{\pi}) \int_{\boldsymbol{a}}\boldsymbol{\pi}(\boldsymbol{a}|s)A_{\boldsymbol{\mu}}(s, \boldsymbol{a})\,d\boldsymbol{a}\,ds \nonumber \\
    &= \int_{s}\rho_{\boldsymbol{\pi}}(s)\int_{\boldsymbol{a}}\boldsymbol{\pi}(\boldsymbol{a}|s)A_{\boldsymbol{\mu}}(s, \boldsymbol{a})\,d\boldsymbol{a}\,ds, \label{equ:primal_obj}
\end{align}
where $\rho_{\boldsymbol{\pi}}(s) = \sum_{t=0}^{\infty}\gamma^tp(s_t=s|\boldsymbol{\pi})$ represents the (unnormalized) discounted visitation frequencies over policy $\boldsymbol{\pi}$ and $p(s_t=s|\boldsymbol{\pi})$ is the likelihood of the agent at state $s$ after following $\boldsymbol{\pi}$ for $t$ timesteps. Our goal is to find the optimal policy $\boldsymbol{\pi}^{*}$ that maximizes the expected improvement $\eta(\boldsymbol{\pi})$.

However, it's intractable to sample over the target policy $\boldsymbol{\pi}$, further causing that the objective in Eq.~\eqref{equ:primal_obj} can be difficult to optimize. Following \citep{pmlr-v37-schulman15ppo}, we can introduce an approximation $\hat{\eta}(\boldsymbol{\pi})$ of $\eta(\boldsymbol{\pi})$ using the discounted visitation frequencies over the old policy $\boldsymbol{\mu}$, 
\begin{equation*}
    \hat{\eta}(\boldsymbol{\pi}) = \int_{s}\rho_{\boldsymbol{\mu}}(s)\int_{\boldsymbol{a}}\boldsymbol{\pi}(\boldsymbol{a}|s)A_{\boldsymbol{\mu}}(s, \boldsymbol{a})\,d\boldsymbol{a}\,ds.
\end{equation*}
$\hat{\eta}(\boldsymbol{\pi})$ matches $\eta(\boldsymbol{\pi})$ to first order \citep{Langford02icml}, and provides a good estimate of $\eta$ if $\boldsymbol{\pi}$ is close enough to $\boldsymbol{\mu}$. In practice, we initialize the target policy $\boldsymbol{\pi}$ with the LLM policy $\boldsymbol{\mu}$ to satisfy the above condition. Therefore, we can formulate the following constrained policy search problem,
\begin{align}
    \mathop{\arg\max}\limits_{\boldsymbol{\pi}} & \quad \int_{s}\rho_{\boldsymbol{\mu}}(s) \int_{\boldsymbol{a}} \boldsymbol{\pi}(\boldsymbol{a}|s) A_{\boldsymbol{\mu}}(s, \boldsymbol{a})\,d\boldsymbol{a}\,ds, \\
    {\rm s.t.} & \quad {\rm D}_{\rm KL}\left(\boldsymbol{\pi}(\cdot|s) \| \boldsymbol{\mu}(\cdot|s)\right) \leq \epsilon, \quad \forall s, \label{equ:point_KL} \\
    & \quad \int_{\boldsymbol{a}}\boldsymbol{\pi}(\boldsymbol{a}|s)\,d\boldsymbol{a} = 1, \quad \forall s.
\end{align}
However, enforcing the pointwise KL constraint in Eq.~\eqref{equ:point_KL} at all states is intractable. To simplify the constrained optimization problem, we relax the hard KL constraint by converting it into a soft constraint in an expectation form, as
\begin{align*}
    \mathop{\arg\max}\limits_{\boldsymbol{\pi}} & \quad \int_{s}\rho_{\boldsymbol{\mu}}(s) \int_{\boldsymbol{a}}\boldsymbol{\pi}(\boldsymbol{a}|s) A_{\boldsymbol{\mu}}(s, \boldsymbol{a})\,d\boldsymbol{a}\,ds, \\
    {\rm s.t.} & \quad \int_{s} \rho_{\boldsymbol{\mu}}(s) {\rm D}_{\rm KL}\left(\boldsymbol{\pi}(\cdot|s) \| \boldsymbol{\mu}(\cdot|s)\right)\,ds \leq \epsilon, \\
    & \quad \int_{\boldsymbol{a}}\boldsymbol{\pi}(\boldsymbol{a}|s)\,d\boldsymbol{a} = 1, \quad \forall s.
\end{align*}
Next, we form the Lagrangian, as
\begin{align*}
    \mathcal{L}(\boldsymbol{\pi}, \beta, \nu) &= \int_{s}\rho_{\boldsymbol{\mu}}(s) \int_{\boldsymbol{a}}\boldsymbol{\pi}(\boldsymbol{a}|s) A_{\boldsymbol{\mu}}(s, \boldsymbol{a})\,d\boldsymbol{a}\,ds + \beta \left( \epsilon - \int_{s} \rho_{\boldsymbol{\mu}}(s) {\rm D}_{\rm KL}\left(\boldsymbol{\pi}(\cdot|s) \| \boldsymbol{\mu}(\cdot|s)\right)\,ds \right) \\
    & \quad + \int_{s}\nu_s \left( 1 - \int_{a}\boldsymbol{\pi}(\boldsymbol{a}|s)\,d\boldsymbol{a} \right)\,ds,
\end{align*}
where $\nu = \{\nu_s | \forall s \in \mathcal{S} \}$ and $\beta > 0$ correspond to the Lagrange multipliers. 

\paragraph{Derivation of Optimal Joint Policy.} Differentiating $\mathcal{L}(\boldsymbol{\pi}, \beta, \nu)$ with respect to $\boldsymbol{\pi}(\boldsymbol{a}|s)$ gives the following, 
\begin{align}
    \frac{\partial \mathcal{L}}{\partial \boldsymbol{\pi}(\boldsymbol{a}|s)} = \rho_{\boldsymbol{\mu}}(s)A_{\boldsymbol{\mu}}(s, \boldsymbol{a}) - \beta \rho_{\boldsymbol{\mu}}(s) \log \boldsymbol{\pi}(\boldsymbol{a}|s) + \beta \rho_{\boldsymbol{\mu}}(s) \log \boldsymbol{\mu}(\boldsymbol{a}|s) - \beta \rho_{\boldsymbol{\mu}}(s) - \nu_s.
\label{equ:partial_dev}
\end{align}
According to KKT conditions \citep{kkt}, if $(\boldsymbol{\pi}^{*}, \beta^{*}, \nu^{*})$ is a saddle point of $\mathcal{L}$, $\boldsymbol{\pi}^{*}$ is the optimal solution of the primal problem. Thus, let Eq.~\eqref{equ:partial_dev} be equal to zero, then we have
\begin{align}
    \log \boldsymbol{\pi}^{*}(\boldsymbol{a}|s) &= \frac{1}{\beta^{*}}A_{\boldsymbol{\mu}}(s, \boldsymbol{a}) + \log \boldsymbol{\mu}(\boldsymbol{a}|s) - 1 - \frac{1}{\rho_{\boldsymbol{\mu}}(s)} \frac{\nu_s^{*}}{\beta^{*}}, \\
    \boldsymbol{\pi}^{*}(\boldsymbol{a}|s) &= \boldsymbol{\mu}(\boldsymbol{a}|s) \exp \left( \frac{1}{\beta^{*}}A_{\boldsymbol{\mu}}(s, \boldsymbol{a}) \right) \exp \left( - \frac{1}{\rho_{\boldsymbol{\mu}}(s)} \frac{\nu_s^{*}}{\beta^{*}} - 1 \right).
\end{align}
Note that the primal problem holds the constraint $\int_{\boldsymbol{a}}\boldsymbol{\pi}(\boldsymbol{a}|s)\,d\boldsymbol{a} = 1$, the second exponential term is consequently viewed as the partition function $Z(s)$ that normalizes the conditional action distribution, 
\begin{equation}
\label{equ:Z}
    Z(s) = \exp \left( \frac{1}{\rho_{\boldsymbol{\mu}}(s)} \frac{\nu_s^{*}}{\beta^{*}} + 1 \right) = \int_{\boldsymbol{a}'}\boldsymbol{\mu}(\boldsymbol{a}'|s) \exp \left( \frac{1}{\beta^{*}}A_{\boldsymbol{\mu}}(s, \boldsymbol{a}') \right)\,d\boldsymbol{a}'.
\end{equation}
\emph{Optimal Joint Policy} is then given by,
\begin{equation}
\label{equ:tmp_joint_optimal}
    \underbrace{\boldsymbol{\pi}^{*}(\boldsymbol{a}|s)}_{\text{Left-Hand Side}} = \underbrace{\frac{1}{Z(s)}\boldsymbol{\mu}(\boldsymbol{a}|s) \exp \left( \frac{1}{\beta^{*}}A_{\boldsymbol{\mu}}(s, \boldsymbol{a}) \right)}_{\text{Right-Hand Side}}.
\end{equation}

\paragraph{Derivation of Optimal Individual Policy.} Given the set of agents $\mathcal{N} = \{1, 2, ..., n\}$, we assume the agents choose actions sequentially in the order of $1, 2, ..., n$, i.e., agents $i$ is aware of current state $s$ and the chosen actions of agents $1, 2, ..., i-1$ and select actions based on that. The following equation holds by the support of the definition of conditional probability, 
\begin{equation}
    \boldsymbol{\pi}(\boldsymbol{a}|s) = \prod_{i=1}^{n}\pi^{i}(a^{i}|s, \boldsymbol{a}^{1:i-1}),
\end{equation}
where $\pi^{i}$ is the individual policy of agent $i$. Here we consider a general case that the old joint policy and the target joint policy are both in a sequential manner.
Following multi-agent advantage decomposition in Lemma \ref{lemma:adv_decomposition}, the LHS and RHS of Eq.~\eqref{equ:tmp_joint_optimal} can be expressed respectively (in order to present the \emph{Optimal Individual Policy} we omit the superscript of it which denotes agent id),
\begin{align}
    \text{LHS} &= \prod_{i=1}^{n}\pi^{*}(a^{i}|s, \boldsymbol{a}^{1:i-1}), \\
    \text{RHS} &= \frac{1}{Z(s)} \prod_{i=1}^{n}\mu^{i}(a^{i}|s, \boldsymbol{a}^{1:i-1})\exp \left( \frac{1}{\beta^{*}}A_{\boldsymbol{\mu}}^{i}(s, \boldsymbol{a}^{1:i-1}, a^i) \right) \nonumber \\
    &= \prod_{i=1}^{n} \frac{1}{Z^{i}(s)} \mu^{i}(a^{i}|s, \boldsymbol{a}^{1:i-1})\exp \left( \frac{1}{\beta^{*}}A_{\boldsymbol{\mu}}^{i}(s, \boldsymbol{a}^{1:i-1}, a^i) \right). \label{equ:individual_with_Z_i}
\end{align}
Thus, we can get the expression of \emph{Optimal Individual Policy},
\begin{equation}
    \pi^{*}(a^{i}|s, \boldsymbol{a}^{1:i-1}) = \frac{1}{Z^{i}(s)} \mu^{i}(a^{i}|s, \boldsymbol{a}^{1:i-1}) \exp \left( \frac{1}{\beta^{*}}A_{\boldsymbol{\mu}}^{i}(s, \boldsymbol{a}^{1:i-1}, a^i) \right),
\end{equation}
where $Z^{i}(s)$ is the partition function that normalizes the conditional action distribution $\pi^{*}(a^{i}|s, \boldsymbol{a}^{1:i-1})$ of agent $i$ and satisfies $Z(s) = \prod_{i=1}^{n}Z^{i}(s)$. Finally, all that remains for us to do is to derive the validity of $Z(s) = \prod_{i=1}^{n}Z^{i}(s)$. 

Since $Z^{i}(s)$ is the partition function that normalizes the conditional action distribution $\pi^{*}(a^{i}|s, \boldsymbol{a}^{1:i-1})$, we can have,
\begin{equation}
    Z^{i}(s) = \int_{a^i} \mu^{i}(a^{i}|s, \boldsymbol{a}^{1:i-1})\exp \left( \frac{1}{\beta^{*}}A_{\boldsymbol{\mu}}^{i}(s, \boldsymbol{a}^{1:i-1}, a^i) \right) \,da^i.
\end{equation}
Meanwhile, we can rewrite Eq.~\eqref{equ:Z} after applying multi-agent advantage decomposition in Lemma \ref{lemma:adv_decomposition}, 
\begin{align}
    Z(s) &= \int_{\boldsymbol{a}}\boldsymbol{\mu}(\boldsymbol{a}|s) \exp \left( \frac{1}{\beta^{*}}A_{\boldsymbol{\mu}}(s, \boldsymbol{a}) \right)\,d\boldsymbol{a} \\
    &= \prod_{i=1}^{n}\int_{a^{i}} \mu^{i}(a^{i}|s, \boldsymbol{a}^{1:i-1})\exp \left( \frac{1}{\beta^{*}} A_{\boldsymbol{\mu}}^{i}(s, \boldsymbol{a}^{1:i-1}, a^i) \right)\,da^{i} \\
    &= \prod_{i=1}^{n} Z^{i}(s).
\end{align}

Beyond the general case, if we consider a special case that the old policy $\boldsymbol{\mu}$ is in a parallel manner (i.e., $\boldsymbol{\mu} = \prod_{i=1}^{n} \mu^{i}(a^{i}|s)$) while the target policy remains in a sequential manner, we can still derive similar results, differing only by the modification from $\mu^{i}(a^i|s, \boldsymbol{a}^{1:i-1})$ to $\mu^{i}(a^i|s)$.

\subsection{Proof of Monotonic Improvement with Binary Filtering}\label{appendix:binary_filtering}
\begin{proposition}
(Relationship between Exponential Weighting and Binary Filtering). In terms of the weight $e^{A^{i}_{\boldsymbol{\mu}}(s, \boldsymbol{a}^{1:i-1}, a^i) / \beta}$ in Exponential Weighting where $\beta > 0$, for any $A^{i}_{\boldsymbol{\mu}}(s, \boldsymbol{a}^{1:i-1}, a^i) < 0$, we have the following limitation,
\begin{equation}
\label{equ:limitation_beta}
    \lim\limits_{\beta \rightarrow 0^{+}} \exp(\frac{{A^{i}_{\boldsymbol{\mu}}(s, \boldsymbol{a}^{1:i-1}, a^i)}}{\beta}) = 0\, , \quad {\rm for\:\:}\forall A^{i}_{\boldsymbol{\mu}}(s, \boldsymbol{a}^{1:i-1}, a^i) < 0
\end{equation}
As $\beta \rightarrow 0^{+}$, Exponential Weighting becomes a special case -- Binary Filtering where the samples with $A^{i}_{\boldsymbol{\mu}}(s, \boldsymbol{a}^{1:i-1}, a^i) < 0$ are filtered out.
\end{proposition}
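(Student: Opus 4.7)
The plan is to reduce the proposition to an elementary limit computation and then clarify the ``Binary Filtering'' interpretation through the partition-function normalization in Eq.~\eqref{equ:individual_optimal}.

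First, I would fix $c \triangleq A^{i}_{\boldsymbol{\mu}}(s, \boldsymbol{a}^{1:i-1}, a^i) < 0$ as a constant (it does not depend on $\beta$) and argue that $\lim_{\beta \to 0^{+}} e^{c/\beta} = 0$. Since $c < 0$ is fixed and $\beta \to 0^{+}$, the ratio $c/\beta$ diverges to $-\infty$, and because the exponential function is continuous with $\lim_{x \to -\infty} e^{x} = 0$, the displayed limit in Eq.~\eqref{equ:limitation_beta} follows immediately by composition of limits.

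Next, to justify the ``Binary Filtering'' conclusion, I would invoke Eq.~\eqref{equ:individual_optimal} in its normalized form with partition function $Z^{i}(s) = \int_{a^{i}} \mu^{i}(a^{i} \mid s, \boldsymbol{a}^{1:i-1}) \exp(A^{i}_{\boldsymbol{\mu}}/\beta)\, da^{i}$. For any pair of actions $a_{\rm neg}$ and $a_{\rm pos}$ with $A^{i}_{\boldsymbol{\mu}}(s, \boldsymbol{a}^{1:i-1}, a_{\rm neg}) < 0 < A^{i}_{\boldsymbol{\mu}}(s, \boldsymbol{a}^{1:i-1}, a_{\rm pos})$, the ratio of their unnormalized weights satisfies $\exp\bigl((A^{i}_{\boldsymbol{\mu}}(\cdot, a_{\rm neg}) - A^{i}_{\boldsymbol{\mu}}(\cdot, a_{\rm pos}))/\beta\bigr) \to 0$ by the first step applied to the negative constant $A^{i}_{\boldsymbol{\mu}}(\cdot, a_{\rm neg}) - A^{i}_{\boldsymbol{\mu}}(\cdot, a_{\rm pos})$. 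Consequently, after normalization by $Z^{i}(s)$, the probability mass placed on any negative-advantage action becomes negligible relative to any positive-advantage action, and the limiting density is supported on $\{a^{i} : A^{i}_{\boldsymbol{\mu}}(s, \boldsymbol{a}^{1:i-1}, a^{i}) > 0\}$, which is exactly the effect of multiplying $\mu^{i}$ by the indicator $\mathds{1}[A^{i}_{\boldsymbol{\mu}} > 0]$ and renormalizing.

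The main obstacle is semantic rather than technical: strictly speaking, the pointwise limit $\beta \to 0^{+}$ concentrates mass on the $\arg\max$ of $A^{i}_{\boldsymbol{\mu}}$, not uniformly on the full positive-advantage set. The ``Binary Filtering'' picture is therefore best read as a small-$\beta$ approximation in which negative-advantage samples are exponentially suppressed while the relative ordering among positive-advantage samples is preserved. I would flag this caveat explicitly so that the statement aligns with how the threshold filter $\mathds{1}[A^{i}_{\boldsymbol{\mu}} > \epsilon \geq 0]$ is subsequently operationalized in the \emph{ReAd} prompting and refinement mechanism.
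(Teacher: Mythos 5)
Your proof is correct, and it takes a cleaner route than the paper's on the core limit while adding substance the paper omits. For Eq.~\eqref{equ:limitation_beta} itself, you fix $c = A^{i}_{\boldsymbol{\mu}}(s,\boldsymbol{a}^{1:i-1},a^i)<0$ and conclude directly from $c/\beta \to -\infty$ and continuity of $\exp$; this is the standard one-line argument and it fully establishes the (pointwise) statement as written. The paper instead sets $\alpha = \min_{A^{i}_{\boldsymbol{\mu}}<0}|A^{i}_{\boldsymbol{\mu}}|$, reparametrizes $\beta = k\alpha$, bounds every negative-advantage weight by $e^{-\alpha/\beta}=e^{-1/k}$, and runs an $\epsilon$--$k$ argument; the only thing this buys over your version is uniformity of the convergence over the (finite) set of negative advantages, at the cost of a more roundabout derivation. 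Where you go beyond the paper is the second half: the paper's proof stops at the limit and never actually argues the ``becomes Binary Filtering'' claim, whereas you correctly trace it through the partition function $Z^i(s)$ via ratios of weights, and you rightly flag that the strict $\beta\to 0^{+}$ limit of the normalized policy concentrates on $\arg\max_{a^i} A^{i}_{\boldsymbol{\mu}}$ rather than spreading $\mu^i$-proportionally over the whole set $\{a^i: A^{i}_{\boldsymbol{\mu}}>0\}$ as the filter $\mathds{1}[A^{i}_{\boldsymbol{\mu}}>0]$ would. That caveat is a genuine imprecision in the proposition's informal conclusion, and your reading of it as an exponential-suppression approximation is the right way to reconcile the statement with how the threshold filter is used downstream.
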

\begin{proof}
We first define the minimum of the absolute value of those negative $A_{\boldsymbol{\mu}}^{i}$,
\begin{equation*}
    \alpha = \min\limits_{A_{\boldsymbol{\mu}}^{i} < 0} |A_{\boldsymbol{\mu}}^{i}| = \min\limits_{A_{\boldsymbol{\mu}}^{i} < 0} - A_{\boldsymbol{\mu}}^{i}
\end{equation*}
To achieve Eq.~\eqref{equ:limitation_beta}, we only need to ensure that the rate at which $e^{A^{i}_{\boldsymbol{\mu}}(s, \boldsymbol{a}^{1:i-1}, a^i) / \beta}$ approaches zero is faster than the rate at which $\beta$ approaches zero.
One way to guarantee this is to choose $\beta$ such that it is proportional to the absolute value of $A$. Thus, we define $\beta = k \cdot \alpha$ where $k$ is a positive hyperparameter.
Then we have,
\begin{equation*}
    \exp\left(\frac{{A^{i}_{\boldsymbol{\mu}}(s, \boldsymbol{a}^{1:i-1}, a^i)}}{\beta} \right) \leq \exp\left(\frac{-\alpha}{\beta} \right) = \exp\left(\frac{-1}{k}\right)
\end{equation*}
Finally, for any positive $\epsilon > 0$, there exists a positive $k > 0$, it holds the following:
\begin{equation*}
    \exp \left(\frac{-1}{k} \right) < \epsilon
\end{equation*}
Taking the natural logarithm of both sides, we get:
\begin{equation}
\label{equ:k_and_epsilon}
    k \ln (\epsilon) + 1 > 0
\end{equation}
With an arbitrary $\epsilon > 0$, we can always find a $k$ that satisfies Eq.~\eqref{equ:k_and_epsilon}, further satisfying Eq.~\eqref{equ:limitation_beta}.
\end{proof}

\begin{proposition}
(Policy improvement with Binary Filtering). By behaviour cloning (BC) on a filtered dataset with Binary Filtering $\mathds{1}[A^{i}_{\boldsymbol{\mu}}(s, \boldsymbol{a}^{1:i-1}, a^i) > \epsilon]$ where $\epsilon \geq 0$, new policy $\boldsymbol{\pi}$ is superior to the basic policy $\boldsymbol{\mu}$, i.e., $J(\boldsymbol{\pi}) - J(\boldsymbol{\mu}) > 0$.
\end{proposition}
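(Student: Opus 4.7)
The plan is to combine the performance difference identity in Eq.~\eqref{equ:ex_improvement} with the multi-agent advantage decomposition of Lemma~\ref{lemma:adv_decomposition}, and then exploit the fact that binary filtering forces the support of the behaviour-cloned policy to lie on strictly advantage-positive actions. Concretely, I would first recall that
\begin{equation*}
J(\boldsymbol{\pi}) - J(\boldsymbol{\mu}) \;=\; \mathbb{E}_{s\sim\rho_{\boldsymbol{\pi}},\,\boldsymbol{a}\sim\boldsymbol{\pi}}\bigl[A_{\boldsymbol{\mu}}(s,\boldsymbol{a})\bigr],
\end{equation*}
and then apply Lemma~\ref{lemma:adv_decomposition} to rewrite the integrand as the telescoping sum $\sum_{k=1}^{n} A_{\boldsymbol{\mu}}^{i_k}(s,\boldsymbol{a}^{i_{1:k-1}},a^{i_k})$ along the chosen sequential ordering. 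This reduces the global comparison between $\boldsymbol{\pi}$ and $\boldsymbol{\mu}$ to per-agent statements about local advantages, which is exactly the quantity the binary filter $\mathds{1}[A_{\boldsymbol{\mu}}^{i}(s,\boldsymbol{a}^{1:i-1},a^{i})>\epsilon]$ controls.

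Next, I would formalise the claim that BC on the filtered dataset yields a policy whose conditional $\pi^{i}(\cdot\mid s,\boldsymbol{a}^{1:i-1})$ is supported on $\{a^{i}: A_{\boldsymbol{\mu}}^{i}(s,\boldsymbol{a}^{1:i-1},a^{i})>\epsilon\}$. Assuming BC recovers the conditional distribution of kept samples (i.e., optimal imitation of the filtered empirical distribution), any action drawn from $\pi^{i}$ satisfies $A_{\boldsymbol{\mu}}^{i}(s,\boldsymbol{a}^{1:i-1},a^{i})>\epsilon\geq 0$ almost surely. Taking expectation over $a^{i}\sim\pi^{i}$ and then over the joint distribution, the telescoped sum yields
\begin{equation*}
\mathbb{E}_{\boldsymbol{a}\sim\boldsymbol{\pi}}\!\bigl[A_{\boldsymbol{\mu}}(s,\boldsymbol{a})\bigr]
\;=\;\sum_{k=1}^{n}\mathbb{E}\bigl[A_{\boldsymbol{\mu}}^{i_k}(s,\boldsymbol{a}^{i_{1:k-1}},a^{i_k})\bigr]\;>\;n\epsilon\;\geq\;0,
\end{equation*}
holding pointwise in $s$ whenever the filtered dataset is non-empty at $(s,\boldsymbol{a}^{1:i-1})$. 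Averaging against the non-negative measure $\rho_{\boldsymbol{\pi}}(s)$ then delivers $J(\boldsymbol{\pi})-J(\boldsymbol{\mu})>0$.

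The main obstacle I anticipate is the subtlety around states $s$ in the support of $\rho_{\boldsymbol{\pi}}$ for which the filtered dataset may be empty or degenerate, i.e., no sampled action at $s$ satisfies $A_{\boldsymbol{\mu}}^{i}>\epsilon$. To handle this I would adopt the standard convention that on such states $\pi$ defaults to $\mu$ (so that the corresponding local advantage contributes exactly zero, not a negative value), which preserves the non-negativity of the integrand and still yields strict improvement provided the filter is non-trivial on at least one reachable state. A secondary technical issue is that $\rho_{\boldsymbol{\pi}}$ depends on $\boldsymbol{\pi}$ itself; however, because the integrand is non-negative everywhere, no coupling between $\rho_{\boldsymbol{\pi}}$ and $\boldsymbol{\pi}$ can flip the sign, so the argument goes through without the trust-region style surrogate used in \S\ref{section:3-2}. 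Finally, I would note that strictness of the inequality requires either $\epsilon>0$ or at least one reachable state with a strictly positive filtered local advantage; under the non-triviality assumption implicit in the statement, this holds.
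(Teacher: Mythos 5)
Your proof is correct, and while it shares the same skeleton as the paper's argument (multi-agent advantage decomposition via Lemma~\ref{lemma:adv_decomposition}, the explicit form of the behaviour-cloned policy as the renormalized filtered $\boldsymbol{\mu}$, and the observation that $\mathbb{E}_{\boldsymbol{a}\sim\boldsymbol{\mu}}[A_{\boldsymbol{\mu}}]=0$ while filtering retains only strictly positive local advantages), it departs from the paper in one substantive and worthwhile way. The paper carries out the entire computation on the \emph{surrogate} objective $\hat{\eta}(\boldsymbol{\pi})=\mathbb{E}_{s\sim\rho_{\boldsymbol{\mu}},\,\boldsymbol{a}\sim\boldsymbol{\pi}}[A_{\boldsymbol{\mu}}(s,\boldsymbol{a})]$ from Eq.~\eqref{equ:approximate_improvement}, establishes $\hat{\eta}(\boldsymbol{\pi})>\hat{\eta}(\boldsymbol{\mu})=0$, and then reads this off as $J(\boldsymbol{\pi})-J(\boldsymbol{\mu})>0$ --- a final step that is only exact when $\hat{\eta}$ coincides with the true improvement $\eta$, which in general requires $\boldsymbol{\pi}$ to stay close to $\boldsymbol{\mu}$ (the binary filter gives no such guarantee). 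You instead start from the exact identity $\eta(\boldsymbol{\pi})=\mathbb{E}_{s\sim\rho_{\boldsymbol{\pi}},\,\boldsymbol{a}\sim\boldsymbol{\pi}}[A_{\boldsymbol{\mu}}(s,\boldsymbol{a})]$ of Eq.~\eqref{equ:ex_improvement} and note that because the filter makes the per-state integrand non-negative (indeed $>n\epsilon$ wherever the filter is non-degenerate) \emph{pointwise in $s$}, integrating against the non-negative measure $\rho_{\boldsymbol{\pi}}$ preserves the sign regardless of how $\rho_{\boldsymbol{\pi}}$ differs from $\rho_{\boldsymbol{\mu}}$. This buys you the stated conclusion about the true return gap without the trust-region approximation, and so is the more rigorous route; your explicit treatment of states where the filtered set is empty (defaulting to $\boldsymbol{\mu}$, contributing zero local advantage) and of the non-triviality condition needed for strictness are also points the paper's proof glosses over. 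The one caveat worth recording is that strict positivity still requires at least one state of positive visitation under $\boldsymbol{\pi}$ at which some agent's filtered local advantage is strictly positive, exactly as you note at the end.
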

\begin{proof}
According to BC on a filtered dataset with \emph{Binary Filtering} $\mathds{1}[A^{i}_{\boldsymbol{\mu}}(s, \boldsymbol{a}^{1:i-1}, a^i) > \epsilon]$, we have:
\begin{equation}
\label{equ:binary_filteded_BC}
    \pi^{i}(a^i | s, \boldsymbol{a}^{1:i-1}) = \frac{ \mathds{1}[A^{i}_{\boldsymbol{\mu}}(s, \boldsymbol{a}^{1:i-1}, a^i) > \epsilon]\mu^{i}(a^i | s, \boldsymbol{a}^{1:i-1}) }{Z^{i}(s)}
\end{equation}
where $Z^{i}(s)$ is the partition function. Given the new policy $\boldsymbol{\pi}(\boldsymbol{a}|s) = \prod_{i=1}^{n} \pi^{i}(a^{i}| s, \boldsymbol{a}^{1: i - 1})$, the expected improvement from Eq.~\eqref{equ:approximate_improvement} can be rewritten as,
\begin{align*}
    \hat{\eta}(\boldsymbol{\pi}) &= \mathbb{E}_{s\sim \rho_{\boldsymbol{\mu}}(s), \boldsymbol{a} \sim \boldsymbol{\pi}(\boldsymbol{a} | s)}\left[ A_{\boldsymbol{\mu}}(s, \boldsymbol{a}) \right] \\
    &= \mathbb{E}_{s\sim \rho_{\boldsymbol{\mu}}(s)} \mathbb{E}_{a^1 \sim \pi^{1}(a^1 | s)} \mathbb{E}_{a^2 \sim \pi^{2}(a^2 | s, a^1)} \cdots \mathbb{E}_{a^n \sim \pi^{n}(a^n | s, \boldsymbol{a}^{1: n - 1})} \left[ A_{\boldsymbol{\mu}}(s, \boldsymbol{a}) \right]
\end{align*}
Substituting Lemma \ref{lemma:adv_decomposition} and Eq.~\eqref{equ:binary_filteded_BC} into the above equation, we get:
\begin{align}
    \hat{\eta}(\boldsymbol{\pi}) &= \mathbb{E}_{s\sim \rho_{\boldsymbol{\mu}}(s)} \mathbb{E}_{a^1 \sim \pi^{1}(a^1 | s)} \mathbb{E}_{a^2 \sim \pi^{2}(a^2 | s, a^1)} \cdots \mathbb{E}_{a^n \sim \pi^{n}(a^n | s, \boldsymbol{a}^{1: n - 1})} \left[ \sum_{i=1}^{n} A_{\boldsymbol{\mu}}^{i} (s, \boldsymbol{a}^{1:i - 1}, a^i) \right] \nonumber\\
    & = \mathbb{E}_{s\sim \rho_{\boldsymbol{\mu}}(s)} \left[ \sum_{i = 1}^{n} \mathbb{E}_{a^{i} \sim \pi^{i} (a^{i} | s, \boldsymbol{a}^{1:i - 1})} \left( A_{\boldsymbol{\mu}}^{i} (s, \boldsymbol{a}^{1:i - 1}, a^i) \right) \right] \nonumber\\
    & = \mathbb{E}_{s\sim \rho_{\boldsymbol{\mu}}(s)} \left[ \sum_{i = 1}^{n} \mathbb{E}_{a^{i} \sim \mu^{i} (a^{i} | s, \boldsymbol{a}^{1:i - 1})} \left( \frac{\mathds{1}[A^{i}_{\boldsymbol{\mu}}(s, \boldsymbol{a}^{1:i-1}, a^i) > \epsilon] A_{\boldsymbol{\mu}}^{i} (s, \boldsymbol{a}^{1:i - 1}, a^i)}{Z^{i}(s)} \right) \right] \label{equ:binary_filtered_equ1}
\end{align}
And we note that the expected improvement from Eq.~\eqref{equ:approximate_improvement} entails the following relationship,
\begin{align}
    \hat{\eta}(\boldsymbol{\mu}) = J(\boldsymbol{\mu}) - J(\boldsymbol{\mu}) &= \mathbb{E}_{s\sim \rho_{\boldsymbol{\mu}}(s), \boldsymbol{a} \sim \boldsymbol{\mu}(\boldsymbol{a} | s)}\left[ A_{\boldsymbol{\mu}}(s, \boldsymbol{a}) \right] \nonumber\\
    & = \mathbb{E}_{s\sim \rho_{\boldsymbol{\mu}}(s)} \left[ \sum_{i = 1}^{n} \mathbb{E}_{a^{i} \sim \mu^{i} (a^{i} | s, \boldsymbol{a}^{1:i - 1})} \left( A_{\boldsymbol{\mu}}^{i} (s, \boldsymbol{a}^{1:i - 1}, a^i) \right) \right] \nonumber\\
    & = 0 \label{equ:binary_filtered_equ2}
\end{align}
Comparing Eq.~\eqref{equ:binary_filtered_equ1} with Eq.~\eqref{equ:binary_filtered_equ2}, it is obvious that those local advantages $A_{\boldsymbol{\mu}}^{i} (s, \boldsymbol{a}^{1:i - 1}, a^i)$ below the threshold $\epsilon$ would not be calculated in the expectation $\hat{\eta}(\boldsymbol{\pi})$. Hence, when the threshold $\epsilon \geq 0$ it naturally holds $\hat{\eta}(\boldsymbol{\pi}) > \hat{\eta}(\boldsymbol{\mu}) = 0$, i.e., $J(\boldsymbol{\pi}) - J(\boldsymbol{\mu}) > 0$.
\end{proof}

\section{Additional Related Works}\label{appendix:additional_related_work}
\paragraph{Task Planning with LLMs.} LLMs \citep{chowdhery2023palm,openAI2023GPT,touvron2023llama1,touvron2023llama2} trained on a large-scale corpus exhibits notable reasoning abilities via in-context learning \citep{dong2022survey,abernethy2023mechanism,akyurek2023what}. However, LLMs can also give infeasible plans for embodied agents due to the lack of real-world knowledge. A line of research modifies the open-loop planning framework to a closed-loop one via self-evaluation and reflection. For example, ReAct \citep{yao2023react}, Reflexion \citep{shinn2023reflexion}, and BeamSearch~\citep{xie2023beam} incorporate the feedback of an LLM evaluator in the prompts after the previous plan is completed. Other works integrate domain knowledge of embodied agents in feedback. For example, RoCo \citep{mandi2023roco} and Inner Monologue \citep{huang2022Monologue} design physical verification such as collision checking, object recognition, and scene description for feedback. DoReMi \citep{guo2023DoReMi} leverages LLM to generate physical constraints, and ViLA \citep{hu2023look} adopts Vision-Language Model (VLM) as a constraint detector for verification. Another line of research develops advanced reasoning frameworks, including chain-of-thought \citep{wei2023chainofthought,mu2023embodiedgpt} and tree-of-thought \citep{yao2023tree}.
Works like \citep{zhao2023large,hao2023reasoning} consider LLMs as a world model \citep{ano2023learning} and adopt tree search in planning \citep{hu2023tree}. 
Other works adopt the planning domain definition language (PDDL) for searching in long-horizon problems \citep{silver2023PDDL,liu2023llm+p,zhou2023isr}. 
Our work lies in closed-loop frameworks but has a novel advantage function in feedback, which is different from self-reflection or physical feedback and does not rely on advanced searching algorithms. 

\paragraph{Other LLM-based Embodied Agent.} 
Beyond task planning, LLMs also shoulder other roles for embodied agents. 
(\romannumeral1) \textbf{Foundation Policy.} Robot Transformer \citep{RT-1,RT-2}, PaLM-E \citep{palm-e}, Open-X \citep{Open-X}, and RoboFlamingo \citep{OpenFlamingo} use pre-trained LLM or VLM as the foundation policies and fine-tune the parameters with embodied data from real-world tasks. The LLM tokens and action tokens of agents are unified in fine-tuning. (\romannumeral2) \textbf{Code Generator}. Given high-level task descriptions, LLMs can generate executable code by calling the basic control primitives \citep{Jacky2023Code,vemprala2023gpt4robot} or low-level actions \citep{Yen2023Walk} of embodied agents. VoxPoser \citep{voxposer} leverages the code-writing capabilities of LLMs to compose 3D value maps via VLM and adopt model-predictive control (MPC) for planning. (\romannumeral3) \textbf{Reward Designer}. Text2Reward \citep{Text2Reward}, Language2Reward \citep{Lan2Rew}, and Eureka \citep{Eureka} leverage GPT-4 to produce interpretable reward codes, and allow iterative refinement with feedback. (\romannumeral4) \textbf{Data Generator}. To enhance task-level generalization, GenSim \citep{Gensim} adopts LLMs to propose task curriculum and novel sub-tasks to solve complex tasks. RoboGen \citep{RoboGen} proposes a closed-loop process to generate robot data, including proposing tasks, generating simulation environments, decomposing sub-tasks, and solving sub-tasks via RL or MPC.

\section{Algorithmic Description} \label{appendix:alg}

In this section, we give the algorithm descriptions of critic regression via Monte Carlo estimation, as well as the process of \emph{ReAd-S} and \emph{ReAd-J} algorithms. We highlight the difference between \emph{ReAd-S} and \emph{ReAd-J} by different colors. 

\vspace{2em}

\begin{algorithm}[h!]
\caption{Critic regression on $\mathcal{D}$ following 
$\boldsymbol{\mu}=\boldsymbol{\pi}_{\rm llm}$}
\label{alg:criti_regression}
\begin{algorithmic}
    \STATE {\bfseries Require:} data buffer $\mathcal{D}$, batch size B, critic $Q_\theta$, the set of agents $\mathcal{N}$
    \FOR{iteration $k = 1, ..., M$}
    \FOR{all ordered subsets $\{i_1, i_2, ..., i_u\} \subseteq \mathcal{N}$}
    \STATE compute Monte Carlo return estimates $\mathcal{R}_{s, \boldsymbol{a}^{i_{1:u}}}$
    \begin{equation*}
        \mathcal{R}_{s, \boldsymbol{a}^{i_{1:u}}} = \sum_{\boldsymbol{a}^{-i_{1:u}} \in \mathcal{D}}\sum_{t=0}^{T}\gamma^t r_t
    \end{equation*}
    \STATE update estimated critic $Q_\theta^{i_{1:u}}$ by using
    \begin{align*}
        \mathop{\arg\min}\limits_{Q_{\boldsymbol{\mu}}^{i_{1:u}}} \mathbb{E}_{s, \boldsymbol{a}^{i_{1:u}} \sim \mathcal{D}} \left[\left\| \mathcal{R}_{s, \boldsymbol{a}^{i_{1:u}}} - Q_{\boldsymbol{\mu}}^{i_{1:u}} \right\|^{2} \right]
    \end{align*}
    \ENDFOR
    \ENDFOR
\end{algorithmic}
\end{algorithm}

\vspace{2em}

\begin{algorithm}[h!]
   \caption{\emph{ReAd-S}: Reinforced Advantage Feedback with Sequential Individual Plan Refinement}
   \label{alg:llm_planning_sequential}
\begin{algorithmic}
   \STATE {\bfseries Require:} agent name $u^1, ..., u^N$, task horizon $T$, refinement threshold $\alpha$, history buffer $H$, critic $Q_\theta$
   \STATE {\bfseries Denotation:} dialog $d$; agent $u^i$'s plan $a^i$
   \STATE initialize timestep $t \leftarrow 0$
   \STATE initialize observation $s_0 \leftarrow$ env$.$reset()
   \WHILE{$t < T$}
   \STATE initialize joint action $\boldsymbol{a}_t = \{\}$ and history $H=\{\}$
   \STATE set $\alpha \leftarrow 2\alpha$
   {\color{brown}
   \FOR{$i = 1, ..., N$}
   \STATE initialize the history of evaluated action-score pairs $\mathcal{P} = \{\}$
   \REPEAT
   \STATE $d, a_t^i \leftarrow$ LLMPrompt($H, s_t, u_t^i, \mathcal{P}$)
   \STATE $\mathbb{S}_{\rm ReAd-S}(a_t^i) = Q_\theta^{1:i}(s_t, \boldsymbol{a}^{1:i-1}_t,a_t^i) - Q_\theta^{1:i-1}(s_t, \boldsymbol{a}^{1:i-1}_t)$
   \STATE $\mathcal{P} \leftarrow \mathcal{P} \cup \{(s_t,\boldsymbol{a}^{1:i-1}_t,a_t^i, \mathbb{S}_{\rm ReAd-S}(a_t^i))\}$
   \STATE $\alpha \leftarrow \alpha/2$
   \UNTIL{$\mathbb{S}_{\rm ReAd-S}(a_t^i) > \alpha$}
   \STATE 
   $H \leftarrow H \cup \{d\}$
   \ENDFOR
   }
   \STATE $\sigma_t \leftarrow$ MotionPlanner($o_t, \boldsymbol{a}_t$)
   \STATE $o_{t+1}, done \leftarrow$ env$.$step($\sigma_t$)
   \IF{$done$ is $True$}
   \STATE \bfseries break
   \ENDIF
   \ENDWHILE
\end{algorithmic}
\end{algorithm}

\begin{algorithm}[h!]
   \caption{\emph{ReAd-J}: Reinforced Advantage Feedback with Joint Plan Refinement}
   \label{alg:llm_planning_joint}
\begin{algorithmic}
   \STATE {\bfseries Require:} agent name $u^1, ..., u^N$, task horizon $T$, pick action threshold $\alpha$, history buffer $H$, critic $Q_\theta$, discount factor $\gamma$
   \STATE {\bfseries Denotation:} dialog $d$; Joint WAIT action $\boldsymbol{w}$
   \STATE set $H = \{\}$
   \STATE initialize timestep $t \leftarrow 0$
   \STATE initialize observation $s_0 \leftarrow$ env$.$reset()
   \WHILE{$t < T$}
   \STATE set $\alpha \leftarrow 2\alpha$
   {\color{violet}
   \STATE initialize the history of evaluated action-score pairs $\mathcal{P} = \{\}$
   \REPEAT
   \STATE $d, \boldsymbol{a}_t \leftarrow$ LLMPrompt($H, s_t, [u^1, ..., u^N], \mathcal{P}$)
   \STATE $\mathbb{S}_{\rm ReAd-J}(\boldsymbol{a}_t) = Q_\theta(s_t, \boldsymbol{a}_t) - \frac{1}{\gamma} Q_\theta(s_t, \boldsymbol{w})$
   \STATE $\mathcal{P} \leftarrow \mathcal{P} \cup \{(s_t, \boldsymbol{a}_t, \mathbb{S}_{\rm ReAd-J}(\boldsymbol{a}_t) )\}$
   \STATE $\alpha \leftarrow  \alpha / 2$
   \UNTIL{$\mathbb{S}_{\rm ReAd-J}(\boldsymbol{a}_t) > \alpha$}
   }
   \STATE $H \leftarrow \{d\}$
   \STATE $\sigma_t \leftarrow$ MotionPlanner($o_t, \boldsymbol{a}_t$)
   \STATE $o_{t+1}, done \leftarrow$ env$.$step($\sigma_t$)
   \IF{$done$ is $True$}
   \STATE \bfseries break
   \ENDIF
   \ENDWHILE
\end{algorithmic}
\end{algorithm}

\newpage 
\section{Environment Details}
\label{env-detail}

We use Difficult Variants of RoCoBench (\emph{DV-RoCoBench}) adapted from RoCoBench \citep{mandi2023roco} and \emph{Overcooked-AI} \citep{micah2019overcooked_ai} in our experiments. \emph{DV-RoCoBench} involves three tasks: Sweep Floor, Make Sandwich and Sort Cubes. And we choose two representative scenarios -- Cramped Room and Forced Coordination from \emph{Overcooked-AI} in our experiments. In this section, we present a comprehensive overview of the task specifications along with the difficulty modifications we have made in \emph{DV-RoCoBench} and the scenario specifications in two scenarios of \emph{Overcooked-AI}.

As for \textbf{\emph{DV-RoCoBench}}, we directly inherit the action set and quantity of robots from RoCoBench, but design diverse task goals to introduce different difficulty levels. In original RoCoBench, the action set is not the same among different tasks.

As for \textbf{\emph{Overcooked-AI}}, different scenarios share the same action space but are initialized with different kitchen layouts.


\subsection{Sweep Floor}

\paragraph{Task Description.}
In this task, the two robots are positioned on opposite sides of the table. Each robot arm equipped with a dustpan and broom must collaborate to efficiently sweep all cubes of the designated color into the dustpan. Subsequently, the robot that holds the dustpan is responsible for disposing of the collected cubes in the trash bin. In this environment, two distinct types of robots with different action sets are used.
\begin{enumerate}
\item UR5E robot holding a dustpan (‘Alice’): can move to all cubes and can perform only three operations: MOVE, DUMP, and WAIT.
\item Franka Panda holding a broom (‘Bob’): can move to all cubes and can perform only three operations: MOVE, SWEEP, and WAIT.
\item Action sets: (\romannumeral1) MOVE [target]: target can only be a cube. (\romannumeral2) DUMP: pour all cubes in the dustpan into the trash bin. (\romannumeral3) SWEEP [target]: sweep the target cube into the dustpan. (\romannumeral4) WAIT.
\end{enumerate}
\paragraph{Difficulty Settings.} We shift the task goal from sweeping away all the cubes to sweeping away the cubes of a given color. We establish 5 distinct difficulty levels based on the number of cubes and the number of the target cubes. By increasing the difficulty level step by step, the quantity of all cubes and the cubes of a given color increase also gradually, as shown in Figure~\ref{env1-pic}.

\begin{figure}[ht]
\vskip 0.2in
\begin{center}
\centerline{\includegraphics[width=\columnwidth]{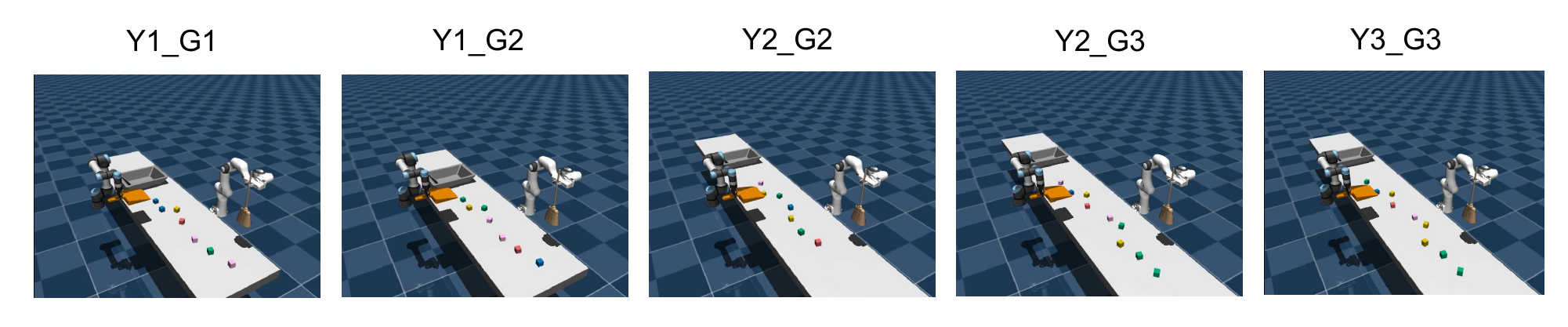}}
\caption{The initial states of the 5 difficulty levels in modified Sweep Floor. The yellow and green squares are the ones to be swept in this task. The first three tasks have a total of 7 squares, while the last two have 9. We assess task difficulty based on the number of cubes to be swept and the total cube number. For example, the Y1\_G1 in the figure represents 1 yellow cube and 1 green cube needs to be swept.}
\label{env1-pic}
\end{center}
\vskip -0.2in
\end{figure}

\subsection{Make Sandwich}

\paragraph{Task Description.}
In this task, two robots are positioned on opposite sides of a table to assemble a sandwich based on a given recipe, requiring collaborative effort to collect and stack the ingredients in the specified order until all components have been properly arranged. This environment accommodates two distinct types of robots capable of executing all actions in the action set. Each robot has a restricted range to manipulate the cubes. 
\begin{enumerate}
    \item UR5E robot (‘Chad’): can only retrieve the food on the right side.
    \item Humanoid robot (‘Dave’): can only retrieve the food on the left side.
    \item Action set: 1) PICK [object]: object must be a food. 2) PUT [object] on [target]: object must be a food and target could be a food, cutting\_board, or table. 3) WAIT.
\end{enumerate}
\paragraph{Difficulty Settings.} We establish 4 distinct difficulty levels dependent on the length of the recipe. A longer recipe requires more complex collaboration between humanoid and robot arm. The recipe lengths for these different settings are set to 3, 5, 7, and 9, respectively, as shown in Figure~\ref{env2-pic}.

\begin{figure}[ht]
\begin{center}
\centerline{\includegraphics[width=\columnwidth]{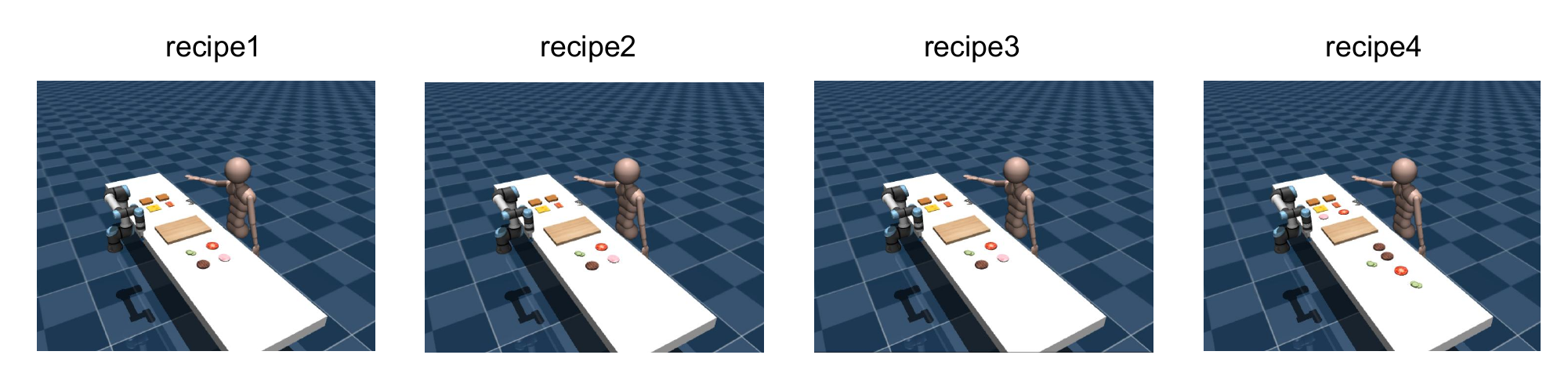}}
\caption{The initial states of the 4 difficulty levels in modified Make Sandwich. The initial three tasks shared the same food and layout, differing only in the length of the recipe. Conversely, the final task presented distinct food and layout, accompanied by a lengthier recipe. The recipe lengths for four tasks are set to 3, 5, 7, and 9, respectively.}
\label{env2-pic}
\end{center}
\vskip -0.2in
\end{figure}

\subsection{Sort Cubes}

\paragraph{Task Description.}
The task requires three robots positioned on opposite sides of a table to collaboratively place three target blocks in specific locations, utilizing their limited range of motion and assisting each other as needed. The current environment consists of three robots capable of executing all actions in the action set, albeit with limited mobility range.

\begin{enumerate}
    \item UR5E with robotic gripper (‘Alice’): must put the blue square on
panel2, can only reach: panel1, panel2, panel3.
    \item Franka Panda (‘Bob’): must put pink polygon on panel4, can
only reach: panel3, panel4, panel5.
    \item UR5E with suction gripper (`Chad'): must put yellow trapezoid
on panel6, can only reach: panel5, panel6, panel7.
    \item Action set: 1) PICK [object] PLACE [panelX]: the object must be a cube and panelX cannot be the target panel of another cube.  2) WAIT.
\end{enumerate}

\paragraph{Difficulty Settings.} We establish 5 difficulty levels based on the distance of the three blocks towards their corresponding target location. Since each robot has limited range of motion, picking further cube to the target location requires more complex collaboration between three robot arms.

\begin{figure}[h]
\begin{center}
\centerline{\includegraphics[width=\columnwidth]{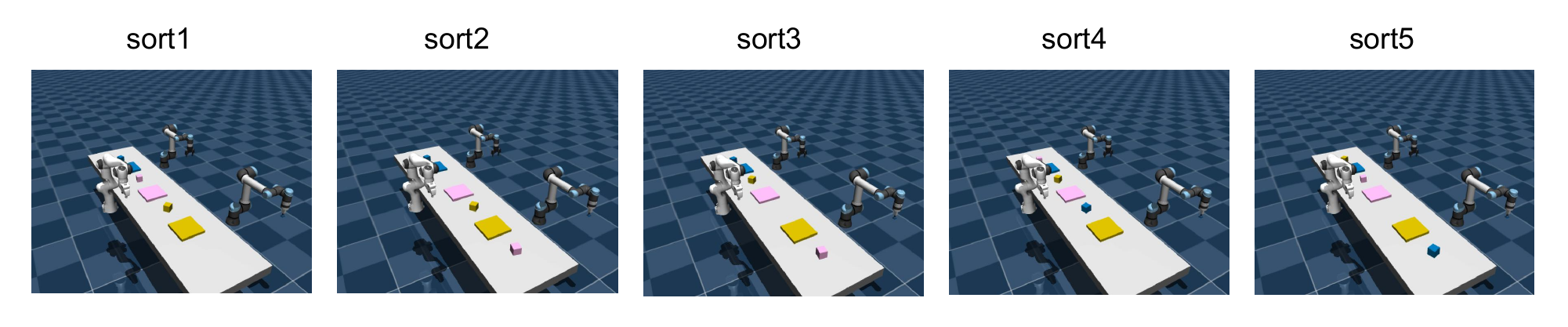}}

\caption {The initial states of the 5 difficulty levels in modified Sort Cubes. In these tasks, we orchestrated the initial placement of each block, and gauged difficulty based on the cumulative distance between the three blocks and the target panel. The shape of the three cubes was modified to avoid the robot's inability to pick up the objects due to their shape.}
\label{env3-pic}
\end{center}
\vskip -0.2in
\end{figure}

\subsection{Overcooked-AI}
\label{overcooked-detail}

\emph{Overcooked-AI} \citep{micah2019overcooked_ai} is a fully cooperative multi-agent benchmark environment based on the wildly popular video game Overcooked. In this environment, agents need to deliver soups as fast as possible. Each soup requires placing up to 3 ingredients in a pot, waiting for the soup to cook, and having an agent pick up the soup and deliver it. In details, two agents are originally required to make as much soup as possible in limited timesteps with high coordination efficiency.
Agents place a specified number of onions in a pot, leave them to cook for a specified number of timesteps, put the resulting soup in a dish, and serve it, giving all agents a reward.
The capacity of all agents to pick up items is 1.
Every agent can only carry 1 item such as the dish and the onion.

The environment consists of 5 different kitchen scenarios, covering from low-level motion coordination challenges to high-level strategy coordination challenges. In our experiment, we chose two representative scenarios: \textbf{Cramped Room} and \textbf{Forced Coordination}, and set the number of ingredients to make soups as 2 and the timesteps to cook as 2. 
Importantly, to enable measuring with the success rate metric, we modify the task as cooking and delivering a soup to the service counter within a specified number of timesteps.

The action set of this environment is as follows:
\begin{enumerate}
    \item north: agent moves one step north. If agent collides with another object, it will not move. 
    \item south: agent moves one step south. Same as the previous term.
    \item east: agent moves one step east. Same as the previous term.
    \item west: agent moves one step west. Same as the previous term.
    \item interact: agent interacts with a object, including picking up or putting down an item, turning on the cooking table, and putting the cooked soup in the dish.
    \item stay: agent does nothing.
\end{enumerate}
The first four actions (north, south, east, and west) cover the movement of the agent, and the interact action enables the interaction between the agent and other objects. We use Figure~\ref{env-over-pic} to explain the above rules:

\begin{figure}[h!]
\vskip -0.5in
\begin{center}
\centerline{\includegraphics[width=\columnwidth]{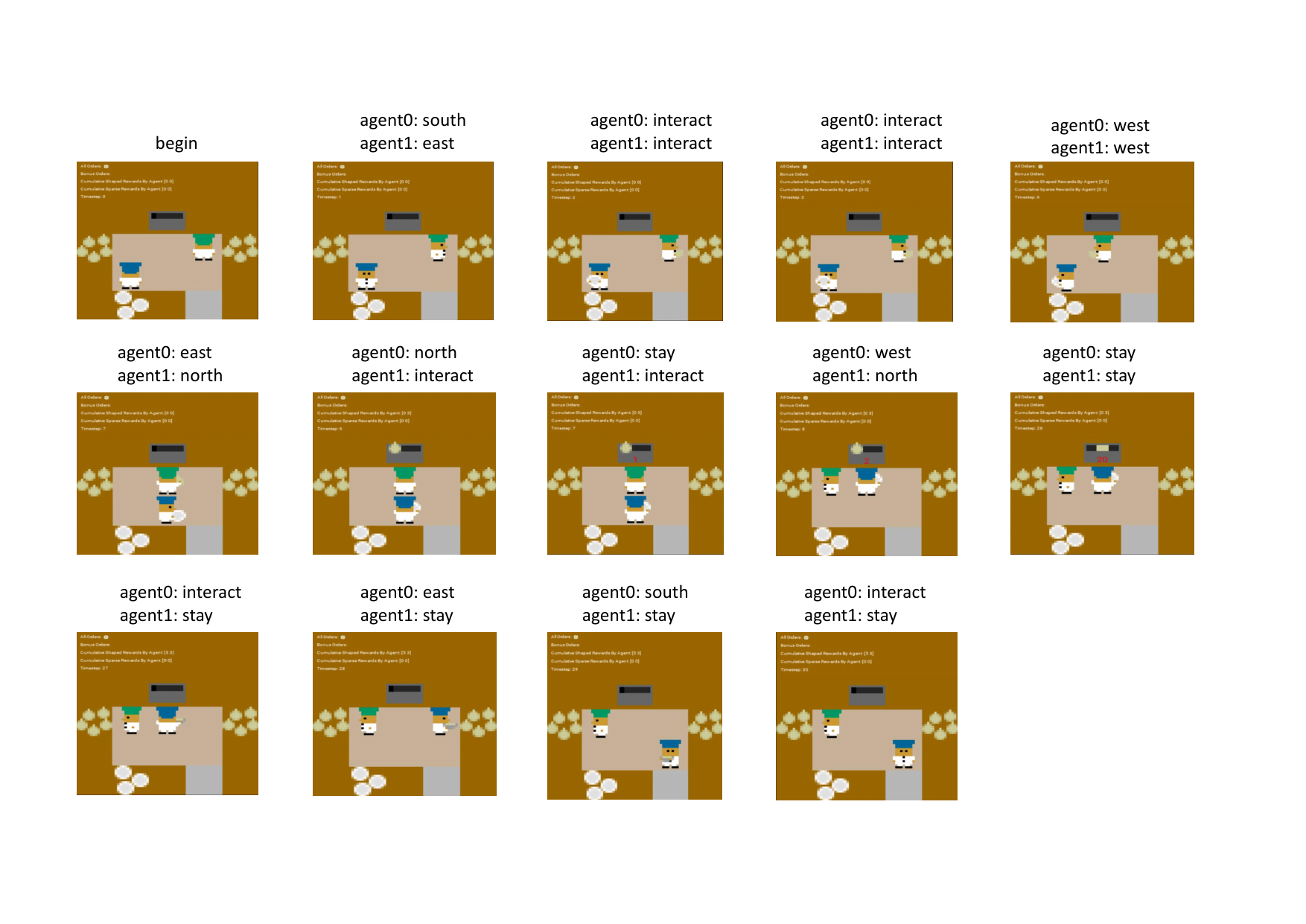}}
\vspace{-4em}
\caption{In 2nd frame, since both agents collide with the workbench, the agents merely change their current orientation. In 4th frame, since both agents have picked up an object in their hands, executing "interact" again will not pick up additional items. In 7th frame, agent1 places the onion on the cooking table. And in 8th frame, agent1 turns on the cooking table and starts cooking. In 10th and 11th frames, the soup is done and then put in a dish by agent0. In the last frame, agent0 serves the cooked soup.}
\label{env-over-pic}
\end{center}
\end{figure}

\paragraph{Cramped Room.} Two agents collaborate in a relatively small kitchen, and thus two agents must be extremely careful to avoid collisions in order to complete the cooking task as quickly as possible. The scenario is shown in the Figure~\ref{env-over-pic}.
\paragraph{Forced Coordination.} The working spaces of two agents are completely separated, where one agent only has access to the cooking table and the service counter and the other only has access to onions and dishes. The scenario is shown in the Figure~\ref{env-over-forced-pic}.

\begin{figure}[h!]
\vskip -1.6in
\begin{center}
\centerline{\includegraphics[width=\columnwidth]{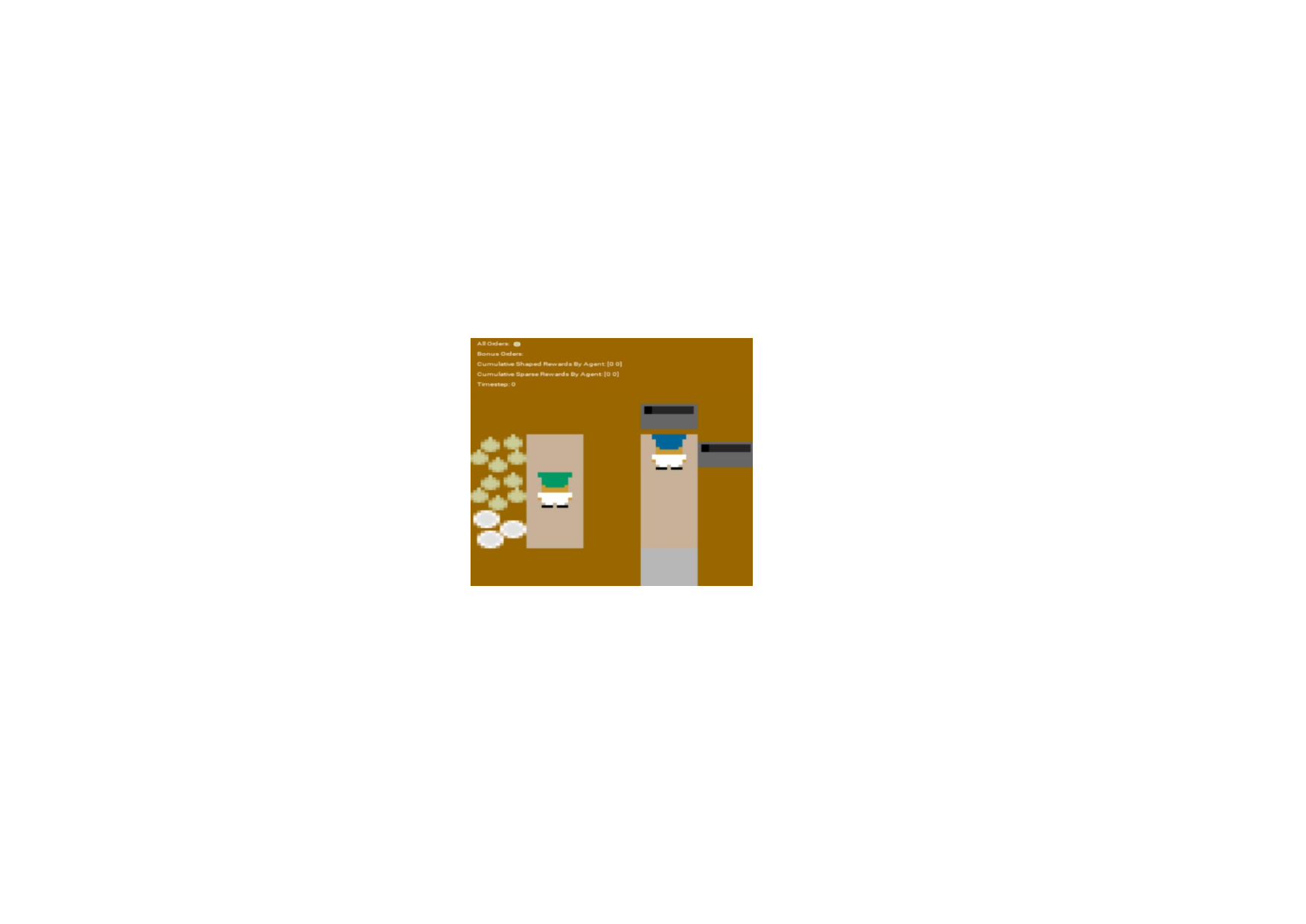}}
\vspace{-3.3cm}
\caption{In this task, agent0 must wait for agent1 to deliver the onion to the table before agent0 can place it on the cooking table, and after the soup is ready, agent0 must wait for agent1 to place the plate on the table before it can serve the soup and deliver it to the service table.}
\label{env-over-forced-pic}
\end{center}
\vspace{-1.5em}
\end{figure}

\section {Additional Experimental Results}\label{details exp}
In this section, we give the detailed experiment results of 3 tasks in \emph{DV-RoCoBench} and 2 scenarios in \emph{Overcooked-AI}. We also show the execution screenshots of our method and baselines in the representative environments.

\newpage
\subsection{Comparison among Chosen Algorithms}\label{app:baseline}

\begin{table*}[h!]
\vskip -0.1in
\caption{Overview of the key properties that distinguish four methods. (\romannumeral1) \textbf{State Type}: whether the environment state included in the prompt is global or not; (\romannumeral2) \textbf{Planning Scheme}: whether LLM output plans sequentially or not; (\romannumeral3) \textbf{History Info}: whether all the history before is reserved in the prompt or not.}
\vspace{-0.5em}
\begin{center}
\resizebox{\linewidth}{!}{%
\begin{small}
\begin{sc}
\setlength{\tabcolsep}{5mm}{
\begin{tabular}{ccccccccc}
\toprule
             & State Type    & Planning Scheme  & History Info         &  Feedback Type  \\ \midrule
RoCo         & partial       & Sequential       & all previous rounds  &  Physical Verification  \\
ReAd-S       & partial       & Sequential       & last round           &  Advantage Score  \\ \midrule
Central-Plan & global        & Parallel         & all previous rounds  &  Physical Verification   \\
ReAd-J       & global        & Parallel         & last round           &  Advantage Score  \\ 
ReAct & global        & Parallel         & all previous rounds  &  Physical Verification \\
Reflexion & global        & Parallel         & all previous rounds  &  Physical Verification \\
MindAgent & global        & Parallel         & all previous rounds  &  Physical Verification \\ \bottomrule
\end{tabular}}
\end{sc}
\end{small}
}
\end{center}
\label{prompt-compose}
\vskip -0.2in
\end{table*}

\subsection{Additional Experiments on adapted Overcooked-AI}\label{appendix:overcooked_results}
We impose the maximum number of environment steps per episode to 20 in Cramped Room, and 25 in Forced Coordination. Specifically, for our adapted \textbf{Cramped Room} and \textbf{Forced Coordination}, we deliberately set the maximum environment steps almost equal to the least number of environment steps for accomplishing the task, thereby presenting a challenge for highly effective coordination. 

Due to the expensive cost of sequential planning with more environment steps in \emph{Overcooked-AI}, we only evaluate the performance of methods that generate joint plans in a parallel manner. Figure~\ref{main-result-figure-2} illustrates the results on the adapted \emph{Overcooked-AI}. As shown in Figure~\ref{main-result-figure-2}, our methods achieve a significantly higher SR compared with the methods relying on \emph{physical verification} as feedback in \emph{Overcooked-AI}. Due to the heavy coordination challenges inherent to \emph{Overcooked-AI}, LLM-based agents cannot advance toward task completion unless the LLM planner generates highly collaborative plans.

\begin{figure*}[h]
\begin{center}
\centerline{\includegraphics[width=1.0\columnwidth]{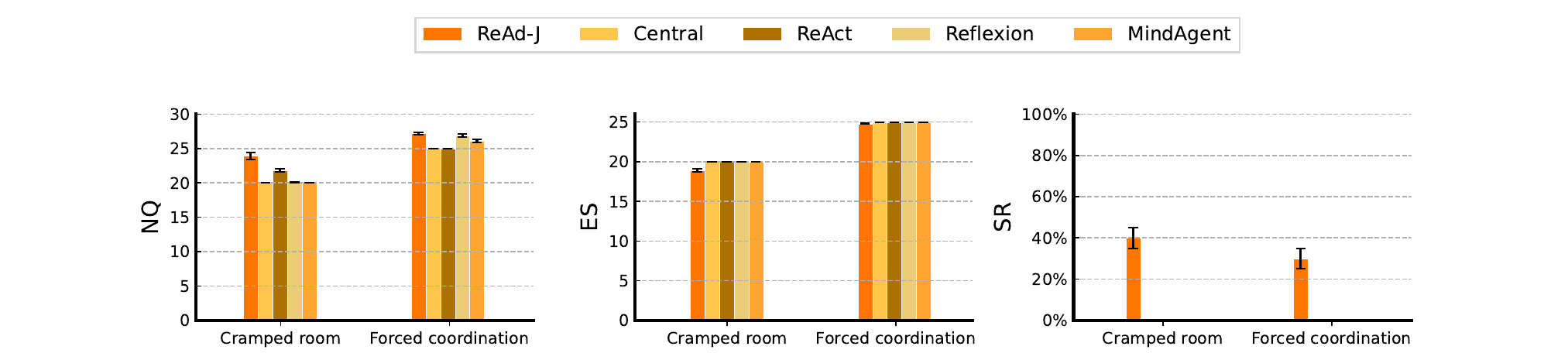}}
\vspace{-0.7em}
\caption{
We report mean SR ($\boldsymbol{\uparrow}$), ES ($\boldsymbol{\downarrow}$), and NQ ($\boldsymbol{\downarrow}$) in two scenarios of \emph{Overcooked-AI} averaged over 10 random seeds. The detailed score is given in Table \ref{main-result-table-2} of \S\ref{app:main-result}.
}
\label{main-result-figure-2}
\end{center}
\vspace{-2.5em}
\end{figure*}

\subsection{Detailed Results on DV-RoCoBench and Overcooked-AI}
\label{app:main-result}

The results of all experiments are shown in Table~\ref{main-result-table-1}, and Table~\ref{main-result-table-2}. SR, NQ and ES represent success rates, the average number of requests to LLMs, and rounds of environment interactions, respectively. We have provided a detailed introduction to these metrics in \S\ref{section:5-1}.

\begin{table}[t]
\vspace{-2em}
\caption{The detailed results of the comparison in different tasks with various difficulty levels in \emph{DV-RoCoBench}. The mean value and standard error are calculated over 10 random seeds.}
\vspace{-1em}
\begin{center}
\begin{small}
\begin{sc}
\scriptsize

\begin{adjustbox}{width=1\textwidth}
\begin{tabular}{ccccccccccclll}
\hline
\multicolumn{2}{c}{}                                     & \multicolumn{3}{c}{\textbf{RoCo}}                                & \multicolumn{3}{c}{\textbf{ReAct}}                               & \multicolumn{3}{c}{\textbf{Central Plan}}                          & \multicolumn{3}{c}{\textbf{Reflexion}}                                                                  \\ \hline
                          & \multicolumn{1}{c|}{}        & SR        & NQ         & \multicolumn{1}{c|}{ES}        & SR        & NQ         & \multicolumn{1}{c|}{ES}        & SR        & NQ           & \multicolumn{1}{c|}{ES}        & \multicolumn{1}{c}{SR}        & \multicolumn{1}{c}{NQ}         & \multicolumn{1}{c}{ES}        \\ \hline
\multirow{5}{*}{\textbf{sweep}}    & \multicolumn{1}{c|}{Y1\_G1}  & 0.9±0.32  & 14.4±5.95  & \multicolumn{1}{c|}{6.2±3.12}  & 1.0±0.00  & 5.5±0.50   & \multicolumn{1}{c|}{5.5±0.50}  & 0.4±0.52  & 15.3±0.48    & \multicolumn{1}{c|}{11.2±4.92} & \multicolumn{1}{c}{1.0±0.00}  & \multicolumn{1}{c}{5.0±0.00}   & \multicolumn{1}{c}{5.0±0.00}  \\
                          & \multicolumn{1}{c|}{Y1\_G2}  & 1.0±0.00  & 24.2±4.18  & \multicolumn{1}{c|}{8.9±1.45}  & 1.0±0.00  & 8.2±0.25   & \multicolumn{1}{c|}{8.2±0.25}  & 1.0±0.00  & 7.8±1.99     & \multicolumn{1}{c|}{7.8±1.99}  & \multicolumn{1}{c}{1.0±0.00}  & \multicolumn{1}{c}{7.0±0.00}   & \multicolumn{1}{c}{7.0±0.00}  \\
                          & \multicolumn{1}{c|}{Y2\_G2}  & 1.0±0.00  & 29.1±5.40  & \multicolumn{1}{c|}{10.6±1.35} & 1.0±0.00  & 10.0±0.00  & \multicolumn{1}{c|}{10.0±0.00} & 0.8±0.42  & 12.7±1.77    & \multicolumn{1}{c|}{12.7±1.77} & \multicolumn{1}{c}{1.0±0.00}  & \multicolumn{1}{c}{10.1±0.10}  & \multicolumn{1}{c}{10.0±0.00} \\
                          & \multicolumn{1}{c|}{Y2\_G3}  & 0.7±0.48  & 36.7±6.63  & \multicolumn{1}{c|}{13.5±1.27} & 0.6±0.16  & 14.4±0.67  & \multicolumn{1}{c|}{13.8±0.33} & 0.2±0.42  & 14.6±0.97    & \multicolumn{1}{c|}{14.6±0.97} & \multicolumn{1}{c}{0.7±0.15}  & \multicolumn{1}{c}{14.3±0.87}  & \multicolumn{1}{c}{12.9±0.48} \\
                          & \multicolumn{1}{c|}{Y3\_G3}  & 0.6±0.52  & 41.8±7.73  & \multicolumn{1}{c|}{14.7±0.48} & 0.4±0.16  & 15.2±0.25  & \multicolumn{1}{c|}{14.9±0.32} & 0.0±0.00  & 15.0±0.00    & \multicolumn{1}{c|}{15.0±0.00} & \multicolumn{1}{c}{0.3±0.15}  & \multicolumn{1}{c}{15.1±0.23}  & \multicolumn{1}{c}{14.9±0.10} \\ \hline
\multirow{4}{*}{\textbf{sandwich}} & \multicolumn{1}{c|}{recipe1} & 1.0±0.00  & 13.2±3.74  & \multicolumn{1}{c|}{4.7±0.67}  & 1.0±0.00  & 4.0±0.00   & \multicolumn{1}{c|}{4.0±0.00}  & 1.0±0.00  & 6.2±0.63     & \multicolumn{1}{c|}{4.0±0.00}  & \multicolumn{1}{c}{1.0±0.00}  & \multicolumn{1}{c}{5.0±0.00}   & \multicolumn{1}{c}{4.0±0.00}  \\
                          & \multicolumn{1}{c|}{recipe2} & 0.9±0.32  & 28.9±11.25 & \multicolumn{1}{c|}{9.1±2.42}  & 1.0±0.00  & 6.0±0.00   & \multicolumn{1}{c|}{6.0±0.00}  & 1.0±0.00  & 8.2±0.42     & \multicolumn{1}{c|}{6.0±0.00}  & \multicolumn{1}{c}{1.0±0.00}  & \multicolumn{1}{c}{6.8±0.13}   & \multicolumn{1}{c}{6.0±0.00}  \\
                          & \multicolumn{1}{c|}{recipe3} & 0.8±0.42  & 33.7±10.00 & \multicolumn{1}{c|}{11.5±2.99} & 0.7±0.15  & 12.9±2.61  & \multicolumn{1}{c|}{10.1±1.07} & 1.0±0.00  & 10.2±0.42    & \multicolumn{1}{c|}{8.0±0.00}  & \multicolumn{1}{c}{0.6±0.16}  & \multicolumn{1}{c}{14.9±2.47}  & \multicolumn{1}{c}{10.8±1.14} \\
                          & \multicolumn{1}{c|}{recipe4} & 0.5±0.53  & 43.1±17.84 & \multicolumn{1}{c|}{13.1±2.47} & 0.6±0.16  & 16.7±2.60  & \multicolumn{1}{c|}{12.5±0.75} & 0.4±0.52  & 80.5±53.35   & \multicolumn{1}{c|}{14.2±1.14} & \multicolumn{1}{c}{0.5±0.17}  & \multicolumn{1}{c}{17.7±2.39}  & \multicolumn{1}{c}{13.1±0.67} \\ \hline
\multirow{5}{*}{\textbf{sort}}     & \multicolumn{1}{c|}{sort1}   & 1.0±0.00  & 3.3±0.95   & \multicolumn{1}{c|}{1.1±0.32}  & 1.0±0.00  & 1.2±0.13   & \multicolumn{1}{c|}{1.0±0.00}  & 1.0±0.00  & 1.0±0.00     & \multicolumn{1}{c|}{1.0±0.00}  & \multicolumn{1}{c}{1.0±0.00}  & \multicolumn{1}{c}{1.2±0.13}   & \multicolumn{1}{c}{1.0±0.00}  \\
                          & \multicolumn{1}{c|}{sort2}   & 1.0±0.00  & 13.5±4.67  & \multicolumn{1}{c|}{3.4±0.52}  & 0.6±0.16  & 14.8±4.56  & \multicolumn{1}{c|}{7.8±1.96}  & 1.0±0.00  & 16.9±9.13    & \multicolumn{1}{c|}{2.6±0.52}  & \multicolumn{1}{c}{1.0±0.00}  & \multicolumn{1}{c}{5.5±0.48}   & \multicolumn{1}{c}{2.9±0.10}  \\
                          & \multicolumn{1}{c|}{sort3}   & 1.0±0.00  & 18.6±15.10 & \multicolumn{1}{c|}{4.9±2.60}  & 0.8±0.13  & 19.4±6.18  & \multicolumn{1}{c|}{6.4±1.45}  & 1.0±0.00  & 8.3±4.32     & \multicolumn{1}{c|}{2.3±0.95}  & \multicolumn{1}{c}{1.0±0.00}  & \multicolumn{1}{c}{6.6±0.50}   & \multicolumn{1}{c}{4.7±0.33}  \\
                          & \multicolumn{1}{c|}{sort4}   & 1.0±0.00  & 24.8±9.37  & \multicolumn{1}{c|}{6.4±1.78}  & 0.8±0.13  & 24.0±11.31 & \multicolumn{1}{c|}{6.1±1.49}  & 1.0±0.00  & 37.2±25.05   & \multicolumn{1}{c|}{7.1±2.77}  & \multicolumn{1}{c}{0.7±0.13}  & \multicolumn{1}{c}{19.2±6.83}  & \multicolumn{1}{c}{7.1±1.45}  \\
                          & \multicolumn{1}{c|}{sort5}   & 1.0±0.00  & 38.5±9.96  & \multicolumn{1}{c|}{7.4±2.95}  & 0.7±0.15  & 17.3±3.00  & \multicolumn{1}{c|}{8.4±1.59}  & 0.6±0.52  & 128.4±115.99 & \multicolumn{1}{c|}{11.0±3.97} & \multicolumn{1}{c}{0.8±0.13}  & \multicolumn{1}{c}{13.9±3.27}  & \multicolumn{1}{c}{6.9±1.43}  \\ \hline
\multicolumn{2}{c|}{average}                             & \textbf{0.89±0.19} & 25.99±8.06 & \multicolumn{1}{c|}{8.25±1.74} & 0.80±0.09 & 12.11±2.29 & \multicolumn{1}{c|}{8.19±0.69} & 0.74±0.17 & 25.88±15.32  & \multicolumn{1}{c|}{8.39±1.36} & \multicolumn{1}{c}{0.83±0.06} & \multicolumn{1}{c}{10.16±1.24} & \multicolumn{1}{c}{7.59±0.41} \\ \hline \hline
\multicolumn{1}{l}{}      & \multicolumn{1}{l}{}         & \multicolumn{3}{c|}{\textbf{Mind}}                               & \multicolumn{3}{c|}{\textbf{ReAd-S}}                             & \multicolumn{3}{c}{\textbf{ReAd-J}}                                &                               &                                &                               \\ \cline{1-11}
\multicolumn{1}{l}{}      & \multicolumn{1}{l|}{}        & SR        & NQ         & \multicolumn{1}{c|}{ES}        & SR        & NQ         & \multicolumn{1}{c|}{ES}        & SR        & NQ           & ES                             &                               &                                &                               \\ \cline{1-11}
\multirow{5}{*}{sweep}    & \multicolumn{1}{c|}{Y1\_G1}  & 1.0±0.00  & 5.0±0.00   & \multicolumn{1}{c|}{5.0±0.00}  & 1.0±0.00  & 10.4±0.52  & \multicolumn{1}{c|}{5.0±0.00}  & 1.0±0.00  & 5.9±0.99     & 5.0±0.00                       &                               &                                &                               \\
                          & \multicolumn{1}{c|}{Y1\_G2}  & 1.0±0.00  & 7.1±0.10   & \multicolumn{1}{c|}{7.1±0.10}  & 1.0±0.00  & 14.4±0.84  & \multicolumn{1}{c|}{7.0±0.00}  & 1.0±0.00  & 7.6±0.70     & 7.0±0.00                       &                               &                                &                               \\
                          & \multicolumn{1}{c|}{Y2\_G2}  & 1.0±0.00  & 9.9±0.18   & \multicolumn{1}{c|}{9.8±0.13}  & 1.0±0.00  & 19.9±3.28  & \multicolumn{1}{c|}{9.4±0.70}  & 1.0±0.00  & 13.0±4.32    & 9.0±0.00                       &                               &                                &                               \\
                          & \multicolumn{1}{c|}{Y2\_G3}  & 0.7±0.15  & 13.4±0.48  & \multicolumn{1}{c|}{13.4±0.48} & 0.9±0.32  & 26.8±5.20  & \multicolumn{1}{c|}{12.2±1.32} & 1.0±0.00  & 16.4±6.02    & 11.7±1.49                      &                               &                                &                               \\
                          & \multicolumn{1}{c|}{Y3\_G3}  & 0.2±0.13  & 15.1±0.10  & \multicolumn{1}{c|}{15.0±0.00} & 0.8±0.42  & 31.4±3.50  & \multicolumn{1}{c|}{14.0±0.82} & 0.8±0.42  & 16.4±1.71    & 13.4±0.84                      &                               &                                &                               \\ \cline{1-11}
\multirow{4}{*}{\textbf{sandwich}} & \multicolumn{1}{c|}{recipe1} & 1.0±0.00  & 5.1±0.10   & \multicolumn{1}{c|}{4.0±0.00}  & 1.0±0.00  & 10.5±4.74  & \multicolumn{1}{c|}{4.2±0.42}  & 1.0±0.00  & 4.3±0.48     & 4.0±0.00                       &                               &                                &                               \\
                          & \multicolumn{1}{c|}{recipe2} & 1.0±0.00  & 6.6±0.16   & \multicolumn{1}{c|}{6.0±0.00}  & 1.0±0.00  & 14.5±2.46  & \multicolumn{1}{c|}{6.4±0.52}  & 1.0±0.00  & 6.5±0.85     & 6.0±0.00                       &                               &                                &                               \\
                          & \multicolumn{1}{c|}{recipe3} & 0.7±0.16  & 12.4±1.92  & \multicolumn{1}{c|}{10.1±1.07} & 1.0±0.00  & 22.1±5.22  & \multicolumn{1}{c|}{8.9±0.88}  & 1.0±0.00  & 14.6±8.04    & 8.9±1.00                       &                               &                                &                               \\
                          & \multicolumn{1}{c|}{recipe4} & 0.6±0.16  & 16.5±2.24  & \multicolumn{1}{c|}{12.7±0.72} & 1.0±0.00  & 27.9±8.06  & \multicolumn{1}{c|}{11.1±1.73} & 1.0±0.00  & 10.8±0.42    & 10.0±0.00                      &                               &                                &                               \\ \cline{1-11}
\multirow{5}{*}{\textbf{sort}}     & \multicolumn{1}{c|}{sort1}   & 1.0±0.00  & 1.2±0.13   & \multicolumn{1}{c|}{1.0±0.00}  & 1.0±0.00  & 3.4±0.52   & \multicolumn{1}{c|}{1.0±0.00}  & 1.0±0.00  & 1.1±0.32     & 1.1±0.32                       &                               &                                &                               \\
                          & \multicolumn{1}{c|}{sort2}   & 1.0±0.00  & 6.1±1.12   & \multicolumn{1}{c|}{3.2±0.33}  & 1.0±0.00  & 10.8±2.53  & \multicolumn{1}{c|}{3.1±0.32}  & 1.0±0.00  & 7.3±2.91     & 3.3±0.48                       &                               &                                &                               \\
                          & \multicolumn{1}{c|}{sort3}   & 0.8±0.13  & 11.1±3.70  & \multicolumn{1}{c|}{6.2±1.54}  & 1.0±0.00  & 17.5±2.80  & \multicolumn{1}{c|}{3.9±0.57}  & 1.0±0.00  & 8.3±3.80     & 3.4±0.84                       &                               &                                &                               \\
                          & \multicolumn{1}{c|}{sort4}   & 0.9±0.10  & 22.6±9.62  & \multicolumn{1}{c|}{5.9±1.12}  & 1.0±0.00  & 21.6±7.07  & \multicolumn{1}{c|}{3.7±0.67}  & 1.0±0.00  & 18.8±6.29    & 4.3±0.95                       &                               &                                &                               \\
                          & \multicolumn{1}{c|}{sort5}   & 0.8±0.13  & 18.0±4.12  & \multicolumn{1}{c|}{7.8±1.35}  & 1.0±0.00  & 33.5±6.35  & \multicolumn{1}{c|}{6.1±0.88}  & 1.0±0.00  & 17.3±11.87   & 4.4±1.26                       &                               &                                &                               \\ \cline{1-11}
\multicolumn{2}{c|}{average}                             & 0.84±0.07 & 10.72±1.71 & \multicolumn{1}{c|}{7.66±0.49} & \textbf{0.98±0.05} & 18.91±3.79 & \multicolumn{1}{c|}{6.86±0.63} & \textbf{0.99±0.03} & 10.59±3.48   & 6.54±0.51                      &                               &                                &                               \\ \cline{1-11}
\end{tabular}
\end{adjustbox}
\label{main-result-table-1}
\end{sc}
\end{small}
\end{center}
\end{table}

\begin{table}[h]
\caption{The detailed results of the comparison in two scenarios in \emph{Overcooked-AI}. The mean value and standard error are calculated over 10 random seeds.} 
\vspace{-1em}
\begin{center}
\begin{small}
\begin{sc}
\scriptsize
\begin{adjustbox}{width=1\textwidth}
\begin{tabular}{@{}c|ccc|ccc|ccc@{}}
\hline
\multicolumn{1}{l}{}           & \multicolumn{3}{c}{Cramped\_room}                     & \multicolumn{3}{c}{Forced\_coordination}              & \multicolumn{3}{c}{average}      \\ \hline
\multicolumn{1}{c|}{}          & SR      & NQ       & \multicolumn{1}{c|}{ES}      & SR      & NQ       & \multicolumn{1}{c|}{ES}      & SR      & NQ       & ES      \\ \hline
\multicolumn{1}{c|}{ReAct}     & 0.0±0.00 & 20.1±0.10 & \multicolumn{1}{c|}{20.0±0.00} & 0.0±0.00 & 26.9±0.75 & \multicolumn{1}{c|}{25.0±0.00} & 0.00±0.00 & 23.50±0.43 & 22.50±0.00 \\
\multicolumn{1}{c|}{Reflexion} & 0.0±0.00 & 20.0±0.00 & \multicolumn{1}{c|}{20.0±0.00} & 0.0±0.00 & 26.1±0.60 & \multicolumn{1}{c|}{25.0±0.00} & 0.00±0.00 & 23.05±0.30 & 22.50±0.00 \\
\multicolumn{1}{c|}{MindAgent} & 0.0±0.00 & 20.8±0.47 & \multicolumn{1}{c|}{20.0±0.00} & 0.0±0.00 & 26.9±0.80 & \multicolumn{1}{c|}{25.0±0.00} & 0.00±0.00 & 23.85±0.64 & 22.50±0.00 \\
\multicolumn{1}{c|}{Central}   & 0.0±0.00 & 20.0±0.00 & \multicolumn{1}{c|}{20.0±0.00} & 0.0±0.00 & 25.0±0.00 & \multicolumn{1}{c|}{25.0±0.00} & 0.00±0.00 & 22.50±0.00 & 22.50±0.00 \\
\multicolumn{1}{c|}{Read-J}    & \textbf{0.4±0.16} & 23.9±1.49 & \multicolumn{1}{c|}{18.9±0.59} & \textbf{0.3±0.15} & 27.2±0.53 & \multicolumn{1}{c|}{24.8±0.20} & \textbf{0.35±0.16} & 25.55±1.01 & 21.85±0.40 \\ \hline

\end{tabular}
\end{adjustbox}
\label{main-result-table-2}
\end{sc}
\end{small}
\end{center}
\end{table}

\subsection{Extended Experiment with Llama-3.1-70B-Instruct}\label{app:open_source_llm}
Here, we instead use Llama-3.1-70B-Instruct \citep{dubey2024llama3} as the basic LLM policy to validate that our algorithm can improve the performance of not only the closed-source models but also the open-source models. We select \emph{Y2\_G3} as the task for evaluation, and compare our \emph{ReAd-J} with other baselines including Central Plan, ReAct, Reflexion and MindAgent. The result is reported in Table~\ref{table:open_source_llm}. In terms of the prompt and generation parameters of Llama 3.1-70B in additional experiments, we keep the prompt essentially unchanged. We coarsely search for suitable parameters for the Llama 3.1 70B instruct model. The current generation parameters are determined by a simple grid search on them. Finally, we set the temperature as 0 and $\text{top}_p$ as 0.1.

Most methods have a 10\%-20\% decline in SR, with a slight increase in NQ and ES. Judging from the performance of task \emph{Y2\_G3}, GPT-4 has better performance than the Llama-3.1-70B-Instruct. Although using an open-source model like Llama 3.1-70B might result in suboptimal performance, our \emph{ReAd-J} significantly outperforms other baselines based on the same LLM, demonstrating the effectiveness of our method.

\begin{table*}[h]
\caption{The detailed result of the comparison in the task \emph{Y2\_G3} with Llama-3.1-70B-Instruct as the basic LLM.}
\centering
\begin{tabular}{cccccc}
\toprule     
& ReAd-J & Central Plan  &  ReAct & Reflexion & MindAgent \\
\midrule 
SR & 0.9±0.10 & 0.0±0.00   &  0.4±0.16 & 0.5±0.17 & 0.7±0.15 \\
NQ & 13.6±0.56 & 15.0±0.00 & 15.0±0.00 &  13.7±0.37 & 14.3±0.15 \\
ES & 11.8±0.42 & 15.0±0.00 & 15.0±0.00 & 13.6±0.43 & 14.3±0.15 \\
\bottomrule     
\end{tabular}
\label{table:open_source_llm}
\end{table*}

\subsection{Visualization of Robustness Evaluation}
\label{rob-test-graph}
We visualize the robustness comparison between \emph{ReAd-S} and \emph{RoCo} for accomplishing \emph{Make Sandwich} recipe3 task when the environment resets at timestep $n=2$, as shown in Figure \ref{rob-our} and Figure \ref{rob-roco}.

\begin{figure}[h]
\vskip -0.1in
\begin{center}
\centerline{\includegraphics[width=0.9\columnwidth]{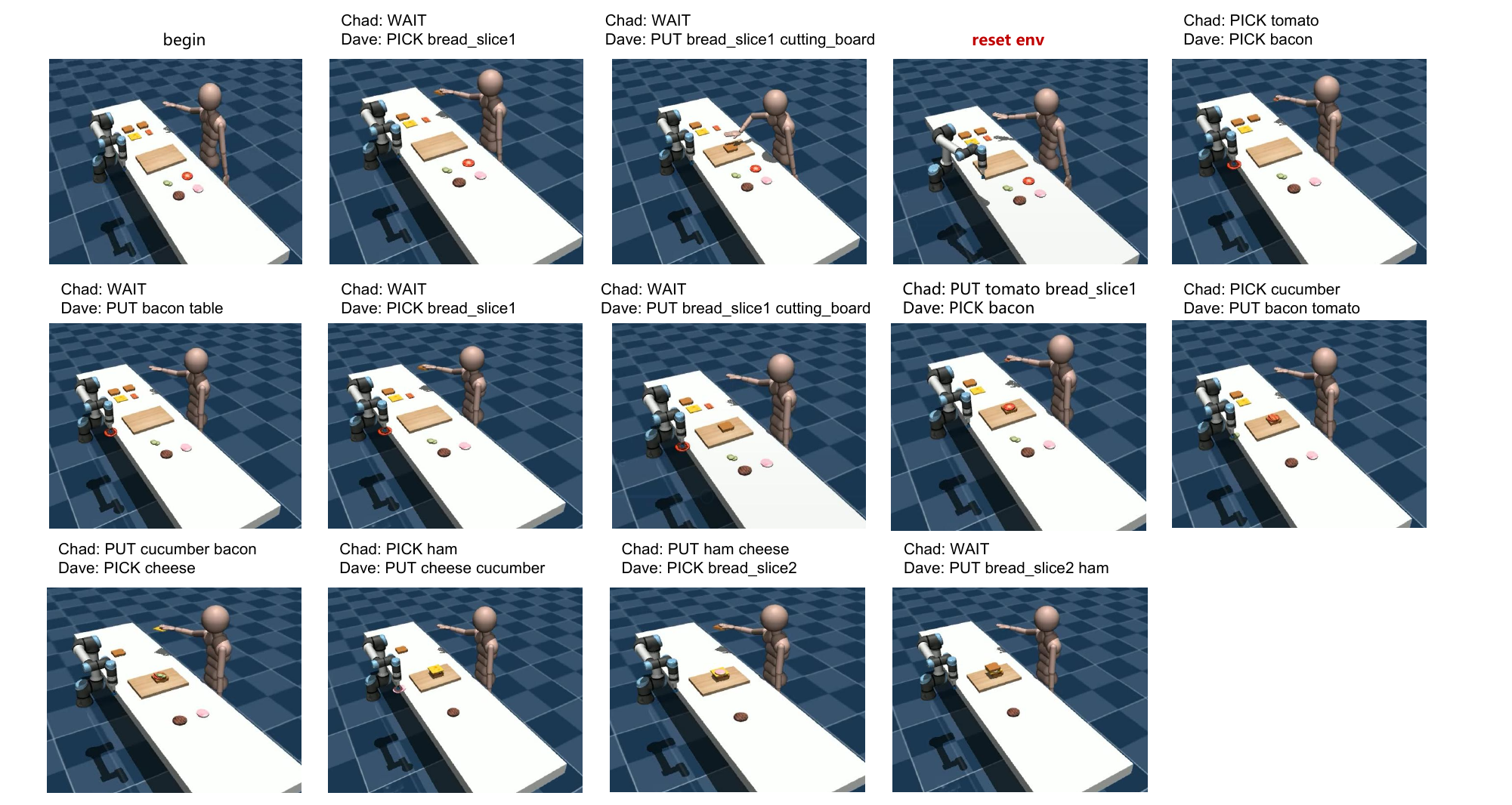}}
\vspace{-0.5em}
\caption{Screenshots of ReAd-S completing the recipe3 task in robustness test. After the environment is reset, our method will be affected by the historical dialogue information in a short period. After being prompted by the advantage function re-evaluated in the new state, our method can make a rapid re-plan based on the new state.}
\label{rob-our}
\end{center}
\vspace{-1em}
\end{figure}

\begin{figure}[h]
\vskip -0.15in
\begin{center}
\centerline{\includegraphics[width=\columnwidth]{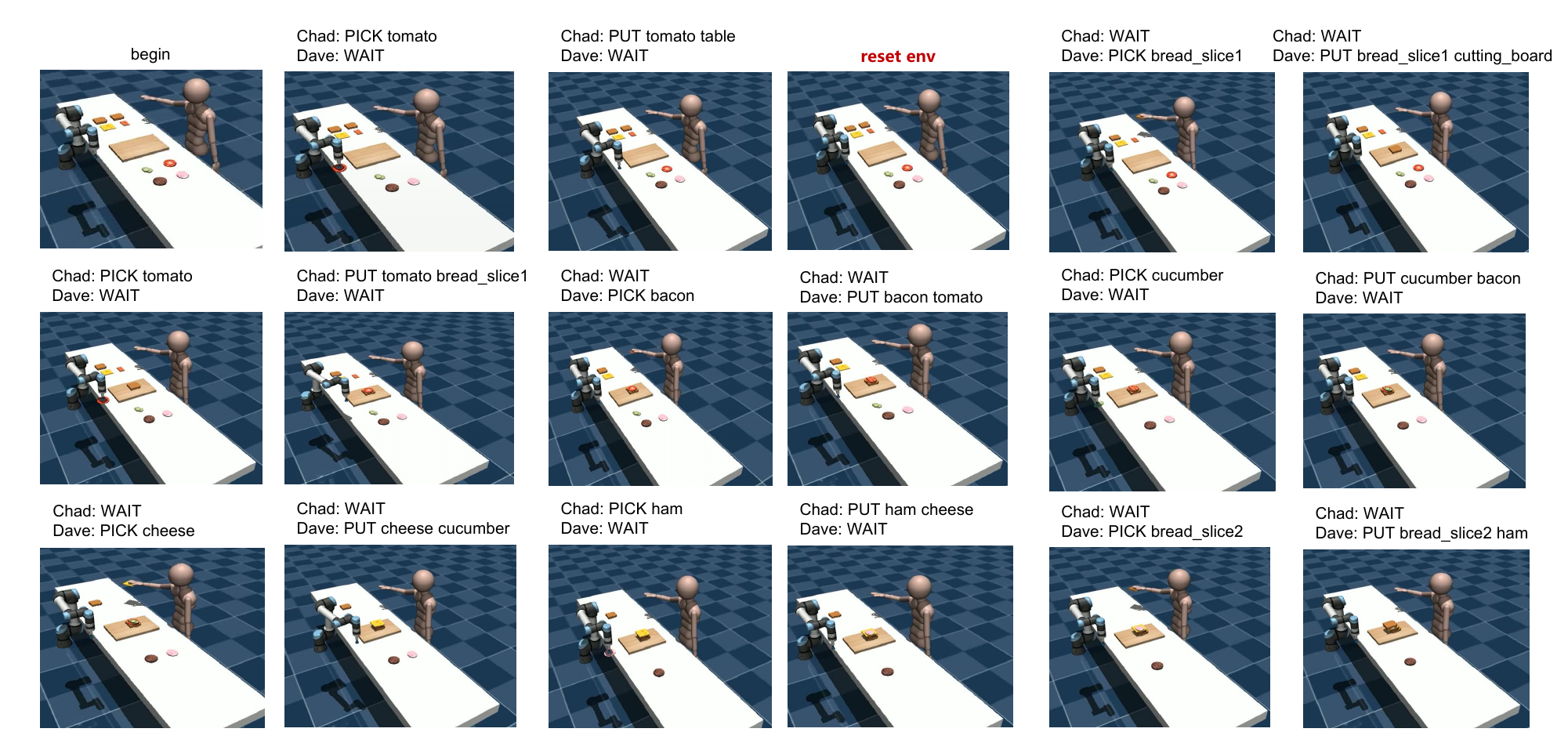}}
\vspace{-0.5em}
\caption{Screenshots of RoCo completing the recipe3 task in robustness test. RoCo needs more steps to recover from the environmental disturbance. Since the reset information is not included in the history, RoCo will be misled by historical information and require multi-round physical feedback to adjust the plan.}
\label{rob-roco}
\end{center}
\vspace{-1em}
\end{figure}

\clearpage
\subsection{Dataset and Critic Network}\label{app:ablation}
\paragraph{Dataset Collection Details.} 
The advantage function relies on the Monte-Carlo estimation of value function with access to an offline dataset collected by $\boldsymbol{\pi}_{\rm llm}$. In practice, we employ two techniques to enhance the quality of the collected dataset. (\romannumeral1) We perform data collection using an LLM planner with physical verification, inspired by the RoCo policy \citep{mandi2023roco}, which ensures the acquisition of high-quality interaction samples. (\romannumeral2) Additionally, to address the limited state coverage issue that may arise from directly rolling out the $\boldsymbol{\pi}_{\rm llm}$ policy, we intentionally reset the environment state to an unreachable state and initiate LLM-planning from that point. 

Given that our theoretical analysis demonstrates that our method can achieve a superior policy compared to the behavior policy $\boldsymbol{\mu}$ through advantage-weighted regression, it is natural to consider whether a better behavior policy than $\boldsymbol{\pi}_{\rm llm}$ can be utilized for dataset collection, potentially leading to further policy improvement during optimization. Subsequently, we conduct an ablation study utilizing a mixed dataset collected by an \emph{expert policy} and an \emph{LLM policy}. Our preliminary findings indicate that the inclusion of additional optimal data does not result in performance improvement. We hypothesize that two reasons contribute to these unexpected results. (\romannumeral1) The incorporation of data from a different policy introduces increased variance in Monte-Carlo estimation, thereby reducing the stability of the value functions. Consequently, the value function may produce high-variance outputs, potentially leading to misleading optimization of the LLM planner as prompts. (\romannumeral2) The LLM planner equipped with enhanced augmentation techniques achieves improved data coverage of the resulting policy. In contrast, the optimal policy is more deterministic, leading to more limited state coverage, which poses challenges for value estimation of out-of-distribution (OOD) states and actions in LLM planning. This issue bears resemblance to the distribution shift problem encountered in offline RL \citep{levine2020offline, xie2021bellman}.

We describe the differences between \emph{expert policy} and an \emph{LLM policy} in detail here.
\begin{itemize} [leftmargin=10pt, topsep=0pt,itemsep=1pt,partopsep=1pt, parsep=1pt]
    \item \textbf{LLM policy}: This policy is to leverage the reasoning power of LLM to solve specific tasks and use \emph{physical verification} as feedback. It is recommended to use a variant of \emph{ReAd-J} for data collection, which replaces \emph{ReAd} feedback with \emph{physical verification} and uses only the previous round of historical information in the prompts. At each time step $t$, environment state $s_{t}$, robot optional actions, and task goals are added into the prompt in the form of text. And then the LLM takes the prompt as input, generates the joint action $\boldsymbol{a}_{t}$ of all robots and get a reward $r_{t}$. We store every transition as a tuple ($s_{t}$ , $\boldsymbol{a}_{t}$ , $r_{t}$) until the task is accomplished.
    \item \textbf{Expert policy}: Here we implement this policy with human control. This requires a human player to analyze the task and infer the optimal action at each time step. The collected data format is the same as the method described above.
\end{itemize}


\begin{table}[h!]
\caption{An ablation study of data ratio of optimal data and LLM planner data in the offline dataset. The mixing ratio is represented by $\mathbf{X}\%:\mathbf{Y}\%$, where $\mathbf{X}\%$ denotes the percent of samples collected by the \emph{LLM policy}, and $\mathbf{Y}\%$ denotes the percent of samples collected by the \emph{optimal policy}.}
\begin{center}
\begin{small}
\begin{sc}
\begin{tabular}{cccc}
\toprule
                  & NQ & ES & SR \\ \midrule
ReAd-J(0\%:100\%) & 16.4±0.54       & 13.4±0.27      & 0.8±0.13         \\
ReAd-J(50\%:50\%) & 15.8±1.12       & 13.9±0.35      & 0.6±0.16          \\
ReAd-J(100\%:0\%) & 17.6±1.89       & 13.9±0.41      & 0.7±0.15          \\ \midrule
ReAd-S(0\%:100\%) & 31.4±1.11       & 14.0±0.26      & 0.8±0.13         \\
ReAd-S(50\%:50\%) & 29.1±0.91       & 13.9±0.31      & 0.7±0.15          \\
ReAd-S(100\%:0\%) & 34.2±2.18       & 14.3±0.30      & 0.5±0.17          \\ \bottomrule 
\end{tabular}
\label{xr2-tab}
\end{sc}
\end{small}
\end{center}
\vspace{-2em}
\end{table}

\paragraph{Critic Architecture.}
The critic learns to estimate the value function of state-action pairs from the dataset. The state includes the environment state and the agent state, where the environment state contains variables of the simulator and the agent state is described by language. The action is also described by language. We adopt the pre-trained BERT Transformer model to extract language features of the agent state and actions. Then we concatenate the output feature with environment state features to some MLP layers to predict the $Q$-value. The structure of the critic network is given in Figure~\ref{struct-critic}, and the hyper-parameters are given in Table~\ref{hype-tab}.

\begin{figure}[h!]
\begin{center}
\centerline{\includegraphics[width=0.9\columnwidth]{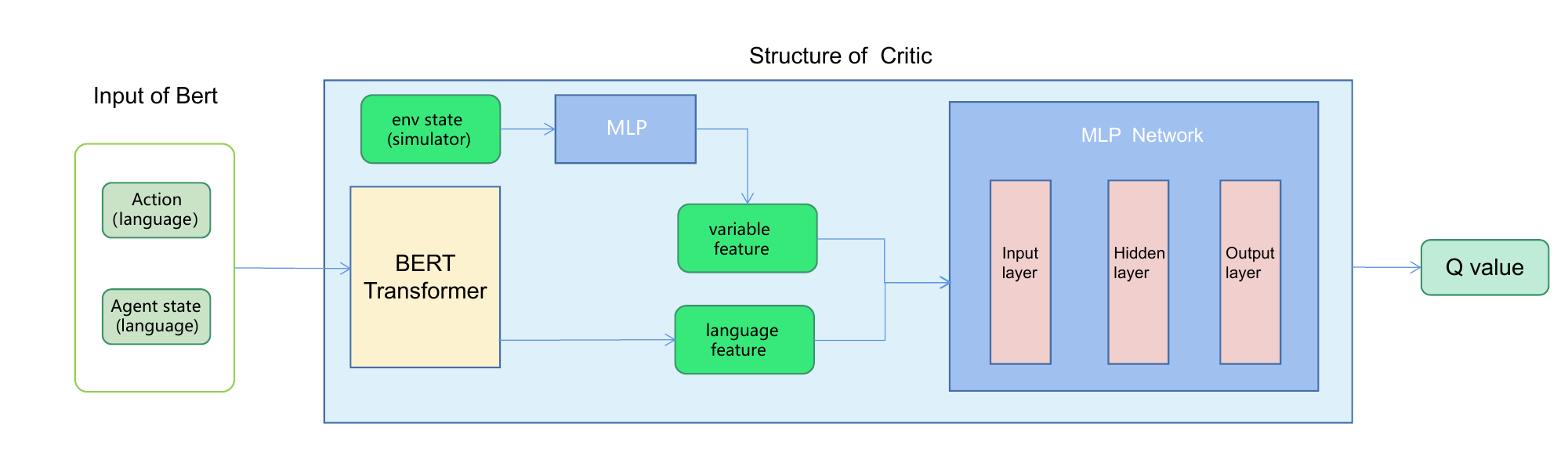}}
\vspace{-1em}
\caption{In this figure, the parameters of BERT Transformer are fixed and will not be updated during the training of Critic. 
}
\label{struct-critic}
\end{center}
\vspace{-2em}
\end{figure}

\begin{table}[ht]
\caption{The input dimensions for Critic of ReAd-J and ReAd-S are represented by JIS and SIS respectively, while HS represents the hidden layer input dimension, HN represents the number of hidden layers, LR is the learning rate, BS is batch size, TN represents the number of training iterations, SS is the dimension of environment state, and $n$ is the number of robots in the environment.}
\vskip 0.1in
\begin{center}
\begin{small}
\begin{sc}

\begin{tabular}{c|ccccccc}
\hline
                  & JIS    & SIS      & HS  & HN & LR                         & BS & TN                          \\ \hline
value & $768+$SS & $n\times768+$SS & 256 & 1  & $10^{-3}$ & 32 & $9\times10^{5}$           \\ \hline
\end{tabular}
\label{hype-tab}
\end{sc}
\end{small}
\end{center}
\vskip -0.1in
\end{table}


\paragraph{Token Consumption.}
We report the details of token consumption on both benchmarks in Table~\ref{table:rocobench_consuming} and Table~\ref{table:overcooked_consuming} respectively. 
The total number of tokens consumed includes tokens consumed during pre-sampling data for training critic network.
We utilize \emph{LLM policy} to collect data for critic training in the experiment of \emph{DV-RoCoBench}, while the data is collected by \emph{expert policy} in the experiment of \emph{Overcooked-AI}. Obviously, during the phase of planning, \emph{ReAd-S} and \emph{ReAd-J} consume less tokens than all other baselines. In terms of total consumed tokens, \emph{ReAd-J} is comparable to the baselines which also generate joint plans in a parallel manner, and \emph{ReAd-S} is significantly superior to RoCo.

\paragraph{Critic Training.} The quantity of trajectories required for critic training depends on how challenging the task is. For 5 difficulty levels in \emph{Sweep Floor}, critic training demands about 70, 120, 240, 600, and 1400 trajectories respectively. For 4 difficulty levels in \emph{Make Sandwich}, about 60 trajectories are needed for critic training.
For 5 difficulty levels in \emph{Sort Cube}, critic training demands about 230, 240, 300, 400 and 510 trajectories respectively.
For \emph{Cramped room} and \emph{Forced coordination}, the number is about 128 and 2048 respectively.
It is important to note that the volume of data utilized for critic training can be adjusted flexibly to align with the specific demands and challenges of the actual situation.

\begin{table}[ht]
\caption{Tokens consumed by all methods during the evaluation in \emph{DV-RoCoBench}.}
\vspace{-1em}
\begin{center}
\begin{small}
\begin{tabular}{lccccccc}
\toprule
Methods                        & ReAd-S & ReAd-J  & RoCo & Central Plan & ReAct & Reflexion & MindAgent \\ \midrule
Tokens for planning            & 9M     & 6M      & 24M  & 15M          & 11M   & 11M       & 13M       \\ 
Tokens for training $\hat{Q}$  & 7M     & 7M      & -  & -          & -   & -       & -       \\ 
Total tokens                   & 16M    & 13M      & 24M  & 15M          & 11M   & 11M       & 13M       \\ \bottomrule
\end{tabular}
\end{small}
\end{center}
\label{table:rocobench_consuming}
\vspace{-2em}
\end{table}

\begin{table}[ht]
\caption{Tokens consumed by all methods during the evaluation in \emph{Overcooked-AI}.}
\vspace{-1em}
\begin{center}
\begin{small}
\begin{tabular}{lccccc}
\toprule
Methods                        & ReAd-J  & Central Plan & ReAct & Reflexion & MindAgent \\ \midrule
Tokens for planning            & 1M      & 2M           & 4M    & 3M        & 4M        \\ 
Tokens for training $\hat{Q}$  & -       & -            & -     & -         & -         \\ 
Total tokens                   & 1M      & 2M           & 4M    & 3M        & 4M        \\ \bottomrule
\end{tabular}
\end{small}
\end{center}
\label{table:overcooked_consuming}
\vspace{-1em}
\end{table}

\newpage
\section{Extended Discussion about Symbol Grounding}\label{appendix:symbol_grounding}
In this section, we would like to discuss the LLM grounding problem in embodied tasks beyond our algorithm. Currently, most of available embodied multi-agent collaboration benchmarks (e.g., \emph{DV-RoCoBench} and \emph{Overcooked-AI}) establish the base for LLM grounding by transforming the state/image in the environment to the textual description.
Since the LLM is not capable of perceiving the current situation in the environment via visual signals, such a transformation may be achieved by directly using specific object identifiers without visual grounding. However, it may seem to ruin the purpose of LLM grounding where the main role of language is originally to provide a vehicle for establishing common ground and resolving ambiguities. It makes the evaluation of ours and other LLM-based embodied algorithms \citep{ahn2022i, yao2023react, shinn2023reflexion, gong2023mindagent} on these benchmarks possibly overestimated on solving the symbol grounding problem \citep{harnad1990symbol}.

We acknowledge that directly using fictional object identifiers without visual grounding is a limitation while at the same time it implies that a potential solution to overcome this limitation is to use strong Visual Language Models (VLMs), e.g., GPT-4o.
Specifically, it requires identifying the object types (in \textbf{Make Sandwich}) or positions (in \textbf{Sort Cubes} and \textbf{Sweep Floor}), and summarizing the information with a corresponding textual representation, which aligns well with the purpose of symbol grounding.
Inspired by this, we conduct a simple but essential experiment to investigate how well GPT-4o captures and describes the necessary information compared with that generated by the object identifiers.
Taking the \textbf{Forced Coordination} as the test scenario, we give a example in the prompt, which includes a image of current situation of the environment paired with a textual description previously given by the human about this image. Then we ask GPT-4o for generating an appropriate response for the input image, following the template in the example. The example case and test case are shown as Figure~\ref{fig:vlm_perceive}, and the output textual state and ground truth textual state are listed as follows.

\begin{figure}[h!]
\begin{center}
\centerline{\includegraphics[width=0.9\columnwidth]{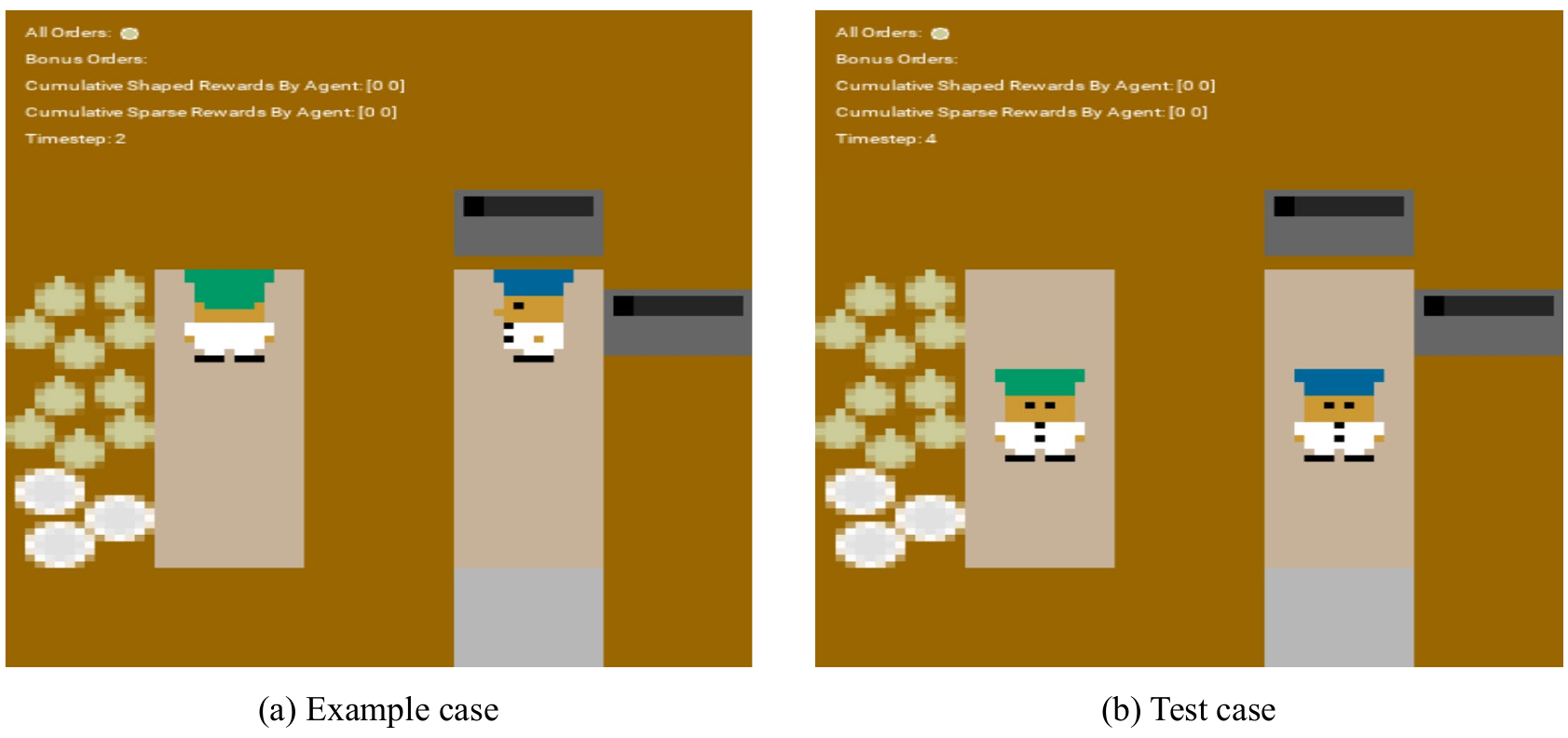}}
\caption{The example case and test case for testing the visual understanding and summarizing capability of GPT-4o.}
\label{fig:vlm_perceive}
\end{center}
\end{figure}

\promptblock{\footnotesize
\blank{[Inputting the example image observation]} \\
\blank{[Prompt]}: \\
You need to accomplish a task where you need to precisely summarize the necessary information from a given image. We start by introducing the meaning of each character appeared in the *Current Env State* which would be introduced in the example we provide later. \\
*Character Meaning*: \\
The letter X stands for table, P for cooking station, O and o stand for onion, D and d for plates, and S for service desk. When the onion or dish is on the table or being held by agent, an o or d will be added after its corresponding character. When the onion is placed on the cooking table, it will be denoted as p\{ø, p\{øø means that there are two onions on the cooking table. And when the cooking table cooks the soup, it will show how long it has been cooked, such as p\{ø20 means that it has been cooked in 20 time steps. The numbers 1 and 0 represent the agent, and the direction arrow ↑ ↓ ← → represents the direction the agent is facing.Each object occupies a grid size, and the agent moves one grid distance at a time.\\
Given a image we input, you should first describe the environment status of the image in *Current Env State*, then summarize the events according to *Character Meaning*. Now we provide a specific example. Taking the attached image as input, the standard response corresponding to this image is as follows:\\
*Current Env state*:\\
X       X        X       P        X\\
O       ↑1       X       ←0       P\\
O    \hspace{1em}      X      \hspace{1em}          X\\
D       \hspace{1em}         X     \hspace{1em}           X\\
X       X        X       S        X\\
Cook station local (0, 3) (1, 4), there are 0 onions on it, soup has not yet begun to cook \\
Onions local: (1, 0) (2, 0)\\
Dishes local: (3, 0)\\
Server desk local: (4, 3)\\
Agent0 local: (1, 3), direction: west, hold: nothing\\
Agent1 local: (1, 1), direction: north, hold: nothing\\
The above all is the instruction for the task.\\
\blank{[Inputting the test image observation]} \\
\blank{[Prompt]}: \\
Please describe this picture following the previous instruction.\\
\textcolor{red}{[Output]}: \\
Based on the new environment state:\\
*Current Env state*:\\
X       X        X       P        X\\
O       ↑1      X       ↑0      P\\
O       \hspace{1em}         X     \hspace{1em}           X\\
D         \hspace{1em}       X       \hspace{1em}         X\\
X       X        X       S        X\\
Cook station local: (0, 3) (1, 4), there are 0 onions on it, soup has not yet begun to cook.\\
Onions local: (1, 0), (2, 0).\\
Dishes local: (3, 0).\\
Server desk local: (4, 3).\\
Agent0 local: (1, 3), direction: north, hold: nothing.\\
Agent1 local: (1, 1), direction: north, hold: nothing.\\
This description follows the format used earlier to capture the environment's status at the current timestep.
}

\promptblock{\footnotesize
\textcolor{green}{[Ground truth]}: \\
*Current Env state*:\\
X       X        X       P        X\\
O                X                P\\
O       ↓1      X       ↓0      X\\
D                X                X\\
X       X        X       S        X\\
Cook station local (0, 3) (1, 4), there are 0 onions on it, soup has not yet begun to cook\\
Onions local: (1, 0) (2, 0)\\
Dishes local: (3, 0)\\
Server desk local: (4, 3)\\
Agent0 local: (2, 3), direction: south, hold: nothing\\
Agent1 local: (2, 1), direction: south, hold: nothing
}

\vspace{0.5em}

Shown in the above response, GPT-4o can generate a textual state with the correct format based on the image and template, but the coordinates and relative positions of objects are inconsistent with the actual situation, which has also been discussed in previous works \citep{xu2023pointllm}. But surprisingly, it can correctly summarize the location and status of all entities in the wrong text-format array. Overall result shows that VLMs are hard to understand spatial relationship from images currently.

\section{Illustration of the Interaction Process}
\label{main-exp-graph}
we illustrate the distinctions between ReAd-S and RoCo by presenting a series of task execution screenshots. 
In Figure~\ref{sweep-our} and Figure~\ref{sweep-roco}, we compare the screenshots of our method and RoCo algorithm in task \emph{Sweep Floor} Y2\_G2. Our method can perform re-plan and correct the initial planning using advantage feedback, which results in a minimum number of environmental interactions. In contrast, RoCo which relies on physical feedback requires more negotiation and interactions with the environment. A similar comparison is shown in Figure~\ref{sort-our} and Figure~\ref{sort-roco} for \emph{Sort Cubes} sort4.
A comparison between \emph{ReAd-J} and Central Plan on \emph{Forced Coordination} scenario is shown in Figure~\ref{overcooked-force-2} and Figure~\ref{overcooked-force-4}. 

\begin{figure}[h!]
\begin{center}
\centerline{\includegraphics[width=0.9\columnwidth]{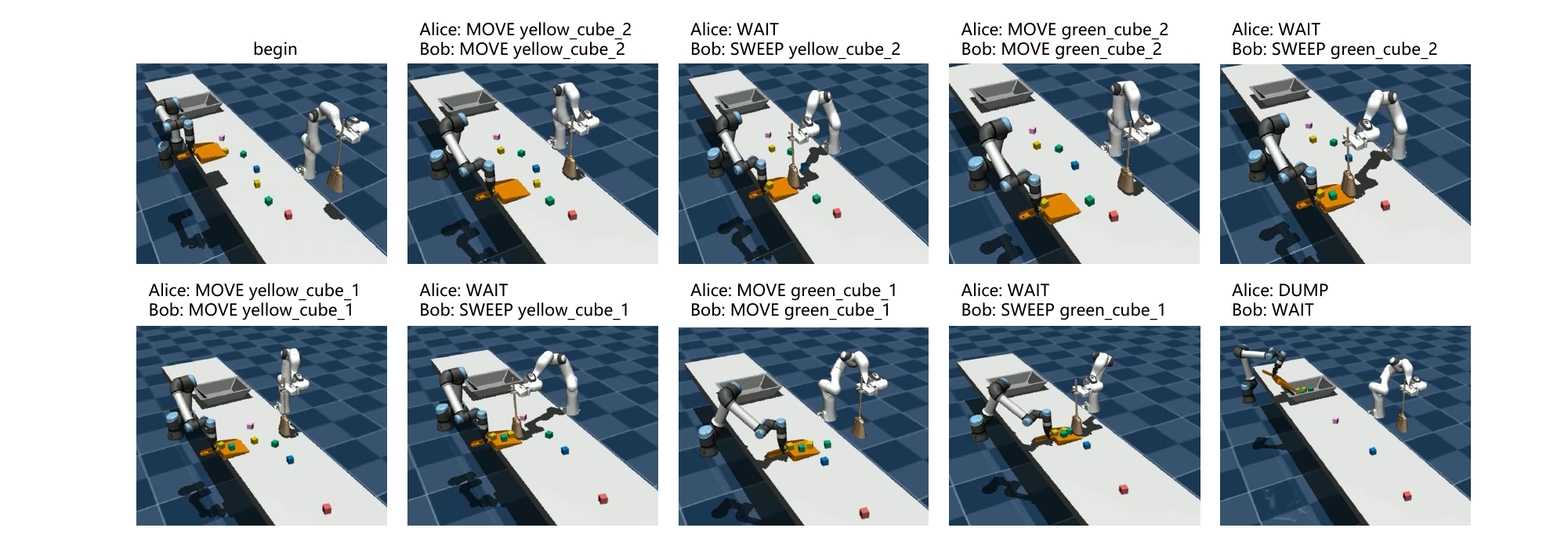}}
\caption{Snapshots of the interaction process of \emph{ReAd-J} in task \emph{Sweep Floor} Y2\_G2. Our method obtains the minimum number of environmental interactions needed to complete the task.}
\label{sweep-our}
\end{center}
\end{figure}

\begin{figure}[h!]
\begin{center}
\centerline{\includegraphics[width=0.9\columnwidth]{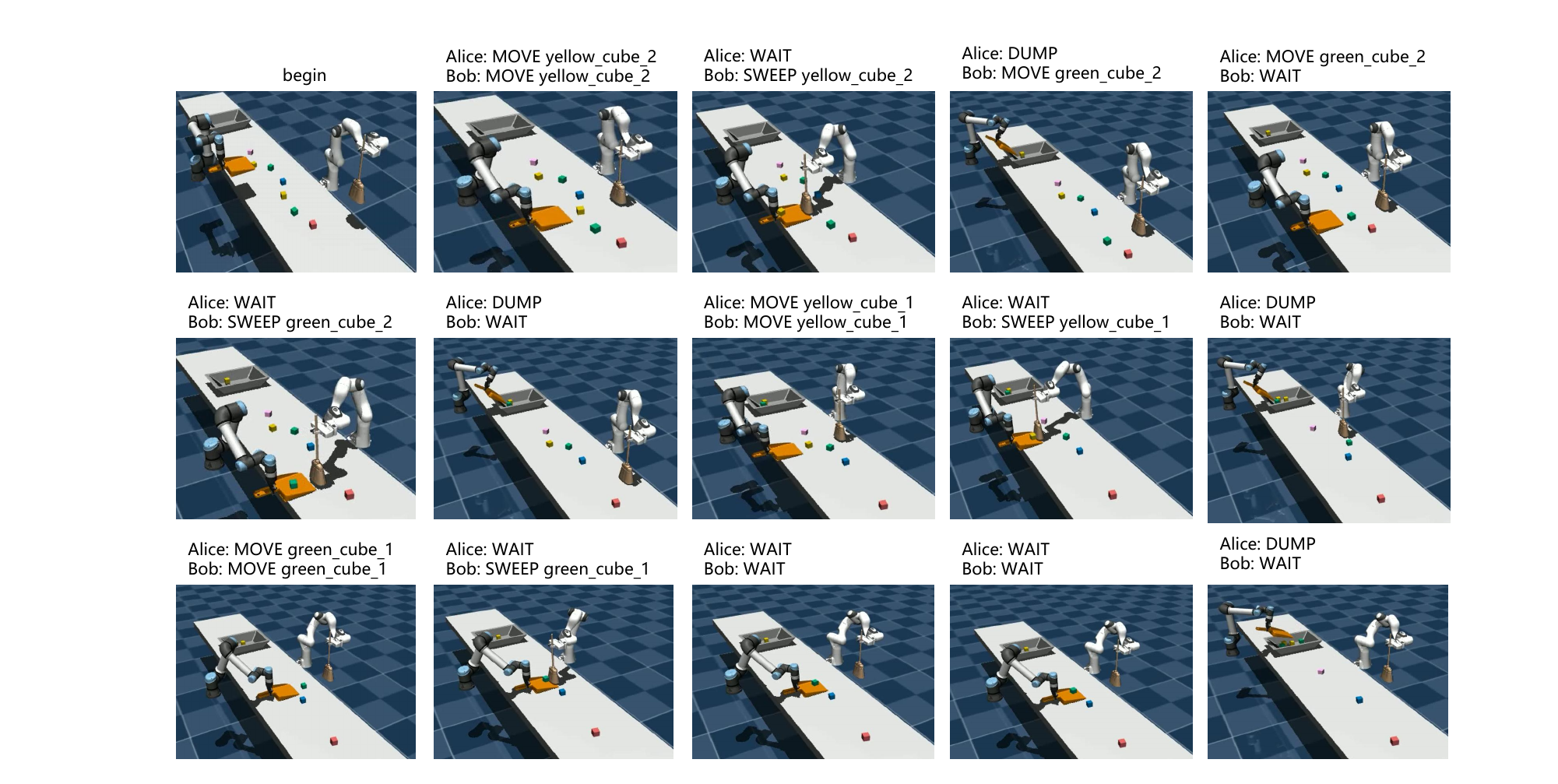}}
\caption{Snapshots of the interaction process of \emph{RoCo} in task \emph{Sweep Floor} Y2\_G2. The figure above shows that after planning and sweeping a cube into the dustpan, RoCo will dump it into the trash bin. However, after sweeping the last cube into the dustpan, instead of immediately planning to dump it to complete the task, LLM stubbornly believes that the task is done and plans to wait for the next two interactions.}
\label{sweep-roco}
\end{center}
\vskip -0.2in
\end{figure}

\begin{figure}[ht]
\begin{center}
\centerline{\includegraphics[width=\columnwidth]{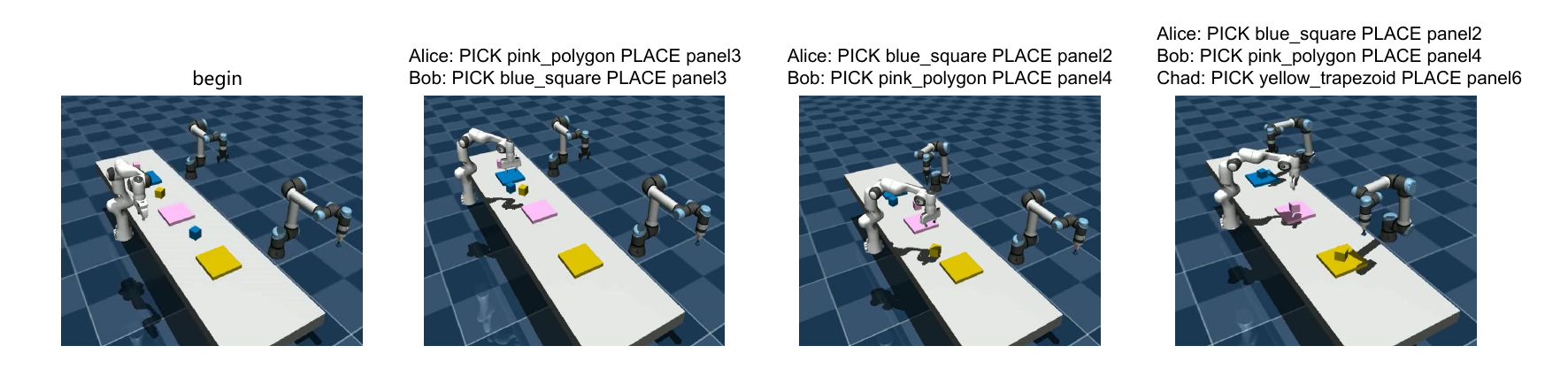}}
\caption{Snapshots of the interaction process of \emph{ReAd-S} in task \emph{Sort Cubes} sort4. This task is challenging and requires the collaboration of three robots and takes a minimum of three steps to complete. Our approach efficiently accomplishes this task with minimal environment interactions.}
\label{sort-our}
\end{center}
\end{figure}

\begin{figure}[t]
\begin{center}
\centerline{\includegraphics[width=\columnwidth]{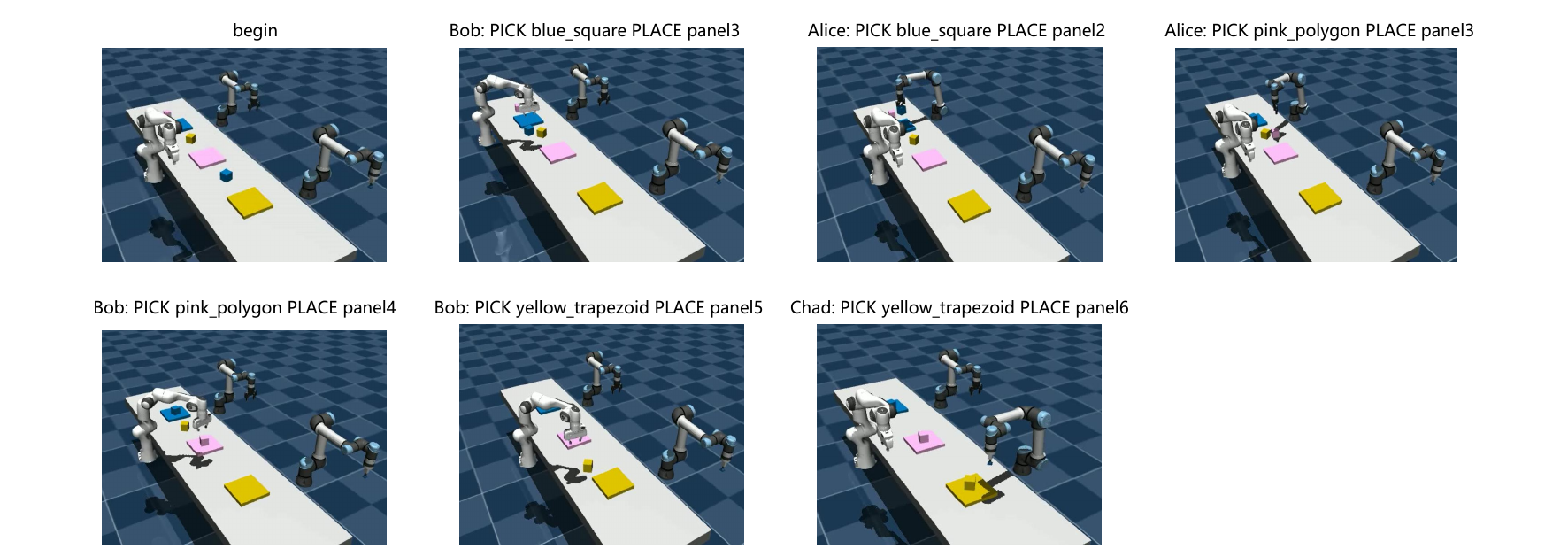}}
\caption{Snapshots of the interaction process of \emph{RoCo} in task \emph{Sort Cubes} sort4. Before the joint actions of all robots are executed, the planning result can only be improved through the dialogue of LLMs. In addition, environmental feedback can be generated only after the agent interacts with the simulator. In contrast, our advantage feedback provides timely feedback in the process of LLM planning for policy improvement before interaction.}
\label{sort-roco}
\end{center}
\end{figure}

\clearpage

\begin{figure}[ht]
\begin{center}
\centerline{\includegraphics[width=\columnwidth]{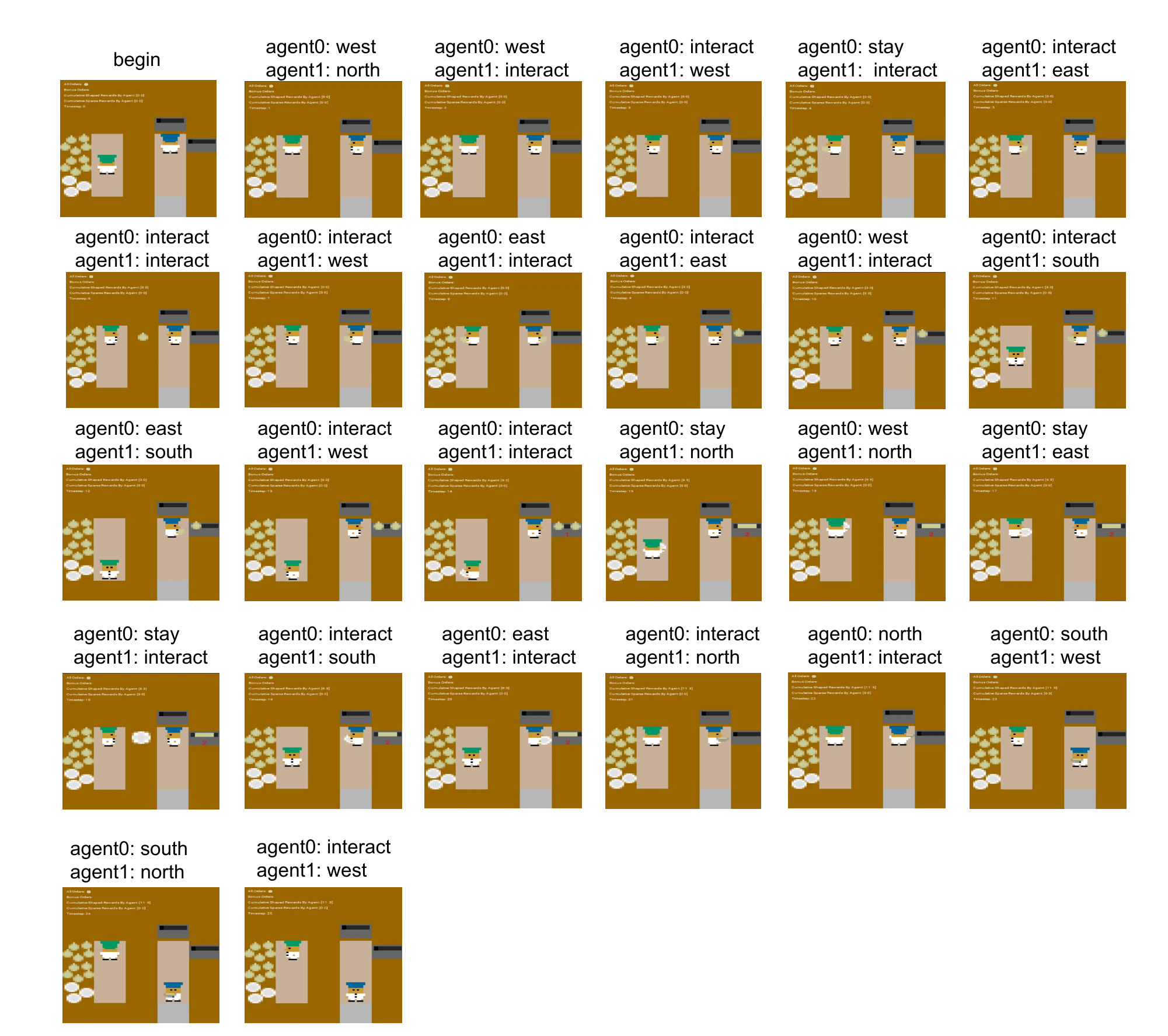}}
\caption{Snapshots of the interaction process of \emph{ReAd-J} in task \emph{Forced Coordination}. This task is challenging and requires the collaboration of two agents and takes a minimum of 22 steps to complete. Most of the time, ReAd can improve the unreasonable planning result generated by LLM, so that \emph{ReAd-J} can complete the task smoothly. However, due to the out-of-distribution (OOD), it is possible to evaluate the advantage value of some unreasonable planning to carry out environmental interaction}
\label{overcooked-force-2}
\end{center}
\end{figure}

\clearpage

\begin{figure}[ht]
\begin{center}
\centerline{\includegraphics[width=\columnwidth]{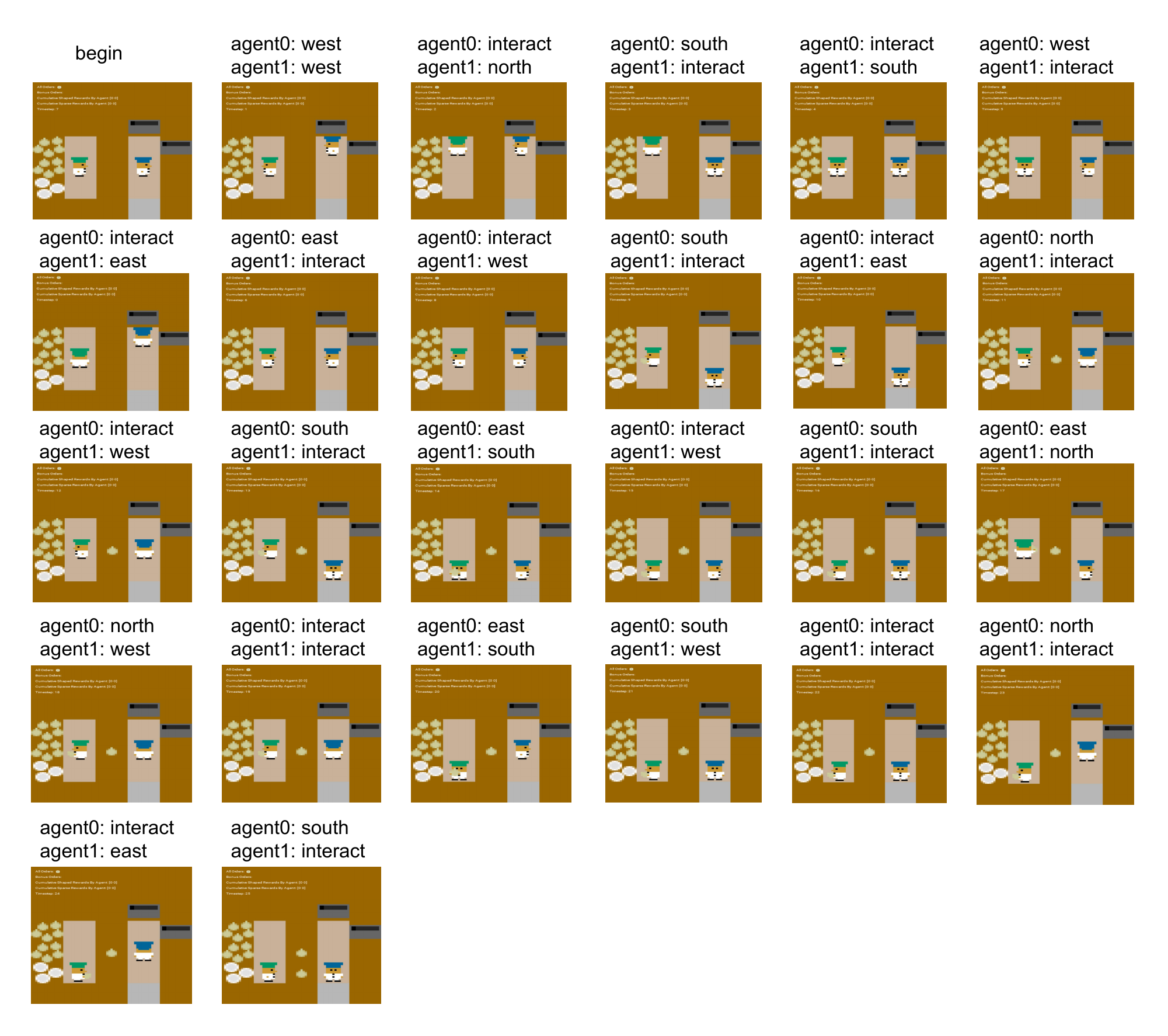}}
\caption{Snapshots of the interaction process of \emph{Central Plan} in task \emph{Forced Coordination}. From the screenshot of the interaction process, it can be found that in the \emph{Forced Coordination}, it is difficult for LLM to understand the state of the current environment, so hallucination occurs frequently, resulting in the failure of \emph{Central Plan} to effectively complete the task.}
\label{overcooked-force-4}
\end{center}
\end{figure}



	

\newpage
\section{Examples of Reinforced Advantage Feedback}\label{app:H}

\textbf{Format of Advantage Feedback}. The Advantage Feedback comprises of [Evaluation Score] and [Improvement Feedback]. If the advantage score of the action is higher than a threshold, the response will only contain [Evaluation Score]. Otherwise, we perform a re-plan to improve the previous action, and the response will include both [Evaluation Score] and [Improvement Feedback].

Below we will briefly show a few examples of the LLM's planning corrected by advantage feedback, using ReAd-S to complete the Sweep Floor task.

\promptblock{\footnotesize
\blank{[Alice Response]}: \\
Bob, we have green\_cube\_1 and green\_cube\_2 in the dustpan. Let's MOVE to the trash\_bin so I can DUMP these cubes. After that, we can go for any remaining cubes. \\
PROCEED \\
NAME Alice ACTION MOVE trash\_bin. \\
\textcolor{orange}{[Advantage Feedback]}: \\
\textcolor{dark-green}{[Evaluation Score]}: \\
The advantage score of \textcolor{pink}{Alice}'s action is -0.6652! \\
The advantage score is low and the plan need to be changed! \\
\textcolor{dark-green}{[Improve Feedback]} \\
\textcolor{pink}{Alice}'s previous response: \\
Bob, we have green\_cube\_1 and green\_cube\_2 in the dustpan. Let's MOVE to the trash\_bin so I can DUMP these cubes. After that, we can go for any remaining cubes. \\
PROCEED \\
NAME Alice ACTION MOVE trash\_bin \\
The advantage score is -0.665, it is too low to run. The answer may contain a misunderstanding of the state of the environment, you must change your action to raise the advantage score over 0! \\
\blank{[Alice Response]}: \\
Bob, I made a mistake in my previous plan. Since we already have green\_cube\_1 and green\_cube\_2 in the dustpan, I should DUMP them into the trash\_bin. I'll do that while you WAIT. \\
PROCEED \\
NAME Alice ACTION DUMP \\
\textcolor{orange}{[Advantage Feedback]}: \\
\textcolor{dark-green}{[Evaluation Score]}: \\
The advantage score of \textcolor{pink}{Alice}'s action is 17.376!\\
The advantage score is high enough.
}


\textbf{Explain.} In this example, the MOVE action can only be used for cubes due to the task setting, and the MOVE of the trash bin is illegal. However, the LLMs learned from the web may consider the MOVE action can also be applied to the trash bin, which does not ground well with the specific task, resulting in the fact of hallucination. Our method can correct this type of error with advantage feedback according to the reward since taking MOVE action to the trash bin will obtain a zero reward and lead to the same next state as the current state.

\vspace{2em}

\promptblock{\footnotesize
\blank{[Alice Response]}: \\
Bob, we have green\_cube\_1 and green\_cube\_2 in the dustpan. Let's MOVE to the next cube we need to sweep. I suggest we go to pink\_cube\_1 since it's the closest cube on the table. Once we're in position, I will WAIT for you to SWEEP it.\\
PROCEED\\
NAME Alice ACTION MOVE pink\_cube\_1\\
\textcolor{orange}{[Advantage Feedback]}: \\
\textcolor{dark-green}{[Evaluation Score]}: \\
The advantage score of \textcolor{pink}{Alice}'s action is -0.0927! \\
The advantage score is low and the plan needs to be changed! \\
\textcolor{dark-green}{[Improve Feedback]} 
}

\promptblock{\footnotesize
\textcolor{pink}{Alice}'s previous response: \\
Bob, we have green\_cube\_1 and green\_cube\_2 in the dustpan. Let's MOVE to the next cube we need to sweep. I suggest we go to pink\_cube\_1 since it's the closest cube on the table. Once we're in position, I will WAIT for you to SWEEP it.\\
PROCEED\\
NAME Alice ACTION MOVE pink\_cube\_1
The advantage score is -0.0927, it is too low to run. The answer may contain a misunderstanding of the state of the environment, you must change your action to raise the advantage score over 0! \\
\blank{[Alice Response]}: \\
Bob, I made a mistake in my previous suggestion. We should not move to pink\_cube\_1 as it's not one of our targets. Instead, let's DUMP the cubes we have in the dustpan into the trash\_bin. I will do that while you WAIT.\\
PROCEED\\
NAME Alice ACTION DUMP\\
\textcolor{orange}{[Advantage Feedback]}: \\
\textcolor{dark-green}{[Evaluation Score]}: \\
The advantage score of \textcolor{pink}{Alice}'s action is 18.4184!\\
The advantage score is high enough.
}

\textbf{Explain.} In this example, the target colors to sweep are yellow and green. Since the environment also includes many cubes with other colors, LLM may plan to sweep cubes with the wrong colors initially. In such cases, the advantage feedback is helpful to correct the plan generated by the LLM.

\vspace{2em}

\promptblock{\footnotesize
\textcolor{light-blue}{[Bob Response]}: \\
Alice, I will WAIT as you move into position for yellow\_cube\_3.\\
PROCEED\\
NAME Bob ACTION WAIT\\
\textcolor{orange}{[Advantage Feedback]}: \\
\textcolor{dark-green}{[Evaluation Score]}: \\
The advantage score of \textcolor{light-blue}{Bob}'s action is -3.2211!\\
The score is low and the plan needs to be changed!\\
\textcolor{dark-green}{[Improve Feedback]} \\
\textcolor{light-blue}{Bob}'s previous response:\\
Alice, I will WAIT as you move into position for yellow\_cube\_3.\\
PROCEED\\
NAME Bob ACTION WAIT\\
The advantage score score is -3.221, it is too low to run, the answer may contain a misunderstanding of the state of the environment, you must change your action to raise the advantage score over -0.2!\\
\textcolor{light-blue}{[Bob Response]}: \\
Alice, since you are moving to yellow\_cube\_3, I will also MOVE to yellow\_cube\_3 to be ready to SWEEP it once you are in position.\\
PROCEED\\
NAME Bob ACTION MOVE yellow\_cube\_3\\
\textcolor{orange}{[Advantage Feedback]}: \\
\textcolor{dark-green}{[Evaluation Score]}: \\
The advantage score of \textcolor{light-blue}{Bob}'s action is 0.6383!\\
The advantage score is high enough.
}

\textbf{Explain.} In this example, the LLM plans for Bob to move to yellow\_cube\_3 with Alice, it can sweep directly into the dustpan in the next step. However, during the first LLM planning, the LLM plans the WAIT action for Bob. If Bob performs this action at this time, Alice can only wait for Bob to move to yellow\_cube\_3. Thus, our method performs replanning based on the advantage feedback to reduces the interaction steps to the environment.

\section{Broader Impact}
This paper presents work whose goal is to advance the field of Machine Learning. There are many potential societal consequences of our work, none of which we feel must be specifically highlighted here.
This paper presents a novel feedback mechanism which enhances the reasoning and planning capabilities of large language models (LLMs) in multi-agent settings. It has potential implications and societal impacts that are worth considering. First, the overall framework can significantly improve the way robots collaborate on tasks, which could lead to more efficient automation in industries such as manufacturing, logistics, and disaster response. Improved collaboration among robots can lead to increased productivity and reduced human workload. Second, by grounding the reasoning of LLMs in physical tasks, it contributes to the development of AI systems that can make more informed and effective decisions. This advancement can be beneficial across various sectors, including healthcare, where AI could assist in complex diagnostic processes, or in environmental management, where AI could optimize resource allocation. In conclusion, our work presents exciting opportunities for advancing AI and robotics.

\end{document}